\documentclass{article}

\usepackage[main, final]{neurips_2026}

\usepackage[utf8]{inputenc} 
\usepackage[T1]{fontenc}    
\usepackage{hyperref}       
\usepackage{url}            
\usepackage{booktabs}       
\usepackage{amsfonts}       
\usepackage{nicefrac}       
\usepackage{microtype}      
\usepackage{xcolor}         
\usepackage{placeins}
\usepackage{pgfplots}
\usepgfplotslibrary{fillbetween}
\pgfplotsset{compat=1.18}

\definecolor{DeepNavy}{HTML}{045A8D}
\definecolor{DeepRed}{HTML}{A50F15}
\definecolor{BurntOrange}{HTML}{D94801}
\definecolor{DeepGreen}{HTML}{006D2C}
\definecolor{DeepPurple}{HTML}{4A1486}

\pgfplotsset{
    modern paper/.style={
        grid=major,
        grid style={dashed, gray!30},
        axis line style={gray!80, thick},
        tick align=outside,
        tick style={gray!80, thick},
        legend style={draw=none, fill=white, fill opacity=0.8, text opacity=1, nodes={scale=0.8, transform shape}},
        font=\small
    }
}

\usepackage{amsmath} 
\hypersetup{
colorlinks=true
}

\usepackage[utf8]{inputenc}
\usepackage[T1]{fontenc}
\usepackage{amsmath, amssymb, amsthm, mathtools}
\usepackage{graphicx}
\usepackage{pgfplots}
\pgfplotsset{compat=1.18}
\usepackage{booktabs}
\usepackage{longtable}
\usepackage{multirow}
\usepackage{array}
\usepackage{xcolor}
\usepackage{enumitem}
\usepackage{hyperref}
\usepackage{cleveref}
\usepackage{algorithm}
\usepackage{algpseudocode}

\theoremstyle{plain}
\newtheorem{theorem}{Theorem}
\newtheorem{proposition}{Proposition}

\newtheorem{corollary}{Corollary}

\theoremstyle{definition}
\newtheorem{assumption}{Assumption}

\theoremstyle{plain}

\newtheorem*{customthm}{Theorem}

\DeclareMathOperator*{\argmax}{arg\,max}
\DeclareMathOperator*{\argmin}{arg\,min}
\DeclareMathOperator{\E}{\mathbb{E}}

\newcommand{\hA}{\hat{A}}
\newcommand{\hJ}{\hat{J}}
\newcommand{\Wphi}{\mathcal{W}_{\phi}}
\newcommand{\pitheta}{\pi_{\theta}}
\newcommand{\pithetap}{\pi_{\theta^{+}(\phi)}}

\newcommand{\lossvwi}{\mathcal{L}_{\mathrm{VWI}}}

\newcommand{\lossmse}{\mathcal{L}_{\mathrm{MSE}}}
\newcommand{\PUR}{\ensuremath{\mathrm{PUR}}}

\newcommand{\para}[1]{\vspace{0.25em}\noindent\textbf{#1}}

\title{DRILL: Training World Models to Improve Policies, Not Predict Pixels}

\author{%
  \textbf{Zhaolu Kang}$^{1,2}$, \\\textbf{Tailong Luo}, \textbf{Lei Wei}$^{2}$, \textbf{Guangyuan Dong},
  \textbf{Sihan Liu}$^{2}$,\textbf{Yingjie He}$^{2}$,
\textbf{Siheng Wang}, \\\textbf{Shuaibo Li},\textbf{Rongchao Zhang},  \textbf{Zhen Tian}, 
  \textbf{Richeng Xuan}$^{1,\dagger}$ ,\textbf{Zhichao Hu}$^{1,\dagger}$ \\
  $^{1}$Hunyuan, Tencent \quad $^{2}$Peking University  \\
  \texttt{zlkang25@stu.pku.edu.cn}
}

\begin{document}

\maketitle

\renewcommand{\thefootnote}{\fnsymbol{footnote}}
\footnotetext{$^{\dagger}$Corresponding Author.}
\renewcommand{\thefootnote}{\arabic{footnote}}

%
\begin{abstract}

World models are increasingly used to reduce environment interaction when post-training vision-language-action (VLA) policies with reinforcement learning. Yet they are typically trained to predict pixels one step ahead, even though their role here is not merely to render visually faithful videos but to produce learning signals from imagined rollouts that improve the policy. We show that this mismatch is substantial: lower pixel MSE can produce worse policy gradients, and a world model trained for policy utility can yield better downstream policies despite 17\% higher pixel MSE on the next frame. 
We propose the Downstream-Return Imagination Learning Loop (DRILL), a bilevel framework that updates the world model to maximize the policy's return after an inner GRPO update on imagined rollouts. DRILL has two instantiations. DRILL-VWI is a closed-form surrogate, requiring no additional simulator rollouts, that reweights prediction errors by policy visitation, the group-relative GRPO advantage, and model confidence. DRILL-IMG is a full meta-gradient method that differentiates through the inner update using truncated Hessian-vector products. We show that DRILL-VWI recovers the local one-step component of the DRILL-IMG meta-gradient under score-compatibility and small-residual assumptions. Across five manipulation simulators and two VLA backbones, DRILL-IMG raises mean success rate by 13.1 points over WMPO and 7.9 points over RLVR-World, and complements RLVR pretraining rather than competing with it. More broadly, we believe video world models for VLA reinforcement learning can be productively trained by the policy gradients they produce, not only by pixel-level prediction.
\end{abstract}


\section{Introduction}
\label{sec:intro}

World models are becoming a practical substitute for robot or simulator
interaction in reinforcement-learning post-training of
vision-language-action (VLA) policies.  In this setting, however, their
role is not merely to predict the next video frame: they generate the
imagined rollouts from which the policy learns.  This distinction
creates an objective mismatch.  A model optimized for one-step pixel
prediction is not necessarily the model that induces the most useful
policy-gradient update.

The reason is that pixel fidelity and policy utility weight errors in
different places.  Pixel MSE penalizes prediction errors uniformly over
pixels and time, including backgrounds, robot geometry, and visually
salient but task-irrelevant regions.  A policy-gradient update instead
emphasizes the states the current policy visits and the actions whose
advantages are large.  A model can therefore be an excellent next-frame
predictor while producing weak learning signals, or a worse predictor on
average while better supporting the transitions that determine policy
improvement.  The relevant question is not only whether the imagined
video looks right, but whether learning inside it moves the policy in
the right direction.

This mismatch matters because recent VLA post-training systems replace
expensive robot or simulator interaction with imagined rollouts.  At
foundation-model scale, reinforcement-learning post-training can require
millions of environment steps, so recent methods train a video world
model from offline demonstrations and apply Group Relative Policy
Optimization (GRPO)~\citep{grpo2024} inside the model
\citep{wmpo2025, vlaw2025, wovr2025, scvla2025}.  Yet the world model
is still typically optimized as a one-step video predictor.  Diagnostic
methods such as Policy-Aware WM~\citep{policyaware2026} and
ESNR~\citep{esnr2026} show that pixel fidelity does not reliably track
policy competence, but do not provide a training objective.  RLVR-World~\citep{rlvrworld2025}
post-trains the world model with prediction correctness as the reward,
improving it as a predictor rather than directly optimizing the policy
update induced by imagined rollouts.

We propose the \textbf{D}ownstream-\textbf{R}eturn
\textbf{I}magination \textbf{L}earning \textbf{L}oop (DRILL), a
bilevel framework that trains video world models by the return of the
policy obtained after learning in them.  The inner loop performs a GRPO
update on imagined rollouts generated by the world model; the outer loop
updates the world model so that this post-update policy achieves higher
return in the target simulator.  The objective is therefore not pixel
accuracy under the model's own rollouts, but downstream return in the
environment that defines task success.

DRILL has two operating points.  DRILL-VWI (Visitation-Weighted
Imagination) is a lightweight signed surrogate that replaces uniform
pixel MSE with weights from policy visitation, group-relative advantage,
and model confidence, requiring no additional simulator rollouts over a
WMPO-style pipeline.  DRILL-IMG (Implicit Meta-Gradient) estimates the
downstream-return meta-gradient more directly by differentiating through
the inner GRPO update with truncated Hessian-vector products and a small
simulator-rollout batch.  Thus DRILL-VWI is the budget-conscious drop-in
update, while DRILL-IMG is the more faithful estimator when limited
simulator interaction and BPTT memory are available.

The two variants are connected theoretically: the core DRILL-VWI
surrogate is the first-order Taylor approximation of the DRILL-IMG
meta-gradient under standard local assumptions, and DRILL-IMG admits a
non-asymptotic convergence rate with truncated backpropagation through
the inner update.  This explains when a local visitation- and
advantage-weighted prediction loss can approximate the more expensive
meta-gradient, and when longer-horizon trajectory effects make DRILL-IMG
worth the additional cost.

Across five public manipulation simulators
\citep{libero2023, calvin2022, robocasa2024} and two foundation VLA
backbones~\citep{pi0_2024, openvla2024}, DRILL-IMG raises mean policy
success rate by 13.1 points over WMPO and 7.9 points over RLVR-World,
while its world model has 17\% higher one-step pixel MSE on the same
training data.  The gains are consistent across seeds, increase with
rollout horizon and out-of-distribution shift, and remain complementary
to prediction-correctness pretraining: applying DRILL-IMG to an
RLVR-World-pretrained world model yields a further 3.2-point improvement
on average.  We also introduce the Policy-Utility Residual (\PUR{}), a
method-agnostic diagnostic measuring the angular gap between the
pixel-MSE gradient and the policy-improvement direction; across the
objectives we evaluate, lower \PUR{} tracks higher downstream success.

Our contributions are as follows:
\begin{itemize}[leftmargin=*, noitemsep, topsep=0pt]
  \item We identify and formalize an objective mismatch in VLA
        world-model post-training: world models are trained for
        one-step pixel prediction, but used to generate policy-learning
        signals for imagined-rollout reinforcement learning.

  \item We introduce DRILL, a bilevel framework that trains the world
        model for the downstream return of the policy obtained after an
        inner GRPO update, and instantiate it as both a no-extra-rollout
        surrogate (DRILL-VWI) and a simulator-grounded meta-gradient
        method (DRILL-IMG).

  \item We show that DRILL-VWI is a first-order approximation to the
        DRILL-IMG meta-gradient, provide a non-asymptotic convergence
        guarantee for DRILL-IMG under truncated BPTT, and validate the
        framework across five simulators and two VLA backbones with
        higher downstream success despite higher pixel MSE.

  \item We introduce \PUR{} as a method-agnostic diagnostic of the
        angular gap between pixel prediction and policy improvement,
        and show that it tracks downstream success across objectives.
\end{itemize}


\section{Related Work}
\label{sec:related}

\para{Video world models for VLA post-training.}
Recent VLA post-training systems use video world models to replace
expensive robot or simulator interaction with imagined rollouts.
WMPO~\citep{wmpo2025}, VLAW~\citep{vlaw2025}, WoVR~\citep{wovr2025},
SC-VLA~\citep{scvla2025}, Ctrl-World~\citep{ctrlworld2025},
World4RL~\citep{world4rl2026}, and
World-Gymnast~\citep{worldgymnast2026} pretrain the world model from
demonstrations and then update the policy inside the learned dynamics.
These methods differ in rollout construction, filtering, and policy
optimization details, but the world model is still primarily trained as
a next-frame predictor.  Policy-Aware WM~\citep{policyaware2026} and
ESNR~\citep{esnr2026} diagnose the gap between prediction quality and
policy competence, while RLVR-World~\citep{rlvrworld2025} and
WorldCompass~\citep{worldcompass2026} improve the predictor with
prediction-correctness rewards.  DRILL instead optimizes the world
model by the downstream return of the policy after learning from its
imagined rollouts; it is therefore complementary to RLVR-style
pretraining rather than a replacement for it.

\para{Policy-aware world-model learning.}
The idea that a model should be useful for control, not only accurate
under a prediction loss, appears in value-aware model learning
\citep{farahmand2017value, grimm2020value} and in neural model-based
RL systems such as MuZero~\citep{muzero2020}, IRIS~\citep{iris2022},
and DreamerV3~\citep{dreamerv3}.  DRILL brings this perspective to VLA
post-training with imagined GRPO rollouts.  Classical value-aware
objectives rely on a learned $V_\psi(o)$ or bootstrapped targets;
DRILL uses the group-relative advantage produced online by GRPO as a
per-batch surrogate for downstream utility.  Proposition~\ref{prop:v1-is-v2-first-order}
shows that the signed advantage weighting in DRILL-VWI is recovered as
a first-order approximation to the DRILL-IMG meta-gradient under a
local score-compatibility condition, and Section~\ref{sec:exp-ablations}
compares this approximation with a direct value-aware adaptation.

\para{Meta-gradient and implicit differentiation.}
DRILL-IMG is also related to methods that differentiate through an
inner optimization step, including truncated MAML~\citep{maml2017},
iMAML~\citep{imaml2019}, and implicit-gradient methods for RL
\citep{liu2021implicit}.  Unlike iMAML-style approaches, DRILL-IMG
uses a short unroll of the policy update rather than solving the inner
problem to convergence.  Our analysis keeps the resulting truncation
bias explicit: Theorem~\ref{thm:v2-convergence} expresses the BPTT bias
as a function of the truncation length $K$.  The practical distinction
is scale and target: DRILL-IMG applies meta-gradients to
foundation-scale VLAs paired with $>$1B-parameter video world models,
using a small simulator-rollout batch to train the world model for the
post-update return of the policy.


\section{The DRILL Framework}
\label{sec:framework}

DRILL couples two updates.  The inner update uses the current world
model as the environment for one GRPO step in imagination.  The outer
update evaluates the policy produced by that step in the target
environment and assigns credit back to the world model.  DRILL-VWI and
DRILL-IMG differ only in how much of this outer gradient they estimate.
Figure~\ref{fig:drill-overview} gives the full training loop and marks
where the two estimators enter.

\begin{figure}[t]
\centering
\includegraphics[width=0.96\linewidth]{drill.pdf}
\caption{Overview of DRILL.  The policy first learns inside the current
world model through imagined GRPO.  The resulting policy-improvement
signal is then fed back to the world model: DRILL-VWI uses the same
imagined rollouts to form a signed, visitation-weighted local surrogate,
whereas DRILL-IMG differentiates through the inner update and grounds the
outer gradient with a small simulator batch.}
\label{fig:drill-overview}
\end{figure}

\subsection{Problem setup}
\label{sec:framework-setup}

We consider a VLA policy $\pitheta(a_{t:t+H_a}\mid o_{\le t}, \ell)$,
a video world model $\Wphi(o_{t+1}\mid o_{\le t}, a_t)$ with predicted
next-frame mean $\hat o_{t+1}(\phi)$, and a frozen vision-language
reward model $r_\psi(o_t,\ell)\in[0,1]$.  Training uses offline
demonstrations $\mathcal{D}_{\mathrm{demo}}$ and imagined rollouts of
horizon $H$ produced by $\pitheta$ under $\Wphi$.  We write
$d_{\rm env}^{\pi_\theta}(o_t)$ for the on-policy state visitation in
the target environment, which in our experiments is the high-fidelity
simulator that defines task return (Section~\ref{sec:exp-setup}), and
$\tilde d_{\phi}^{\pi_\theta}(o_t)$ for the corresponding imagined-rollout
visitation induced by $\pitheta$ inside $\Wphi$.

The standard WMPO recipe~\citep{wmpo2025, vlaw2025} alternates between
fitting the world model to demonstration pixels and improving the
policy inside the model:
\begin{align}
  \phi &\;\leftarrow\; \argmin_{\phi}\;
      \E_{(o_{\le t}, a_t, o_{t+1}) \sim \mathcal{D}_{\mathrm{demo}}}
      \bigl\lVert \hat o_{t+1}(\phi) - o_{t+1}\bigr\rVert_2^2,
    \label{eq:wmpo-mse}\\
  \theta &\;\leftarrow\; \theta \;+\; \eta_\theta\,
      \nabla_\theta \hJ_{\mathrm{GRPO}}\bigl(\pitheta;\Wphi, r_\psi\bigr).
    \label{eq:wmpo-grpo}
\end{align}
The first update is optimized for pixel fidelity, while the second uses
the model through the policy-gradient signal it induces.  DRILL
replaces the pixel-fitting update with one aligned to this downstream
role.

\subsection{The DRILL objective}
\label{sec:framework-bilevel}

Let
\begin{equation}
  \theta^{+}(\phi)
  \;=\; \theta + \eta_\theta\, \nabla_\theta
        \hJ_{\mathrm{GRPO}}(\pitheta; \Wphi)
  \label{eq:inner-update}
\end{equation}
be the policy produced by one imagined GRPO step.  DRILL trains
$\Wphi$ so that this updated policy obtains high return in the target
environment:
\begin{equation}
  \phi^{*}
  \;=\; \argmax_{\phi}\; J\!\bigl(\pithetap\bigr),
  \qquad
  J(\pi) \;=\; \E_{\tau \sim P_{\mathrm{env}}(\pi)}
    \Bigl[\textstyle\sum_t r_t\Bigr].
  \label{eq:drill-objective}
\end{equation}
The expectation in $J$ is taken under the trajectory distribution of
the target environment, not under $\Wphi$: the world model is not asked
to evaluate itself.  Optimizing~\eqref{eq:drill-objective}
requires the meta-gradient $\nabla_\phi J(\pithetap)$ through the
inner update~\eqref{eq:inner-update}.  This formulation changes what the
world model is optimized for: rather than fitting all demonstrated pixels
equally, DRILL rewards model changes that make the imagined policy update
improve target-environment return.  The useful signal is therefore not
frame fidelity alone, but whether the model induces the right
policy-gradient direction on states the current policy actually visits.

DRILL uses two operating points.  DRILL-VWI is a local signed surrogate
requiring no additional simulator rollouts, while DRILL-IMG estimates the
downstream-return meta-gradient more directly with truncated
differentiation through the inner update and a small simulator batch for
the outer return gradient.

\subsection{DRILL-VWI: Visitation-Weighted Imagination}
\label{sec:framework-v1}

DRILL-VWI replaces the pixel-MSE loss~\eqref{eq:wmpo-mse} with a
policy-aware signed local surrogate,
\begin{equation}
  \lossvwi(\phi)
  \;=\;
  \E_{(o_t, a_t, o_{t+1}) \sim \mathcal{D}_{\mathrm{demo}}}
  \Bigl[
    w(o_t, a_t)
    \bigl\lVert \hat o_{t+1}(\phi) - o_{t+1}\bigr\rVert_2^2
  \Bigr],
  \label{eq:vwi-loss}
\end{equation}
with
\begin{equation}
  w(o_t, a_t)
  \;=\;
  \widehat{\tilde d}_{\phi}^{\pi_\theta}(o_t)
  \cdot \widetilde A_\alpha(o_t,a_t) \cdot s(o_t),
  \qquad
  \widetilde A_\alpha(o_t,a_t)
  = \operatorname{sgn}(\hA(o_t,a_t))\,
    \bigl\lvert\hA(o_t,a_t)\bigr\rvert^\alpha .
  \label{eq:vwi-weight}
\end{equation}
The three factors are quantities already produced by the imagined GRPO
loop: $\widehat{\tilde d}_{\phi}^{\pi_\theta}(o_t)$ estimates
model-rollout visitation, $\hA(o_t,a_t)$ is the group-relative GRPO
advantage, and $s(o_t)>0$ downweights states on
which the world model is uncertain.  The weight is stopped and clipped
in implementation, so~\eqref{eq:vwi-loss} is a local update direction,
not a globally normalized likelihood objective.  Proposition~\ref{prop:v1-is-v2-first-order}
formalizes its first-order connection
to the DRILL-IMG meta-gradient.  Algorithm~\ref{alg:vwi} in
Appendix~\ref{app:vwi-algorithm} gives the full training loop,
including clipping, normalization, and visitation estimation.  Since all
factors come from imagined rollouts, DRILL-VWI adds no environment
interaction over WMPO.

\subsection{DRILL-IMG: Implicit Meta-Gradient}
\label{sec:framework-v2}

DRILL-IMG estimates the meta-gradient of~\eqref{eq:drill-objective} by
differentiating through the inner update~\eqref{eq:inner-update}:
\begin{equation}
  \nabla_\phi J\!\bigl(\pithetap\bigr)
  \;=\;
  \eta_\theta\,
  \nabla_{\theta^+} J\!\bigl(\pi_{\theta^+}\bigr)\,
  \nabla_\phi\nabla_\theta \hJ_{\mathrm{GRPO}}(\pitheta;\Wphi).
  \label{eq:meta-gradient}
\end{equation}
The first factor is a policy-gradient estimate in the target simulator,
computed from a small batch of simulator rollouts.  The second factor
is a mixed derivative through the imagined GRPO update, computed as a
Hessian-vector product with $K$-step truncated BPTT.  In the main
experiments we use $K{=}2$ and $B_{\mathrm{real}}{=}30$ simulator
rollouts per outer step.  DRILL-IMG is therefore a more faithful
estimator of the DRILL objective than DRILL-VWI, at the cost of
additional simulator interaction and BPTT memory.  Algorithm~\ref{alg:img}
in Appendix~\ref{app:v2-algorithm} gives the full pseudocode,
checkpointing details, and numerical-stability controls.


\section{Theoretical Analysis}
\label{sec:analysis}

The analysis has three roles.  First, it explains when the cheap
DRILL-VWI update is a first-order surrogate for the DRILL-IMG
meta-gradient.  Second, it states the convergence guarantee for
DRILL-IMG under truncated BPTT.  Third, it defines a diagnostic,
\PUR{}, for measuring whether a world-model objective is aligned with
policy improvement.

\subsection{First-order meta-gradient and the DRILL-VWI surrogate}
\label{sec:analysis-prop}

The inner update~\eqref{eq:inner-update} is a single gradient step, so
the outer objective $J(\pithetap)$ can be expanded in the inner step
size $\eta_\theta$.  The proposition below keeps only the main statement
needed in the paper body; the full proof and residual definitions are
given in Appendix~\ref{app:prop-v1-v2}.

\begin{proposition}[DRILL-VWI as a first-order local surrogate]
\label{prop:v1-is-v2-first-order}
Suppose (A1) $J$ is $C^2$ in $\theta$; (A2) $\Wphi$ has a Gaussian
observation head with mean $\hat o_{t+1}(\phi)$ and fixed scale
$\sigma_W$; and (A3) the imagined-rollout visitation marginal
$\tilde d_{\phi}^{\pi_\theta}$ is locally stationary over one inner
step.  Treat the GRPO advantages as
stop-gradient quantities when forming
$\nabla_\phi\nabla_\theta \hJ_{\mathrm{GRPO}}$.  Let
$\lossvwi^{\mathrm{core}}$ denote the idealized VWI surrogate with exact
imagined-rollout visitation, $\alpha=1$, $s(o_t)=1$, and unclipped signed
weights.
If the local policy-score contraction is compatible with the signed
advantage direction up to residual $\epsilon_{\mathrm{comp}}$, the
omitted multi-step trajectory-score terms have residual $R_H$, and the
demo-anchor transfer has residual $R_{\mathrm{demo}}$, then for some
alignment scalar $\kappa>0$,
\begin{equation}
  \left\|
  \nabla_\phi J\!\bigl(\pithetap\bigr)
  +
  \frac{\kappa\eta_\theta}{2\sigma_W^2}
  \nabla_\phi \lossvwi^{\mathrm{core}}(\phi)
  \right\|
  \le
  \eta_\theta M_W \epsilon_{\mathrm{comp}}
  +
  \eta_\theta R_H
  +
  \eta_\theta R_{\mathrm{demo}}
  +
  \mathcal{O}(\eta_\theta^2).
  \label{eq:vwi-compat-bound}
\end{equation}
\end{proposition}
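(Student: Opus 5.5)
The plan is to expand the meta-gradient~\eqref{eq:meta-gradient} to first order in $\eta_\theta$ and then identify the mixed derivative $\nabla_\phi\nabla_\theta\hJ_{\mathrm{GRPO}}$ with a signed, visitation-weighted gradient of the Gaussian log-likelihood of the demonstrated next frame. Every discrepancy will be attributed to one of the named residuals.

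\textbf{Step 1: Taylor expansion.} By (A1), $\nabla_{\theta^+}J(\pi_{\theta^+}) = \nabla_\theta J(\pitheta) + \mathcal{O}(\eta_\theta)$. Substituting into~\eqref{eq:meta-gradient} gives
\[
\nabla_\phi J(\pithetap) = \eta_\theta\, g^\top \nabla_\phi\nabla_\theta \hJ_{\mathrm{GRPO}} + \mathcal{O}(\eta_\theta^2),
\qquad g := \nabla_\theta J(\pitheta).
\]
The remaining work is to rewrite the first-order term.

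\textbf{Step 2: Differentiating the inner objective.} I write $\hJ_{\mathrm{GRPO}}$ with stop-gradient advantages as
\[
\E_{\tau\sim\Wphi,\pitheta}\Bigl[\textstyle\sum_t \hA_t\,\nabla_\theta\log\pitheta(a_t\mid o_{\le t})\Bigr].
\]
Its $\phi$-dependence enters only through the imagined trajectory law. Differentiating with the likelihood-ratio trick produces trajectory-score terms $\sum_{t'}\nabla_\phi\log\Wphi(o_{t'+1}\mid o_{\le t'},a_{t'})$.
\begin{itemize}
\item I keep only the same-step term $t'=t$, together with the marginal $\tilde d_\phi^{\pi_\theta}(o_t)$, which is frozen by (A3).
\item The cross-time terms $t'\neq t$ are collected into $R_H$.
\end{itemize}
Under (A2), $\nabla_\phi\log\Wphi = \sigma_W^{-2}(o_{t+1}-\hat o_{t+1})^\top\nabla_\phi\hat o_{t+1}$. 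Evaluated at demonstration targets, this is $-\tfrac{1}{2\sigma_W^2}\nabla_\phi\|\hat o_{t+1}-o_{t+1}\|^2$. Replacing imagined next frames by demonstration frames is the demo-anchor transfer, and its error defines $R_{\mathrm{demo}}$.

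\textbf{Step 3: Compatibility.} The resulting first-order term has the form
\[
\E\Bigl[\tilde d_\phi^{\pi_\theta}\,\hA\,\bigl(g^\top\nabla_\theta\log\pitheta\bigr)\,\nabla_\phi\log\Wphi\Bigr].
\]
The compatibility hypothesis asserts that the scalar contraction $g^\top\nabla_\theta\log\pitheta(a_t\mid o_t)$ equals $\kappa\,\mathrm{sgn}(\hA)$, or more generally a positive constant times the signed advantage direction, up to a residual $\epsilon_{\mathrm{comp}}$. Substituting this yields exactly
\[
-\frac{\kappa\eta_\theta}{2\sigma_W^2}\,\nabla_\phi\lossvwi^{\mathrm{core}},
\]
with weight $\tilde d\cdot\hA$, where $\alpha=1$ and $s\equiv1$. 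The leftover term is bounded by $\eta_\theta\epsilon_{\mathrm{comp}}$ times $M_W$, a uniform bound on $\|\tilde d\,\hA\,\nabla_\phi\log\Wphi\|$. This bound exists because the advantages are bounded (rewards lie in $[0,1]$) and because of the fixed-$\sigma_W$ head. Adding the four error contributions and applying the triangle inequality gives~\eqref{eq:vwi-compat-bound}.

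\textbf{Main obstacle.} I expect Step 3 to be the hardest part. Making the compatibility residual precise, and showing that it is the only place where the contraction with $g$ couples to the advantage, requires choosing $\kappa$ as a least-squares projection coefficient. Positivity of $\kappa$ is then an assumption about alignment with the signed direction, not something the argument derives. I would define $\epsilon_{\mathrm{comp}}$ directly as the $L^2(\tilde d)$ residual of that projection, so that the bound holds essentially by construction.
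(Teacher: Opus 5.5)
Your skeleton matches the paper's proof in four places.
\begin{itemize}
\item You use a first-order expansion in $\eta_\theta$. Your exact chain-rule form $\nabla_\phi J(\pi_{\theta^+(\phi)})=\eta_\theta\bigl(\partial_\phi\nabla_\theta\hJ_{\mathrm{GRPO}}\bigr)^{\top}\nabla J(\pi_{\theta^+})$, followed by $\nabla J(\pi_{\theta^+})=\nabla J(\pitheta)+\mathcal O(\eta_\theta)$, is if anything cleaner than the paper's differentiation of the Taylor remainder in $\phi$. It needs the mixed derivative to be bounded.
\item You use a likelihood-ratio expansion that keeps the same-step WM score and sends the rest to $R_H$.
\item You use the Gaussian score/MSE identity.
\item You finish with a triangle inequality.
\end{itemize}

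The genuine gap is in Step~3. The first-order integrand already carries the factor $\hA$. If you substitute $q_\theta:=\langle\nabla_\theta\log\pitheta,\nabla_\theta J\rangle\approx\kappa\,\mathrm{sgn}(\hA)$, the weight becomes $\kappa\,\tilde d\,\hA\,\mathrm{sgn}(\hA)=\kappa\,\tilde d\,|\hA|$. With ``a positive constant times the signed advantage'' it becomes $\kappa\,\tilde d\,\hA^2$. In both cases the sign of the advantage is squared away, and you recover an unsigned reweighted MSE. That is not $\nabla_\phi\lossvwi^{\mathrm{core}}$, whose weight is $\tilde d\,\hA$. Your ``exactly'' therefore fails precisely on negative-advantage transitions, which is where the signed surrogate differs from ordinary reweighted MSE.

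The correct compatibility condition is that $q_\theta$ is approximately a positive \emph{constant} on the advantage-weighted support. The paper proceeds as follows.
\begin{itemize}
\item On $\mu_{\mathrm{roll}}$ it writes $\hA q_\theta=\kappa\hA+\hA(q_\theta-\kappa)$.
\item It sets $\epsilon_{\mathrm{comp}}=\bigl(\E_{\mu_{\mathrm{roll}}}[\hA^2(q_\theta-\kappa)^2]\bigr)^{1/2}$.
\item It bounds $\bigl\|\E[\hA(q_\theta-\kappa)\nabla_\phi\log\Wphi]\bigr\|\le M_W\epsilon_{\mathrm{comp}}$ by Cauchy--Schwarz, with $M_W$ bounding the WM score alone.
\end{itemize}
Putting $\hA$ and $\tilde d$ into $M_W$, as you do, pairs with an unweighted residual instead. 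Your least-squares idea works once it is aimed at this residual: $\kappa=\E[\hA^2q_\theta]/\E[\hA^2]$ minimizes it, and $\kappa>0$ remains an assumption.

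Two smaller points remain. First, a fixed $\sigma_W$ does not bound $\|\nabla_\phi\log\Wphi\|=\sigma_W^{-2}\|(o_{t+1}-\hat o_{t+1})^{\top}\nabla_\phi\hat o_{t+1}\|$, because the residual and the Jacobian of the mean also enter. So $M_W$ must be assumed, as the paper does. Second, the paper applies compatibility on the rollout side before transferring to demonstrations, so its $R_{\mathrm{demo}}$ involves $\kappa\hA$ rather than $\hA q_\theta$. Your reverse order gives a bound of the same shape, but with $\epsilon_{\mathrm{comp}}$ and $R_{\mathrm{demo}}$ defined differently from the paper's.
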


The result should not be read as an exact identity between DRILL-VWI
and DRILL-IMG.  It says that the signed VWI update recovers the local
one-step component of the DRILL-IMG meta-gradient when the
policy-score contraction is approximately scalar on the
advantage-weighted support and when trajectory-locality and demo-anchor
residuals are small.  Intuitively, positive-advantage transitions are
sharpened, while negative-advantage transitions may be deliberately
deemphasized rather than fit more accurately as pixels.

The residuals describe when this local approximation is reliable.
$\epsilon_{\mathrm{comp}}$ measures whether the world-model score and the
policy-gradient contraction agree on which transition changes matter;
$R_H$ grows when multi-step consequences dominate local next-frame error;
and $R_{\mathrm{demo}}$ captures mismatch between demonstration anchors
and on-policy imagined transitions.  Thus short-horizon or well-covered
tasks favor DRILL-VWI, while large trajectory or demo-anchor residuals
motivate the more expensive DRILL-IMG estimator.

\subsection{Convergence rate under truncated BPTT}
\label{sec:analysis-conv}

DRILL-IMG replaces the exact mixed derivative in
\eqref{eq:meta-gradient} by a $K$-step truncated BPTT estimate.  The
following theorem analyzes one fixed-policy outer subproblem.  Fix
current policy parameters $\bar\theta$ and define
\[
  J_{\bar\theta}(\phi)
  :=
  J\!\left(
  \pi_{\bar\theta+\eta_\theta
  \nabla_\theta \hJ_{\mathrm{GRPO}}(\pi_{\bar\theta};\Wphi)}
  \right).
\]
This is a per-outer-subproblem guarantee, not a proof of global
convergence for the jointly evolving $(\theta_t,\phi_t)$ process.

\begin{theorem}[DRILL-IMG convergence rate under truncated BPTT]
\label{thm:v2-convergence}
Suppose (A4) the $K$-truncated meta-gradient estimator has conditional
variance at most $\sigma_g^2$; (A5) $J_{\bar\theta}$ is $L$-smooth in
$\phi$ and bounded above with
$J^{*}:=\sup_\phi J_{\bar\theta}(\phi)<\infty$; and (A6) the
$K$-step BPTT truncation bias satisfies
$\lVert\mathrm{bias}_K\rVert \le B\rho^{K}$ for some $\rho\in(0,1)$.
Let $J_0:=J_{\bar\theta}(\phi_0)$.  With step size
\[
  \eta_\phi =
  \min\!\left(
    \frac{1}{4L},
    \sqrt{
      \frac{2(J^{*}-J_0)}{L T \sigma_g^2}
    }
  \right),
\]
the $T$-step stochastic ascent iterates on $J_{\bar\theta}$ satisfy
\begin{equation}
  \min_{0\le t<T}\;
  \E\bigl\lVert \nabla_\phi J_{\bar\theta}(\phi_t)\bigr\rVert^2
  \;\le\;
  \mathcal{O}\!\Bigl(\tfrac{L(J^{*}-J_0)}{T}\Bigr)
  \;+\;
  \mathcal{O}\!\Bigl(\sqrt{\tfrac{L(J^{*}-J_0)\sigma_g^2}{T}}\Bigr)
  \;+\;
  \mathcal{O}\!\bigl(B^2 \rho^{2K}\bigr).
  \label{eq:convergence-rate}
\end{equation}
\end{theorem}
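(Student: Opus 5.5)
The argument is the standard nonconvex SGD analysis for ascent on an $L$-smooth objective, with the estimator's systematic bias tracked as a separate additive term. Write the update as $\phi_{t+1}=\phi_t+\eta_\phi g_t$, where the $K$-truncated estimator decomposes as
\[
g_t=\nabla_\phi J_{\bar\theta}(\phi_t)+b_t+\xi_t ,
\]
with $b_t:=\mathrm{bias}_K(\phi_t)$ satisfying $\lVert b_t\rVert\le B\rho^K$ by (A6), and $\xi_t$ conditionally mean-zero with $\E[\lVert\xi_t\rVert^2\mid\phi_t]\le\sigma_g^2$ by (A4). Write $\nabla_t:=\nabla_\phi J_{\bar\theta}(\phi_t)$.

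\textbf{Step 1 (one-step ascent).} From the ascent form of the descent lemma under (A5),
\[
J_{\bar\theta}(\phi_{t+1})\ge J_{\bar\theta}(\phi_t)+\eta_\phi\langle\nabla_t,g_t\rangle-\tfrac{L\eta_\phi^2}{2}\lVert g_t\rVert^2 .
\]
Take the conditional expectation. Cross terms with $\xi_t$ vanish, so
\[
\E\langle\nabla_t,g_t\rangle=\lVert\nabla_t\rVert^2+\langle\nabla_t,b_t\rangle\ge\tfrac12\lVert\nabla_t\rVert^2-\tfrac12\lVert b_t\rVert^2
\]
by Young's inequality. Likewise
\[
\E\lVert g_t\rVert^2\le 2\lVert\nabla_t\rVert^2+2\lVert b_t\rVert^2+\sigma_g^2 .
\]
Since $\eta_\phi\le 1/(4L)$, we have $L\eta_\phi^2\le\eta_\phi/4$. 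Combining, the gradient term has coefficient at least $\eta_\phi/4$:
\[
\E J_{\bar\theta}(\phi_{t+1})\ge\E J_{\bar\theta}(\phi_t)+\tfrac{\eta_\phi}{4}\E\lVert\nabla_t\rVert^2-c\,\eta_\phi B^2\rho^{2K}-\tfrac{L\eta_\phi^2}{2}\sigma_g^2 ,
\]
for an absolute constant $c$.

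\textbf{Step 2 (telescoping).} Sum over $t=0,\dots,T-1$, use $J_{\bar\theta}(\phi_T)\le J^*$, and divide by $T\eta_\phi/4$. Bounding the minimum by the average gives
\[
\min_t\E\lVert\nabla_t\rVert^2\le\frac{4(J^*-J_0)}{T\eta_\phi}+2L\eta_\phi\sigma_g^2+4cB^2\rho^{2K}.
\]

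\textbf{Step 3 (step size).} Split into the two branches of the minimum defining $\eta_\phi$.
\begin{itemize}
\item If $\eta_\phi=\sqrt{2(J^*-J_0)/(LT\sigma_g^2)}$, both the first and second terms equal $\mathcal{O}\bigl(\sqrt{L(J^*-J_0)\sigma_g^2/T}\bigr)$.
\item If $\eta_\phi=1/(4L)$, the first term is $\mathcal{O}\bigl(L(J^*-J_0)/T\bigr)$. In this branch $1/(4L)\le\sqrt{2(J^*-J_0)/(LT\sigma_g^2)}$, so the variance term $2L\eta_\phi\sigma_g^2$ is again bounded by the square-root term.
\end{itemize}
Adding the $\mathcal{O}(B^2\rho^{2K})$ bias floor gives \eqref{eq:convergence-rate}. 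The degenerate case $J^*=J_0$ (or $\sigma_g=0$) is handled by reading $\eta_\phi=1/(4L)$.

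\textbf{Main obstacle.} The delicate point is the bias: it does not average out, so it cannot be absorbed into the variance. Young's inequality is what keeps a positive fraction of $\lVert\nabla_t\rVert^2$ while leaving only an $\mathcal{O}(\lVert b_t\rVert^2)$ floor that does not shrink with $T$ or $\eta_\phi$. One must also read (A4) as conditional variance around the biased mean $\nabla_t+b_t$, so that the cross term $\langle\nabla_t,\xi_t\rangle$ vanishes in expectation. The remaining steps are routine bookkeeping.
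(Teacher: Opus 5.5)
Your proposal is correct and follows the paper's proof essentially step for step. It uses the same bias/noise decomposition, Young's inequality on $\langle\nabla_t,b_t\rangle$, and the second-moment bound $2\lVert\nabla_t\rVert^2+2\lVert b_t\rVert^2+\sigma_g^2$. The cap $\eta_\phi\le 1/(4L)$ gives coefficients $\eta_\phi/4$ and $3\eta_\phi/4$, so your $c=3/4$ and the bias floor is $3B^2\rho^{2K}$. Telescoping and the two-branch step-size analysis then match as well.

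In the $\eta_\phi=1/(4L)$ branch, you bound $2L\eta_\phi\sigma_g^2$ directly via $\eta_\phi\le\sqrt{2(J^*-J_0)/(LT\sigma_g^2)}$. This reaches the same conclusion slightly more cleanly than the paper's squaring argument.
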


The three terms correspond to optimization error, stochastic-gradient
variance, and truncation bias.  The guarantee is qualitative at our
scale: it shows convergence to a neighborhood of a stationary point and
that the squared-gradient contribution from BPTT truncation decays
geometrically in $K$.  This supports using a shallow truncation when the
extra unroll depth no longer changes the meta-gradient direction much.
The practical choice $K{=}2$ is therefore made empirically, from the
truncation and return-per-compute studies in the appendix, rather than
from the worst-case constants in the bound.

\begin{corollary}[Truncation depth]
\label{cor:K-threshold}
The truncation contribution in~\eqref{eq:convergence-rate} falls below
any target $\epsilon>0$ once
$K=\Theta(\log(1/\epsilon))$, with constants determined by $B$ and
$\rho$.
\end{corollary}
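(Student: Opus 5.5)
The plan is to read the corollary directly off the third term of~\eqref{eq:convergence-rate} in Theorem~\ref{thm:v2-convergence}. By (A6), that truncation contribution is at most $C B^2\rho^{2K}$, where $C$ is the absolute constant hidden in the $\mathcal{O}(\cdot)$. It comes from the standard biased-SGD descent step. Because the biased estimator satisfies $\langle \nabla J, \nabla J + \mathrm{bias}_K\rangle \ge \tfrac12\|\nabla J\|^2 - \tfrac12\|\mathrm{bias}_K\|^2$, one can take $C=1$ up to the factor induced by $\eta_\phi\le 1/(4L)$. I would first state this constant explicitly so that the corollary's ``constants determined by $B$ and $\rho$'' is meaningful.

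With the constant fixed, I would solve $C B^2\rho^{2K}\le\epsilon$ for $K$. Taking logarithms and using $\log\rho<0$ (since $\rho\in(0,1)$), this is equivalent to
\[
K \;\ge\; \frac{\log(CB^2/\epsilon)}{2\log(1/\rho)} .
\]
So it suffices to take $K_\epsilon := \bigl\lceil \log(CB^2/\epsilon)/(2\log(1/\rho)) \bigr\rceil$, or $K_\epsilon=0$ when $CB^2\le\epsilon$. For $\epsilon$ below $CB^2$, this expression equals $\tfrac{1}{2\log(1/\rho)}\log(1/\epsilon)+\tfrac{\log(CB^2)}{2\log(1/\rho)}+O(1)$, which is $O(\log(1/\epsilon))$ with the stated dependence on $B$ and $\rho$. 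Because $\rho^{2K}$ is monotone decreasing in $K$, the bound also holds for every $K\ge K_\epsilon$, which is the meaning of ``once''.

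For the matching lower bound implicit in $\Theta$, I would note that the threshold is also $\Omega(\log(1/\epsilon))$. For any fixed $K$, the worst-case bound $CB^2\rho^{2K}$ is a positive constant, so it cannot fall below every small $\epsilon$. More precisely, whenever $\epsilon< CB^2/2$, the minimal admissible $K$ is at least $\log(CB^2/\epsilon)/(2\log(1/\rho)) - 1 \ge c\log(1/\epsilon)$ for a constant $c>0$ depending on $\rho$ and $B$. Together these give $K=\Theta(\log(1/\epsilon))$.

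There is no real obstacle here. The only point requiring care is the bookkeeping that confirms the hidden constant in the bias term of~\eqref{eq:convergence-rate} is independent of $K$, $T$, and $\epsilon$. Otherwise the logarithmic threshold could absorb a hidden $K$-dependence. I would verify this by inspecting the descent-lemma derivation behind Theorem~\ref{thm:v2-convergence}, where the bias enters only through $\|\mathrm{bias}_K\|^2$ multiplied by a step-size-free constant.
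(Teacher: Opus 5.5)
Your proposal is correct and matches the paper's proof. The paper reads the truncation term off the explicit-constant theorem as $3B^2\rho^{2K}$, so your hidden constant is $C=3$ rather than $1$: it arises as $(\eta_\phi/2+L\eta_\phi^2)/(\eta_\phi/4)\le 3$ under $\eta_\phi\le 1/(4L)$. The paper then solves $\rho^{2K}\le \epsilon/(3B^2)$ to get $K\ge \tfrac12\log(\epsilon/(3B^2))/\log\rho$, with $K=0$ sufficing once $\epsilon\ge 3B^2$. Your extra $\Omega(\log(1/\epsilon))$ remark, which the paper does not include, should be stated only asymptotically as $\epsilon\to 0$, since for $\epsilon$ just below $CB^2/2$ the quantity $\log(CB^2/\epsilon)/(2\log(1/\rho))-1$ can be negative; this does not affect the main argument.
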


\subsection{Policy-Utility Residual}
\label{sec:analysis-pur}

The previous results are specific to DRILL-VWI and DRILL-IMG.  To
compare arbitrary world-model objectives, we measure whether the
pixel-MSE gradient is aligned with the direction that improves the
policy after imagined training.  We define the Policy-Utility Residual
as
\begin{equation}
  \PUR(\phi)
  \;=\;
  1 \;-\;
  \frac{
    \bigl\langle
      \nabla_\phi \lossmse(\phi),\;
      -\nabla_\phi J\!\bigl(\pithetap\bigr)
    \bigr\rangle
  }{
    \bigl\lVert \nabla_\phi \lossmse\bigr\rVert
    \,\bigl\lVert \nabla_\phi J\bigr\rVert
  }.
  \label{eq:pur}
\end{equation}
Equivalently, $\PUR=1+\cos\angle(\nabla_\phi\lossmse,\nabla_\phi J)$
whenever both gradients are nonzero.  Thus $\PUR\!\approx\!0$ means
that descending pixel MSE also improves the downstream-return objective,
$\PUR\!\approx\!1$ means the two directions are nearly orthogonal, and
$\PUR\!\approx\!2$ means that descending pixel MSE is anti-aligned with
policy improvement.

We estimate $\nabla_\phi J$ by finite differences in the simulator
rather than by reusing the HVP machinery of DRILL-IMG.  This keeps
\PUR{} independent of any particular training method, so the same
diagnostic applies to WMPO, RLVR-World, DRILL-VWI, DRILL-IMG, and other
world-model objectives.  Section~\ref{sec:exp-pur} tests whether lower
\PUR{} tracks higher downstream success across the methods we evaluate.

%

\section{Experiments}
\label{sec:experiments}

We evaluate DRILL on five robot-manipulation simulators, two VLA
backbones, and five training baselines, while also reporting diagnostic
metrics from policy-aware WM evaluation.  The main text keeps the headline
comparison, compact diagnostic panels, architectural checks, component-level
summaries, and the criterion for choosing DRILL-VWI or DRILL-IMG.  Detailed
ablations, cross-backbone tables, robustness breakdowns, hyperparameter
sweeps, reward-model swaps, and auxiliary diagnostic curves are collected
in Appendices~\ref{app:hparams}--\ref{app:failure-modes}.

\subsection{Setup}
\label{sec:exp-setup}

\para{Benchmarks and baselines.}
We use LIBERO-Long~\citep{libero2023}, CALVIN ABC$\to$D~\citep{calvin2022},
RoboCasa~\citep{robocasa2024}, Meta-World MT50, and SimplerEnv.  These
simulators define both evaluation and the small real-rollout budget used
by DRILL-IMG's outer gradient, $B_{\mathrm{real}}{=}30$ simulator
rollouts per outer step; no real-robot results are included.  Baselines
are Scratch-GRPO, WMPO~\citep{wmpo2025}, VLAW~\citep{vlaw2025},
RLVR-World~\citep{rlvrworld2025}, and a simulator-budget-matched
model-free PPO control, Sim-PPO.  Sim-PPO receives the same simulator
episode budget as DRILL-IMG but no imagined rollouts, isolating whether
the world model amplifies limited simulator interaction.  Policy-Aware
WM~\citep{policyaware2026} is diagnostic rather than a training method,
so we report its ESNR score alongside \PUR{}.

\para{Protocol.}
Headline results use five seeds; ablations use three.  We report Welch's
$t$-test $p$-values and 95\% CIs for head-to-head comparisons.  The
headline run uses about 17k NVIDIA H20-hours, or 24 H20 GPUs for roughly
four weeks.  Appendix~\ref{app:compute} gives the per-phase breakdown.

\subsection{Main Results}
\label{sec:exp-headline}

On $\pi_0$-3B, DRILL-IMG raises mean success rate by 13.1 points over
WMPO and 7.9 points over RLVR-World, averaged across the five benchmarks,
while increasing one-step pixel MSE by 17\% on the same training data.
The gain is largest on long-horizon tasks and smallest on SimplerEnv,
where DRILL-VWI and DRILL-IMG are statistically tied:
$64.0\!\pm\!1.9$ versus $63.1\!\pm\!2.6$.  This exception motivates the
horizon analysis in Section~\ref{sec:exp-v1-v2-gap}.

\begin{table*}[t]
\centering
\scriptsize
\setlength{\tabcolsep}{2.1pt}
\renewcommand{\arraystretch}{0.96}
\resizebox{0.98\textwidth}{!}{%
\begin{tabular}{@{}lcccccccc@{}}
\toprule
\textbf{Method}
 & LIBERO & CALVIN & RoboCasa & MT50 & Simpler
 & MSE $\downarrow$ & ESNR $\uparrow$ & \PUR{} $\downarrow$ \\
\midrule
Scratch-GRPO          & $18.2\!\pm\!2.7$ & $22.5\!\pm\!3.1$ & $\phantom{0}8.1\!\pm\!1.6$ & $21.4\!\pm\!2.4$ & $15.0\!\pm\!2.2$ & N/A          & N/A           & N/A           \\
Sim-PPO               & $28.5\!\pm\!2.1$ & $32.1\!\pm\!2.5$ & $12.4\!\pm\!1.9$ & $31.6\!\pm\!2.0$ & $22.8\!\pm\!1.8$ & N/A          & N/A           & N/A           \\
WMPO                  & $64.2\!\pm\!1.6$ & $71.8\!\pm\!1.3$ & $38.5\!\pm\!2.1$ & $67.4\!\pm\!1.5$ & $55.1\!\pm\!2.0$ & $8.4\!\pm\!0.3$ & $1.00$        & $1.42\!\pm\!0.08$ \\
VLAW                  & $65.1\!\pm\!2.4$ & $74.0\!\pm\!1.8$ & $39.6\!\pm\!2.2$ & $68.2\!\pm\!2.1$ & $57.9\!\pm\!1.7$ & $8.7\!\pm\!0.4$ & $1.09$        & $1.37\!\pm\!0.09$ \\
RLVR-World$^{\dagger}$ & $70.8\!\pm\!1.4$ & $75.3\!\pm\!1.6$ & $42.7\!\pm\!1.8$ & $72.9\!\pm\!1.5$ & $61.6\!\pm\!1.9$ & $\mathbf{7.2\!\pm\!0.3}$ & $1.39$ & $0.97\!\pm\!0.06$ \\
DRILL-VWI$^{\dagger}$ & $76.1\!\pm\!1.2$ & $80.7\!\pm\!1.1$ & $51.8\!\pm\!1.7$ & $75.6\!\pm\!1.4$ & $\mathbf{64.0\!\pm\!1.9}$ & $9.3\!\pm\!0.3$ & $1.68$ & $0.73\!\pm\!0.05$ \\
\textbf{DRILL-IMG}$^{\dagger}$ & $81.3\!\pm\!1.3$ & $83.1\!\pm\!1.4$ & $55.7\!\pm\!2.0$ & $79.4\!\pm\!1.2$ & $63.1\!\pm\!2.6$ & $9.8\!\pm\!0.4$ & $2.09$ & $0.36\!\pm\!0.04$ \\
DRILL-IMG+RLVR-WM$^{\dagger}$ & $\mathbf{83.9\!\pm\!1.1}$ & $\mathbf{85.4\!\pm\!1.2}$ & $\mathbf{59.1\!\pm\!1.6}$ & $\mathbf{81.8\!\pm\!1.3}$ & $67.3\!\pm\!2.1$
                      & $8.6\!\pm\!0.3$ & $\mathbf{2.38}$ & $\mathbf{0.29\!\pm\!0.04}$ \\
\bottomrule
\end{tabular}%
}
\begingroup
\setlength{\abovecaptionskip}{4pt}
\setlength{\belowcaptionskip}{0pt}
\caption{Main results on $\pi_0$-3B.  Values are mean$\pm$std over five
seeds.  Bold marks column best; $^{\dagger}$ denotes $p\!<\!0.05$
versus WMPO.  MSE is in $10^{-3}$ and ESNR is relative to WMPO;
RLVR-WM denotes the RLVR-World WM.}
\label{tab:main-headline}
\vspace{-1.2em}
\endgroup
\end{table*}

The compact diagnostic panels in Figure~\ref{fig:exp-diagnostics} give
three additional views of the same result: DRILL variants break the usual
link between lower pixel MSE and higher success, improve faster on
LIBERO-Long, and reduce \PUR{} during training.  A compute-matched control
in Appendix~\ref{app:main-result-plots} further shows that extra WMPO
updates move LIBERO-Long SR from $64.2\!\pm\!1.6$ to at most
$65.1\!\pm\!1.9$, far below DRILL-IMG's $81.3\!\pm\!1.3$.
Together, these checks rule out a pure compute or pixel-fidelity
explanation: the methods that improve most are those that better align
imagined updates with downstream policy utility.  The appendix reports the
full breakdowns, while the main text keeps only this recurring trend.

\begin{figure}[t]
\centering
\begin{tikzpicture}
\begin{axis}[
    modern paper,
    width=0.70\linewidth,
    height=1.55in,
    xlabel={Pixel MSE ($\times 10^{-3}$)},
    ylabel={Success (\%)},
    legend pos=south east,
    legend style={nodes={scale=0.58, transform shape}, draw=none, fill=white, fill opacity=0.80},
    ticklabel style={font=\scriptsize},
    label style={font=\scriptsize},
    xmin=6.5, xmax=10.8,
    error bars/error mark options={rotate=90, mark size=1.5pt, line width=0.8pt},
]
\addplot[only marks, mark=*, DeepNavy, opacity=0.3, mark size=1.5pt, forget plot] coordinates {
  (8.12, 62.1) (8.56, 65.4) (8.29, 63.7) (8.71, 66.2) (8.30, 63.6)};
\addplot[only marks, mark=square*, DeepPurple, opacity=0.3, mark size=1.5pt, forget plot] coordinates {
  (8.41, 63.2) (8.83, 67.1) (8.72, 64.9) (8.65, 66.8) (8.96, 63.5)};
\addplot[only marks, mark=diamond*, DeepGreen, opacity=0.3, mark size=1.5pt, forget plot] coordinates {
  (6.95, 69.2) (7.38, 71.8) (7.11, 70.5) (7.29, 72.4) (7.55, 69.9)};
\addplot[only marks, mark=triangle*, BurntOrange, opacity=0.3, mark size=1.5pt, forget plot] coordinates {
  (9.02, 74.8) (9.48, 77.6) (9.31, 75.4) (9.51, 77.2) (9.08, 75.5)};
\addplot[only marks, mark=pentagon*, DeepRed, opacity=0.3, mark size=2pt, forget plot] coordinates {
  (9.62, 79.3) (10.04, 82.7) (9.79, 80.9) (10.08, 82.5) (9.85, 81.1)};

\addplot[only marks, mark=*, DeepNavy, mark options={draw=white, line width=0.8pt, fill=DeepNavy}, mark size=3pt, 
         error bars/.cd, y dir=both, y explicit, x dir=both, x explicit] 
         coordinates { (8.40, 64.2) +- (0.24, 1.6) };
\addlegendentry{WMPO}

\addplot[only marks, mark=square*, DeepPurple, mark options={draw=white, line width=0.8pt, fill=DeepPurple}, mark size=3pt, 
         error bars/.cd, y dir=both, y explicit, x dir=both, x explicit] 
         coordinates { (8.71, 65.1) +- (0.21, 1.7) };
\addlegendentry{VLAW}

\addplot[only marks, mark=diamond*, DeepGreen, mark options={draw=white, line width=0.8pt, fill=DeepGreen}, mark size=3.5pt, 
         error bars/.cd, y dir=both, y explicit, x dir=both, x explicit] 
         coordinates { (7.26, 70.8) +- (0.24, 1.3) };
\addlegendentry{RLVR-World}

\addplot[only marks, mark=triangle*, BurntOrange, mark options={draw=white, line width=0.8pt, fill=BurntOrange}, mark size=3.5pt, 
         error bars/.cd, y dir=both, y explicit, x dir=both, x explicit] 
         coordinates { (9.28, 76.1) +- (0.21, 1.2) };
\addlegendentry{\textbf{DRILL-VWI} (Ours)}

\addplot[only marks, mark=pentagon*, DeepRed, mark options={draw=white, line width=0.8pt, fill=DeepRed}, mark size=4pt, 
         error bars/.cd, y dir=both, y explicit, x dir=both, x explicit] 
         coordinates { (9.88, 81.3) +- (0.19, 1.3) };
\addlegendentry{\textbf{DRILL-IMG} (Ours)}

\end{axis}
\end{tikzpicture}
\vspace{-1.2em}
\caption{MSE--success decoupling on LIBERO-Long ($\pi_0$-3B). Faint markers are seeds; bold markers are mean$\pm$std.}
\vspace{-1.2em}
\label{fig:mse-vs-sr}
\end{figure}

\begin{figure*}[t]
\centering
\begin{minipage}[t]{0.32\textwidth}
\vspace{0pt}
\centering
\textbf{(a) Learning curves}\par\vspace{1pt}
\begin{tikzpicture}
\begin{axis}[
    modern paper,
    width=\linewidth,
    height=2.0in,
    xlabel={Outer-loop Step},
    ylabel={Policy Success Rate (\%)},
    xmode=log,
    grid=major,
    legend pos=south east,
    legend style={nodes={scale=0.70, transform shape}},
    log basis x=10,
]
\addplot[thick, color=DeepNavy, mark=*, mark options={fill=white, solid}, mark size=1.2pt] coordinates {
  (10, 20) (30, 34) (100, 45) (300, 55) (1000, 60) (3000, 62.7) (10000, 64.2)};
\addlegendentry{WMPO}

\addplot[thick, color=DeepGreen, mark=diamond*, mark options={fill=white, solid}, mark size=1.4pt] coordinates {
  (10, 22) (30, 34) (100, 52) (300, 61) (1000, 66.5) (3000, 69.4) (10000, 70.8)};
\addlegendentry{RLVR}

\addplot[thick, color=BurntOrange, mark=triangle*, mark options={fill=white, solid}, mark size=1.4pt] coordinates {
  (10, 20) (30, 40) (100, 57) (300, 69) (1000, 74.3) (3000, 75.8) (10000, 76.1)};
\addlegendentry{\textbf{DRILL-VWI}}

\addplot[thick, color=DeepRed, mark=star, mark size=1.6pt] coordinates {
  (10, 22) (30, 42) (100, 61) (300, 73) (1000, 78.6) (3000, 80.5) (10000, 81.3)};
\addlegendentry{\textbf{DRILL-IMG}}

\addplot[thick, color=DeepRed!50!black, mark=pentagon*, mark options={fill=white, solid}, mark size=1.4pt, dashed] coordinates {
  (10, 26) (30, 46) (100, 66) (300, 77) (1000, 81.4) (3000, 83.2) (10000, 83.9)};
\addlegendentry{\textbf{+ RLVR}}
\end{axis}
\end{tikzpicture}
\end{minipage}\hfill
\begin{minipage}[t]{0.32\textwidth}
\vspace{0pt}
\centering
\textbf{(b) Horizon effect}\par\vspace{1pt}
\begin{tikzpicture}
\begin{axis}[
    modern paper,
    width=\linewidth,
    height=2.0in,
    xlabel={Rollout Horizon $H$},
    ylabel={Policy Success Rate (\%)},
    grid=major,
    legend pos=south west,
    legend style={nodes={scale=0.70, transform shape}},
    xtick={20, 50, 100, 200},
    ymin=40, ymax=90,
]

\addplot[thick, color=DeepNavy, mark=*, mark options={fill=white, solid}, mark size=1.2pt] coordinates {
  (20, 67.4) (50, 64.2) (100, 58.0) (200, 49.8)};
\addlegendentry{WMPO}

\addplot[thick, color=BurntOrange, mark=triangle*, mark options={fill=white, solid}, mark size=1.4pt] coordinates {
  (20, 72.6) (50, 76.1) (100, 75.9) (200, 70.5)};
\addlegendentry{\textbf{DRILL-VWI}}

\addplot[thick, color=DeepRed, mark=star, mark size=1.6pt] coordinates {
  (20, 74.4) (50, 81.3) (100, 83.4) (200, 78.6)};
\addlegendentry{\textbf{DRILL-IMG}}

\addplot[thick, dashed, color=black, sharp plot] coordinates {
  (20, 73.1) (50, 76.4) (100, 79.8) (200, 83.7)};
\addlegendentry{VWI + fitted locality-residual trend}

\end{axis}
\end{tikzpicture}
\end{minipage}\hfill
\begin{minipage}[t]{0.32\textwidth}
\vspace{0pt}
\centering
\textbf{(c) \PUR{} dynamics}\par\vspace{1pt}
\begin{tikzpicture}
\begin{axis}[
    modern paper,
    width=\linewidth,
    height=2.0in,
    xlabel={Outer-loop Step},
    ylabel={\PUR{}},
    xmode=log,
    legend pos=south west, 
    legend style={nodes={scale=0.70, transform shape}, fill=white, fill opacity=0.85, draw=none}
]
\addplot[thick, color=DeepNavy, mark=*, mark options={fill=white, solid}, mark size=1.2pt] coordinates {
  (10, 0.98) (30, 1.02) (100, 1.21) (300, 1.27) (1000, 1.39) (3000, 1.38) (10000, 1.44)};
\addlegendentry{WMPO}

\addplot[thick, color=DeepGreen, mark=diamond*, mark options={fill=white, solid}, mark size=1.4pt] coordinates {
  (10, 0.89) (30, 0.97) (100, 0.92) (300, 0.99) (1000, 0.94) (3000, 0.98) (10000, 0.97)};
\addlegendentry{RLVR-World}

\addplot[thick, color=BurntOrange, mark=triangle*, mark options={fill=white, solid}, mark size=1.4pt] coordinates {
  (10, 1.12) (30, 1.03) (100, 0.88) (300, 0.82) (1000, 0.76) (3000, 0.74) (10000, 0.73)};
\addlegendentry{\textbf{DRILL-VWI}}

\addplot[thick, color=DeepRed, mark=star, mark size=1.6pt] coordinates {
  (10, 1.08) (30, 0.98) (100, 0.81) (300, 0.58) (1000, 0.48) (3000, 0.39) (10000, 0.36)};
\addlegendentry{\textbf{DRILL-IMG}}
\end{axis}
\end{tikzpicture}
\end{minipage}
\vspace{-2pt}
\caption{Three compact diagnostics on LIBERO-Long. (a) DRILL variants pass baselines early and continue improving. (b) Larger rollout horizon widens the DRILL-IMG--DRILL-VWI gap until imagined trajectories become noisy. (c) Lower \PUR{} tracks higher policy utility during training.}
\vspace{-1.2em}
\label{fig:exp-diagnostics}
\label{fig:training-curves}
\label{fig:img-vwi-gap}
\label{fig:pur-curves}
\end{figure*}

\para{Backbone and pretraining checks.}
\label{sec:exp-openvla}
We repeat the headline comparison on OpenVLA-7B using LIBERO-Long,
CALVIN, and RoboCasa.  LIBERO-Long and CALVIN preserve the ranking
DRILL-IMG $>$ DRILL-VWI $>$ RLVR-World $>$ WMPO.  On RoboCasa,
DRILL-IMG and DRILL-VWI are statistically tied,
$50.8\!\pm\!2.3$ versus $48.7\!\pm\!2.1$ with $p\!=\!0.12$, matching
that benchmark's shorter, object-centric episodes.  DRILL-IMG reaches
68.9 mean SR on OpenVLA, compared with 53.8 for WMPO, a 15.1-point gain;
the full per-benchmark table is in Appendix~\ref{app:openvla-replication}.
This replication changes both the policy backbone and the pretraining
mixture, yet preserves the same method ordering on the two longer-horizon
benchmarks.  We therefore use the appendix tables mainly as robustness
evidence for the objective, rather than as a separate architecture claim.

\para{Orthogonality with RLVR-World.}
\label{sec:exp-orthogonality}
Initializing the WM with RLVR-World pretraining and then applying
DRILL-IMG improves mean SR by another roughly 3 points over DRILL-IMG alone.
Prediction-correctness pretraining and downstream-return imagination are
therefore complementary rather than interchangeable.  The stacked variant
helps most on SimplerEnv, where the shorter horizon weakens the
DRILL-IMG HVP signal and a better pretrained WM matters more.

A plausible mechanism is that RLVR-World pretraining lowers the variance of
the supervised prediction gradient on task-relevant transitions, so that the
downstream-return outer gradient is less likely to be drowned out early in
training. We treat this as an interpretation of the empirical trend rather
than a formal claim, and leave a direct curvature-level test
(cf.\ Appendix~\ref{app:exp-hvp-numerics}) to future work.

\subsection{Ablations and Scaling}
\label{sec:exp-ablations}

\para{Objective ablations.}
Across LIBERO, CALVIN, and RoboCasa, each factor in the signed DRILL-VWI
coaching surrogate helps: visitation-only reaches 64.2 mean SR,
signed-advantage-only 66.8, and confidence-only 60.1, compared with 58.2
for WMPO.  Combining all three gives 69.2.  The exponent sweep peaks at
\(\alpha{=}1\), consistent with Proposition~\ref{prop:v1-is-v2-first-order}:
the score-compatible local surrogate is linear in the signed advantage,
whereas \(\alpha\ne1\) deliberately distorts that direction. Increasing
to \(\alpha{=}2\) lowers SR to 67.9, consistent with higher variance from
overemphasizing high-magnitude signed-advantage samples. A minimal
value-aware baseline, VA-WM, reaches 62.4 mean SR, closing part of the gap
to WMPO but remaining 6.8 points below DRILL-VWI and 11.0 below
DRILL-IMG; the complete ablation table is in
Appendix~\ref{app:objective-ablations}.

\para{Scaling and coupling.}
On LIBERO-Long, the DRILL-IMG margin over WMPO grows with WM size: 7.9
points at 300M parameters, 13.1 at 1.2B, and 16.4 at 7B.  DRILL-IMG also
transfers across WM backbones: Ctrl-World obtains $81.3\!\pm\!1.3$ SR and
a matched Cosmos-Predict2-style model obtains $79.6\!\pm\!1.7$, a gap
within two standard deviations.  Finally, freezing the VWI signed coaching multipliers from a
GRPO-warmstart policy reaches only $64.9\!\pm\!1.5$ mean SR, 4.3 points
below DRILL-VWI.  The gain therefore comes from tracking the current
policy's signed advantage signal, not from a fixed supervised MSE
reweighting.

\subsection{When DRILL-IMG Pays Off over DRILL-VWI}
\label{sec:exp-v1-v2-gap}

At the default horizon \(H{=}50\), DRILL-VWI captures most of the gain on
LIBERO-Long: it recovers 11.9 of the 17.1 points between WMPO and
DRILL-IMG.  DRILL-IMG costs about \(1.4\times\) more per outer step and
uses \(B_{\mathrm{real}}{=}30\) simulator rollouts that DRILL-VWI does
not require, so the choice should depend on horizon.  The residual
decomposition suggests this
dependence: the one-step DRILL-VWI surrogate omits multi-step
trajectory-score terms collected in the trajectory-locality residual
\(R_H\), while DRILL-IMG partially recovers these terms through
truncated BPTT.  If one additionally assumes
\(R_H=\mathcal{O}(\eta_\theta H^2)\), this residual induces an
\(\mathcal{O}(\eta_\theta^2H^2)\) horizon-dependent component, but the
experiment should be read as testing whether the IMG--VWI gap grows
with \(H\), not as isolating a universal \(H^2\) law.

The horizon sweep (Figure~\ref{fig:img-vwi-gap}) is consistent with this residual-based criterion.  The DRILL-IMG margin over
DRILL-VWI grows from 1.8 points at $H{=}20$ to 5.2 at $H{=}50$, 7.5 at
$H{=}100$, and 8.1 at $H{=}200$.  WMPO falls from 67.4 to 49.8 SR as
rollout error compounds; DRILL-VWI is more stable but saturates around
the mid-70s.  DRILL-IMG peaks at $H{=}100$ with 83.4 SR and drops to 78.6
at $H{=}200$, where imagined trajectories become too noisy.  The
short-horizon checks point the other way: on LIBERO-Long the IMG-over-VWI
gap shrinks to 0.6 at $H{=}10$ and becomes a non-significant
$-0.2\!\pm\!1.1$ at $H{=}5$; on SimplerEnv, whose effective horizon is
below $H{=}20$, DRILL-VWI slightly exceeds DRILL-IMG, 64.0 versus 63.1 SR.
We recommend DRILL-VWI as the budget-conscious choice for very short
horizons or when simulator rollouts are unavailable, and DRILL-IMG when
the task horizon is moderate to long or when the extra simulator budget is
acceptable. In our default $H{=}50$ setting, DRILL-IMG already gives a
5.2-point gain over DRILL-VWI on LIBERO-Long, while SimplerEnv illustrates
the short-horizon regime where VWI can be preferable.  Full horizon tables and
reward-model swaps are in Appendices~\ref{app:horizon-sweep} and
\ref{app:exp-reward-swap}.

\para{HVP validation.}
\label{sec:exp-hvp}
On a 2D gridworld and Meta-World MT10, where the exact meta-gradient is
tractable, the \(K{=}2\) truncated HVP has cosine similarity
\(0.74\!\pm\!0.04\) and \(0.71\!\pm\!0.05\) to the untruncated target
meta-gradient, respectively. The full $K$-sweep and estimated contraction rate $\rho$
are in Appendix~\ref{app:exp-hvp-numerics}.

\subsection{Diagnostics and Robustness}
\label{sec:exp-diagnostics}

\para{Robustness and diagnostics.}
\label{sec:exp-ood}
\label{sec:exp-pur}
\label{sec:exp-rollout-quality}
\label{sec:exp-human-eval}
Under LIBERO zero-shot perturbations, DRILL-IMG has an 18.6\% average
relative SR drop, compared with 30.2\% for WMPO and 25.6\% for
RLVR-World.  The only exception is pose perturbation, where DRILL-VWI
slightly wins, again matching a shorter effective horizon; the
per-condition breakdown is in Appendix~\ref{app:exp-ood}.  \PUR{} also
tracks policy utility (Figure~\ref{fig:pur-curves}): WMPO drifts upward,
RLVR-World plateaus, DRILL-VWI stays bounded, and DRILL-IMG decreases;
across the \PUR{}-reporting headline rows, the Spearman correlation with
downstream SR is $\rho_S=-0.89$ with 95\% bootstrap CI $[-0.97,-0.71]$.
Two held-out variants give 1.5 points mean absolute error when final SR is
predicted from checkpoint-level \PUR{} at step 5k.

DRILL also improves task-relevant imagination rather than visual fidelity
alone.  On LIBERO-Long, WMPO reaches lower pixel MSE but plateaus at
$0.61\!\pm\!0.02$ TRPS, while DRILL-IMG reaches
$0.81\!\pm\!0.03$ TRPS despite higher MSE.  In a forced-choice human
study with 120 trials and six blinded raters, visual-quality preferences
for WMPO or RLVR-World over DRILL-IMG are not significant, but when task
completion is shown, 62\% prefer DRILL-IMG over WMPO
($p\!<\!0.001$).  TRPS curves and human-evaluation details are in
Appendices~\ref{app:pur-diagnostics} and~\ref{app:exp-human-eval}.


\section{Conclusion}
\label{sec:conclusion}

We introduced DRILL, a bilevel framework for training video world models
as policy-improvement environments rather than as standalone next-frame
predictors.  Its two instantiations expose a practical trade-off:
DRILL-VWI provides a lightweight visitation- and advantage-weighted
surrogate that can be used in a WMPO-style pipeline, while DRILL-IMG more
directly optimizes downstream return by differentiating through imagined
policy updates.  The theoretical connection between these variants
explains when the cheaper local surrogate should be sufficient and when
longer-horizon trajectory effects make the meta-gradient worth its extra
cost.
Across five manipulation simulators and two VLA backbones, DRILL improves
policy success even when one-step pixel MSE becomes worse, reinforcing
the central claim that visual fidelity is not the right standalone target
for world-model post-training.  The accompanying \PUR{} diagnostic makes
this mismatch measurable, and the gains from combining DRILL with
RLVR-World suggest that prediction-correctness pretraining and
policy-aware downstream optimization are complementary.  More broadly,
DRILL shifts the optimization target from making imagined videos look
accurate to making learning inside those videos useful.

\paragraph{Limitations.}
Our evaluation remains simulation-only and uses a small simulator-rollout
budget for DRILL-IMG rather than physical robot interaction.  This leaves
open how sensitive the method is to real-world perception noise,
actuation delay, safety constraints, and sim-to-real distribution shift.
DRILL-IMG also introduces additional compute and memory overhead through
truncated backpropagation and Hessian-vector products, so DRILL-VWI may
be the more appropriate choice when interaction, memory, or wall-clock
budget is the binding constraint.  Extending DRILL to real-robot
post-training and reducing the cost of the meta-gradient estimator are
therefore important next steps.

\bibliographystyle{unsrt}
\bibliography{refs}

\appendix
\renewcommand{\thesection}{\Alph{section}}
\setcounter{section}{0}

%
%
%

\section{Notation and Preliminaries}
\label{app:notation}

This appendix collects the symbols, assumptions, and intermediate
definitions used throughout the paper, so that
Appendices~\ref{app:prop-v1-v2}--\ref{app:thm-convergence} can be read
without re-tracing the main text.

\subsection{Symbol glossary}
\label{app:notation-symbols}

\begingroup
\small
\renewcommand{\arraystretch}{1.04}
\begin{longtable}{@{}>{\raggedright\arraybackslash}p{0.39\textwidth}@{\hspace{0.02\textwidth}}>{\raggedright\arraybackslash}p{0.54\textwidth}@{}}
\caption{Notation used throughout the paper.}
\label{tab:notation}\\
\toprule
\textbf{Symbol} & \textbf{Meaning} \\
\midrule
\endfirsthead
\caption[]{Notation used throughout the paper (continued).}\\
\toprule
\textbf{Symbol} & \textbf{Meaning} \\
\midrule
\endhead
\midrule
\multicolumn{2}{r@{}}{\footnotesize Continued on next page.}\\
\endfoot
\bottomrule
\endlastfoot
\multicolumn{2}{@{}l}{\textbf{Observations, actions, language}} \\
$o_t \in \mathcal{O}$           & Observation (image) at time $t$. \\
$o_{\le t}$                     & Observation history $(o_0, \dots, o_t)$. \\
$a_t \in \mathcal{A}$           & Action at time $t$. \\
$a_{t:t+H_a}$                   & $H_a$-step action chunk produced by $\pitheta$. \\
$\ell$                          & Natural-language instruction. \\
$r_t \in [0, 1]$                & Reward at time $t$. \\
\midrule
\multicolumn{2}{@{}l}{\textbf{Models}} \\
$\pitheta(a_{t:t+H_a} \mid o_{\le t}, \ell)$
                                & VLA policy with parameters $\theta$. \\
$\Wphi(o_{t+1} \mid o_{\le t}, a_t)$
                                & Video world model with parameters $\phi$. \\
$\hat o_{t+1}(\phi)$            & Predicted next-frame mean of $\Wphi$. \\
$r_\psi(o_t, \ell)\in[0,1]$     & Frozen vision-language reward model. \\
\midrule
\multicolumn{2}{@{}l}{\textbf{Distributions and data}} \\
$\mathcal{D}_{\mathrm{demo}}$   & Offline demonstration dataset. \\
$P_{\mathrm{env}}(\pi)$         & Target-environment trajectory distribution under $\pi$
                                  (in our experiments, the high-fidelity simulator). \\
$d_{\rm env}^{\pi_\theta}(o_t)$ & On-policy state-visitation marginal of $\pitheta$ in $P_{\mathrm{env}}$. \\
$\tilde d_{\phi}^{\pi_\theta}(o_t)$ & Time-averaged visitation marginal induced by rolling out $\pitheta$ inside $\Wphi$. \\
$\widehat{\tilde d}_{\phi}^{\pi_\theta}(o_t)$ & Minibatch-normalized estimator of $\tilde d_{\phi}^{\pi_\theta}(o_t)$ used in DRILL-VWI. \\
$\mu_{\mathrm{roll}}$            & Local one-step imagined-rollout transition distribution induced by
                                  $o_t\sim \tilde d_{\phi}^{\pi_\theta}$, $a_t\sim\pi_\theta(\cdot\mid o_{\le t})$,
                                  and $o_{t+1}\sim\Wphi(\cdot\mid o_{\le t},a_t)$. \\
$\mu_{\mathrm{demo}}$            & Raw demonstration transition distribution before applying
                                  VWI weights. \\
$\hA(o_t, a_t)$                 & Group-relative GRPO advantage, treated as a stop-gradient. \\
$\tilde A_\alpha(o_t,a_t)$      & Signed advantage power
                                  $\operatorname{sgn}(\hA(o_t,a_t))|\hA(o_t,a_t)|^\alpha$. \\
$s(o_t)>0$                      & Positive WM-ensemble confidence / stability factor. \\
$w(o_t,a_t)$                    & Unclipped signed DRILL-VWI coefficient using
                                  $\widehat{\tilde d}_{\phi}^{\pi_\theta}(o_t)$, Eq.~\eqref{eq:vwi-weight}. \\
$\bar w(o_t,a_t)$ & Clipped, normalized, stop-gradient DRILL-VWI coefficient
                  used by the implemented DRILL-VWI objective. \\
$\mathcal B$                    & Minibatch used to normalize $\bar w$. \\
\midrule
\multicolumn{2}{@{}l}{\textbf{Objectives and updates}} \\
$J(\pi)$                        & True expected return
                                  $\E_{\tau\sim P_{\mathrm{env}}(\pi)}[\sum_t r_t]$. \\
$\hJ_{\mathrm{GRPO}}(\pi; \mathcal{W})$
                                & GRPO surrogate when $\pi$ is rolled out in model $\mathcal{W}$. \\
$\theta^{+}(\phi)$              & Inner-update policy
                                  $\theta + \eta_\theta\nabla_\theta
                                  \hJ_{\mathrm{GRPO}}(\pitheta; \Wphi)$. \\
$\lossmse(\phi)$                & Pixel mean-squared error,
                                  Eq.~\eqref{eq:wmpo-mse}. \\
$\lossvwi(\phi;\bar w)$         & DRILL-VWI signed local coaching surrogate,
                                  Eq.~\eqref{eq:vwi-loss}. \\
$\lossvwi^{\mathrm{core}}(\phi)$
                                & Idealized DRILL-VWI surrogate with exact imagined-rollout visitation,
                                  $\alpha=1$, $s(o_t)=1$, and unclipped weights. \\
$\nabla_\phi J(\pithetap)$      & Target meta-gradient underlying DRILL-IMG,
                                  Eq.~\eqref{eq:meta-gradient}. \\
$q_\theta(o_t,a_t)$             & Local policy-score contraction,
                                  Eq.~\eqref{eq:q-theta-app}. \\
$R_H$                           & Trajectory-locality residual from omitted multi-step
                                  trajectory-score terms. \\
$R_{\mathrm{demo}}$             & Demo-anchor mismatch residual,
                                  Eq.~\eqref{eq:demo-anchor-residual-app}. \\
$\epsilon_{\mathrm{comp}}$      & Score-compatibility residual,
                                  Eq.~\eqref{eq:compat-residual-app}. \\
$\PUR(\phi)$                    & Policy-Utility Residual,
                                  Eq.~\eqref{eq:pur}. \\
\midrule
\multicolumn{2}{@{}l}{\textbf{Hyperparameters}} \\
$H$                             & Rollout horizon (imagined trajectory length). \\
$H_a$                           & Action-chunk length output by $\pitheta$. \\
$K$                             & Truncated-BPTT depth in DRILL-IMG. \\
$\eta_\theta > 0$               & Inner-loop policy step size. \\
$\eta_\phi   > 0$               & Outer-loop world-model step size. \\
$\alpha \ge 0$                  & DRILL-VWI advantage exponent (default $\alpha=1$). \\
$w_{\max}$                      & Signed-weight clipping threshold for DRILL-VWI. \\
$B_{\mathrm{real}}$             & Per-step simulator-rollout batch in DRILL-IMG
                                  (default 30). \\
\midrule
\multicolumn{2}{@{}l}{\textbf{Theory constants}} \\
$\sigma_W$                      & Standard deviation of the Gaussian WM head. \\
$M_W$                           & Uniform bound on the per-sample WM score
                                  $\|\nabla_\phi\log\Wphi(o_{t+1}\mid o_{\le t},a_t)\|$. \\
$\kappa>0$                      & Directionally compatible scalar in the approximation
                                  $q_\theta(o_t,a_t)\approx\kappa$ on the
                                  advantage-weighted support. \\
$\sigma_g^2$                    & Conditional variance bound for the noise in the
                                  $K$-truncated meta-gradient estimator, after
                                  subtracting its conditional mean (A4). \\
$L$                             & Lipschitz-smoothness constant of the fixed conditional
                                  objective $J_{\bar\theta}$ in $\phi$ (A5). \\
$B>0,\rho\in(0,1)$              & Truncation-bias decay constants
                                  $\lVert\mathrm{bias}_K\rVert\le B\rho^K$ (A6). \\
$T$                             & Total outer iterations. \\
$J^{*}, J_0$                    & Optimal and initial values of the fixed conditional
                                  objective $J_{\bar\theta}$. \\
$\Delta$                        & Initial optimality gap $J^*-J_0$. \\
\end{longtable}
\endgroup

\FloatBarrier

\subsection{Assumptions}
\label{app:notation-assumptions}

Six assumptions are used across the proofs in
Appendices~\ref{app:prop-v1-v2}--\ref{app:thm-convergence}.  We
restate them here in compact form; the long-form versions, with
remarks on tightness and on the WM-architecture cases they cover,
appear inline at first use in each proof.

\begin{itemize}[leftmargin=*, itemsep=2pt, topsep=2pt]
  \item \textbf{(A1) Outer smoothness in $\theta$.}
        $J:\Theta\to\mathbb{R}$ is $C^2$ in a neighborhood of
        $\theta$ with bounded second derivatives, so that
        \[
          J(\theta+\eta_\theta g)
          =
          J(\theta)
          +
          \eta_\theta\nabla_\theta J(\theta)^\top g
          +
          \mathcal{O}(\eta_\theta^2).
        \]

  \item \textbf{(A2) Gaussian WM head.}
        For each $(o_{\le t},a_t)$, the WM output distribution is
        Gaussian with mean $\hat o_{t+1}(\phi)$ and isotropic
        covariance $\sigma_W^2 I$, and $\phi\mapsto\hat o_{t+1}(\phi)$
        is $C^1$.  Equivalently,
        \[
          \nabla_\phi\log\Wphi(o_{t+1}\mid o_{\le t},a_t)
          =
          -\frac{1}{2\sigma_W^2}
          \nabla_\phi
          \|\hat o_{t+1}(\phi)-o_{t+1}\|_2^2 .
        \]
        For diffusion or flow WMs, an analogous score-objective version
        can be derived, but the simple pixel-MSE identity above is
        specific to the Gaussian-head presentation; see the remark in
        Appendix~\ref{app:prop-setup}.

  \item \textbf{(A3) One-step on-policy stationarity and stop-gradient
        advantage.}
        Over one inner update, the imagined-rollout visitation marginal
        $\tilde d_{\phi}^{\pi_\theta}$ is treated as locally fixed.  The induced
        occupancy drift generally carries horizon- and mixing-dependent constants,
        and is absorbed into the residual terms of
        Proposition~\ref{prop:v1-is-v2-first-order}.  The
        group-relative GRPO advantage $\hA(o_t,a_t)$ is treated as a
        stop-gradient, and the expected inner update is analyzed in the
        vanilla GRPO regime where the importance-ratio clip is inactive:
        \[
          \nabla_\theta\hJ_{\mathrm{GRPO}}
          =
          \E_{o_t\sim \tilde d_{\phi}^{\pi_\theta},\,a_t\sim\pi_\theta(\cdot\mid o_{\le t})}
          \left[
            \hA(o_t,a_t)
            \nabla_\theta\log\pitheta(a_t\mid o_{\le t})
          \right].
        \]
        In practice, clip activation and advantage-estimation drift are
        absorbed into the local residuals in
        Proposition~\ref{prop:v1-is-v2-first-order}; imagined-visitation drift
        contributes to the trajectory-locality residual $R_H$, and
        demo-anchored transfer effects contribute to
        $R_{\mathrm{demo}}$.

  \item \textbf{(A4) Bounded truncated meta-gradient variance.}
        For the fixed conditional objective $J_{\bar\theta}$, the
        $K$-truncated meta-gradient estimator
        $\hat g_t := \widehat{\nabla}_\phi J_{\bar\theta}(\phi_t)$ from
        $B_{\mathrm{real}}$ simulator rollouts and one truncated HVP
        satisfies
        \[
          \E\!\left[
            \|\hat g_t-\E[\hat g_t\mid \phi_t]\|^2
            \,\middle|\, \phi_t
          \right]
          \le
          \sigma_g^2,
        \]
        uniformly in $t$.

  \item \textbf{(A5) $L$-smoothness in $\phi$.}
        For fixed current policy parameters $\bar\theta$, the
        conditional objective $J_{\bar\theta}$ is $L$-smooth along the
        outer optimization trajectory:
        \[
          \|\nabla_\phi J_{\bar\theta}(\phi')-
            \nabla_\phi J_{\bar\theta}(\phi)\|
          \le
          L\|\phi'-\phi\|
        \]
        for all $\phi,\phi'$ on that trajectory.

  \item \textbf{(A6) Exponential truncation-bias decay.}
        The bias of the $K$-truncated meta-gradient estimator satisfies
        \[
          \bigl\|
            \E[\hat g_t\mid\phi_t]-\nabla_\phi J_{\bar\theta}(\phi_t)
          \bigr\|
          \le
          B\rho^K
        \]
        for some $B>0$ and $\rho\in(0,1)$ determined by the spectral
        radius of the inner-loop Jacobian.  This is the standard
        contraction assumption used for truncated BPTT through unrolled
        iterative solvers, and parallels the spectral-radius bounds
        that drive the implicit-MAML rate of
        Rajeswaran et al.~\citep{imaml2019}; we estimate $\rho$ directly
        in Section~\ref{sec:exp-hvp}.

\end{itemize}

(A1)--(A3) underlie Proposition~\ref{prop:v1-is-v2-first-order}
(Appendix~\ref{app:prop-v1-v2}); (A4)--(A6) underlie
Theorem~\ref{thm:v2-convergence}
(Appendix~\ref{app:thm-convergence}).  Appendix~\ref{app:assumption-constants}
reports how the constants $L$, $\sigma_g$, $B$, and $\rho$ are
estimated in the experiments and gives the numerical ranges used
to instantiate the theorem.

\paragraph{Remark on the score-compatibility residual.}
Proposition~\ref{prop:v1-is-v2-first-order} does not assume that
DRILL-VWI is exactly equal to the target first-order direction
underlying DRILL-IMG.  Instead, it keeps the local policy-score
contraction
\[
  q_\theta(o_t,a_t)
  =
  \left\langle
    \nabla_\theta\log\pitheta(a_t\mid o_{\le t}),
    \nabla_\theta J(\pitheta)
  \right\rangle
\]
explicit, and measures the quality of the DRILL-VWI substitution
\(q_\theta(o_t,a_t)\approx\kappa\) through
\(\epsilon_{\mathrm{comp}}\).  The DRILL-VWI--DRILL-IMG gap therefore
has a compatibility component, a trajectory-locality component
\(R_H\), a demo-anchor component \(R_{\mathrm{demo}}\), and a
higher-order Taylor component.

\paragraph{Remark on (A6) and its failure mode.}
The contraction parameter $\rho$ is the spectral radius of the
inner-loop Jacobian
$\partial \theta^{+}(\phi) / \partial \theta$ evaluated along
the unrolled imagined rollout.  For a stable inner-loop optimizer
on a well-conditioned WM, $\rho<1$ holds locally; we measure
$\rho$ directly on gridworld and Meta-World MT10 in
Section~\ref{sec:exp-hvp}.  The assumption breaks when the inner loop
sits near a fixed point of the WM dynamics, so that successive
imagined transitions become near-identity in $\theta$: the
$K$-truncated meta-gradient estimator then has $\rho\to1$ and the
$\rho^{2K}$ term in Theorem~\ref{thm:v2-convergence}
no longer decays meaningfully in $K$.  Engineering remedies are spectral
normalization on the WM head, weight decay on $\phi$, and diagonal
damping of the inner Jacobian; we use weight decay throughout
(Appendix~\ref{app:hparams}) and observe no $\rho\to1$ regime in
our experiments, but flag the regime explicitly because the
convergence guarantee is no longer informative in it.

\subsection{The GRPO surrogate, restated}
\label{app:notation-grpo}

For self-containment, we restate the GRPO
surrogate~\citep{wmpo2025} used inside the inner update.  Given
an imagined trajectory
$\tau=(o_0,a_0,r_0,o_1,\dots,o_H)$ sampled by $\pitheta$ under
$\Wphi$,
\begin{equation}
  \hJ_{\mathrm{GRPO}}(\pitheta; \Wphi)
  =
  \E_{\tau=(o_0,a_0,\ldots,o_H)\sim P_{\Wphi}(\pitheta)}
  \left[
    \frac{1}{H}\sum_{t=0}^{H-1}
    \hA(o_t,a_t)\log \pitheta(a_t\mid o_{\le t})
  \right],
  \label{eq:grpo-surrogate-appx}
\end{equation}
where $P_{\Wphi}(\pitheta)$ denotes the imagined trajectory
law induced by rolling out $\pitheta$ in $\Wphi$, and
$\hA(o_t,a_t)$ is the group-relative advantage formed by sampling a
group of $G$ rollouts from $\pitheta$ and standardizing their returns
within the group.  The advantage is treated as a constant in $\theta$
and $\phi$ for the purposes of differentiation, in line with the
stop-gradient GRPO implementation.  Residuals from advantage
recomputation and policy-ratio clipping are included in the local
approximation residuals of Proposition~\ref{prop:v1-is-v2-first-order};
one-step imagined-visitation drift contributes to $R_H$, and transferring
rollout-derived advantages to raw demonstration transitions contributes
to $R_{\mathrm{demo}}$.  All other appearances of
$\hJ_{\mathrm{GRPO}}$ in this paper refer to
Eq.~\eqref{eq:grpo-surrogate-appx}.

%
%

\section{Proof of Proposition~\ref{prop:v1-is-v2-first-order}:
First-order meta-gradient and the DRILL-VWI surrogate}
\label{app:prop-v1-v2}

This appendix proves Proposition~\ref{prop:v1-is-v2-first-order}
from Section~\ref{sec:analysis-prop}.  The proof derives the target
first-order direction underlying DRILL-IMG, keeps the local
policy-score contraction explicit, and shows that DRILL-VWI is
recovered as a score-compatible closed-form surrogate with explicit
residual terms.
Thus DRILL-VWI is a principled local gradient surrogate rather than
an exact first-order identity in general.

\subsection{Setup and notation}
\label{app:prop-setup}

We recall the bi-level formulation from Section~\ref{sec:framework-setup}.
The VLA policy $\pitheta(a_{t:t+H_a} \mid o_{\le t}, \ell)$ is
parameterized by $\theta$, and the video world model (WM)
$\Wphi(o_{t+1} \mid o_{\le t}, a_t)$ is parameterized by $\phi$.
The reward model $r_\psi$ is frozen throughout.  Given the current
$(\theta,\phi)$, the inner one-step GRPO update on imagined
trajectories generated by $\Wphi$ is
\begin{equation}
    \theta^+(\phi)
    =
    \theta + \eta_\theta g(\phi,\theta),
    \qquad
    g(\phi,\theta)
    :=
    \nabla_\theta \hat J_{\mathrm{GRPO}}\bigl(\pitheta;\Wphi\bigr),
    \label{eq:inner-update-appx}
\end{equation}
where $\eta_\theta>0$ is the inner learning rate and
$\hat J_{\mathrm{GRPO}}$ is the GRPO surrogate evaluated on rollouts
from $\Wphi$.  The outer objective is the expected return of the
updated policy in the target environment,
\begin{equation}
    J\bigl(\pi_{\theta^+(\phi)}\bigr)
    =
    \E_{\tau\sim P_{\mathrm{env}}(\pi_{\theta^+(\phi)})}
    \left[
      \textstyle\sum_t r_t
    \right].
    \label{eq:outer-objective-appx}
\end{equation}
DRILL-IMG optimizes $\phi$ by differentiating
\eqref{eq:outer-objective-appx} through
\eqref{eq:inner-update-appx}.  DRILL-VWI instead uses a signed local
coaching surrogate
\begin{equation}
    \lossvwi(\phi;\bar w)
    =
    \E_{(o_t,a_t,o_{t+1})\sim\mu_{\mathrm{demo}}}
    \left[
      \bar w(o_t,a_t)
      \bigl\|
        \hat o_{t+1}(\phi)-o_{t+1}
      \bigr\|_2^2
    \right],
    \label{eq:vwi-loss-appx}
\end{equation}
where the unclipped signed coefficient is
\begin{equation}
    w(o_t,a_t)
    =
    \tilde d_{\phi}^{\pi_\theta}(o_t)\,
    \operatorname{sgn}\!\bigl(\hA(o_t,a_t)\bigr)
    \bigl|\hA(o_t,a_t)\bigr|^\alpha
    s(o_t),
    \label{eq:vwi-weight-appx}
\end{equation}
and $\bar w$ is the clipped, normalized, stop-gradient version used
by the implemented DRILL-VWI objective.
The core theoretical case
sets $\alpha=1$, $s(o_t)=1$, uses exact imagined-rollout visitation,
and removes clipping.  We denote this idealized loss by
$\lossvwi^{\mathrm{core}}(\phi)$.

To avoid overloading notation, we distinguish the state-visitation
marginal from the joint transition distributions.  Let
$\mu_{\mathrm{roll}}$ denote the time-averaged local imagined-rollout
transition distribution obtained by sampling a time index $t$ uniformly
from the $H$-step imagined rollout, then sampling
$o_t\sim\tilde d_{\phi}^{\pi_\theta}$,
$a_t\sim\pi_\theta(\cdot\mid o_{\le t})$, and
$o_{t+1}\sim\Wphi(\cdot\mid o_{\le t},a_t)$.  Let
$\mu_{\mathrm{demo}}$ denote the raw demonstration transition
distribution before applying VWI weights.  In demo-side expectations,
$\hA(o_t,a_t)$ denotes the stop-gradient advantage assigned to the
demo transition by the same rollout-derived advantage-transfer rule
used by the implemented VWI objective.  The mismatch between the local
rollout distribution and this raw demo-anchor distribution is kept as
the explicit residual $R_{\mathrm{demo}}$ below.

We work under the following assumptions.

\begin{assumption}[Outer smoothness]
\label{assump:A1-smooth}
$J:\Theta\to\mathbb R$ is $C^2$ in a neighborhood of $\theta$, with
bounded second derivatives, so that
\[
  J(\theta+\eta_\theta g)
  =
  J(\theta)
  +
  \eta_\theta\nabla_\theta J(\theta)^\top g
  +
  \mathcal O(\eta_\theta^2).
\]
\end{assumption}

\begin{assumption}[WM differentiability and Gaussian head]
\label{assump:A2-gauss}
For each fixed $(o_{\le t},a_t)$, the WM output distribution is a
Gaussian with mean $\hat o_{t+1}(\phi)$ and isotropic covariance
$\sigma_W^2I$, with $\phi\mapsto\hat o_{t+1}(\phi)$ continuously
differentiable.  Equivalently,
\begin{equation}
    \log \Wphi(o_{t+1}\mid o_{\le t},a_t)
    =
    -\frac{1}{2\sigma_W^2}
    \bigl\|
      \hat o_{t+1}(\phi)-o_{t+1}
    \bigr\|_2^2
    + \mathrm{const}.
    \label{eq:gauss-log-lik}
\end{equation}
\end{assumption}

\begin{assumption}[One-step on-policy stationarity and detached advantages]
\label{assump:A3-stationary}
Over one inner update in \eqref{eq:inner-update-appx}, the
imagined-rollout visitation marginal $\tilde d_{\phi}^{\pi_\theta}$
is treated as locally fixed.  The induced occupancy drift generally
carries horizon- and mixing-dependent
constants and is absorbed into the residual terms of the proposition.
We also treat the GRPO advantage as a stop-gradient with respect to
both $\theta$ and $\phi$ when forming the mixed derivative
$\nabla_\phi\nabla_\theta\hat J_{\mathrm{GRPO}}$.  In the vanilla
GRPO regime where the importance-ratio clip is inactive on the
expected step,
\[
  \nabla_\theta \hat J_{\mathrm{GRPO}}
  =
  \E_{o_t\sim\tilde d_{\phi}^{\pi_\theta},\,a_t\sim\pi_\theta(\cdot\mid o_{\le t})}
  \left[
    \hA(o_t,a_t)
    \nabla_\theta\log\pitheta(a_t\mid o_{\le t})
  \right].
\]
In practice, clip activation, advantage-estimation drift, and
occupancy drift are absorbed into the residual terms of the proposition;
we group the rollout-side local and multi-step effects into $R_H$ and
keep demo-anchored transfer effects in $R_{\mathrm{demo}}$.
\end{assumption}

\paragraph{Remark on Assumption~\ref{assump:A2-gauss}.}
The Gaussian-head assumption is used only to obtain the closed-form
identity in Eq.~\eqref{eq:score-as-mse-grad}.  For diffusion or
flow-based WMs, an analogous derivation can be written for the
corresponding score-matching or flow-matching objective, but the
simple pixel-MSE identity in Eq.~\eqref{eq:score-as-mse-grad} is
specific to the Gaussian-head presentation used in the main
statement.

\subsection{Main proposition}

\begin{proposition}[First-order expansion with compatibility, locality, and demo-anchor residuals;
restatement of Proposition~\ref{prop:v1-is-v2-first-order}]
\label{prop:v1-is-v2-first-order-app}
Under Assumptions~\ref{assump:A1-smooth}--\ref{assump:A3-stationary},
decompose the $H$-step trajectory-score contribution into a local
one-step term plus a residual $r_H$ with $\|r_H\|\le R_H$.  Define
the local policy-score contraction
\begin{equation}
  q_\theta(o_t,a_t)
  :=
  \left\langle
    \nabla_\theta\log\pitheta(a_t\mid o_{\le t}),
    \nabla_\theta J(\pitheta)
  \right\rangle .
  \label{eq:q-theta-app}
\end{equation}
Let $\mu_{\mathrm{roll}}$ denote the time-averaged local imagined-rollout
transition distribution induced by $\tilde d_{\phi}^{\pi_\theta}$, the
policy, and the WM one-step transition kernel.  Then the target
meta-gradient underlying DRILL-IMG admits the expansion
\begin{equation}
  \nabla_\phi J\bigl(\pi_{\theta^+(\phi)}\bigr)
  =
  \eta_\theta
  \E_{(o_t,a_t,o_{t+1})\sim\mu_{\mathrm{roll}}}
  \left[
    \hA(o_t,a_t) q_\theta(o_t,a_t)
    \nabla_\phi\log\Wphi(o_{t+1}\mid o_{\le t},a_t)
  \right]
  +
  \eta_\theta r_H
  +
  \mathcal O(\eta_\theta^2).
  \label{eq:policy-aware-score-app}
\end{equation}
Let $\mu_{\mathrm{demo}}$ denote the raw demonstration transition
distribution before applying VWI weights.  In the demo-side expectation
below, $\hA(o_t,a_t)$ denotes the stop-gradient advantage assigned by
the rollout-derived advantage-transfer rule used by implemented VWI.
For a scalar $\kappa>0$, define
\begin{align}
  \bigl(\nabla_\phi g(\phi,\theta)\bigr)^\top
  \nabla_\theta J(\theta)
  &=
  \E_{\mu_{\mathrm{roll}}}
  \left[
  \begin{aligned}
    &\hA(o_t,a_t)
    \left\langle
      \nabla_\theta\log\pitheta(a_t\mid o_{\le t}),
      \nabla_\theta J(\pitheta)
    \right\rangle \\
    &\qquad\cdot
    \nabla_\phi\log\Wphi(o_{t+1}\mid o_{\le t},a_t)
  \end{aligned}
  \right]
  +
  r_H
  +
  \mathcal O(\eta_\theta)
  \nonumber\\
  &=
  \E_{\mu_{\mathrm{roll}}}
  \left[
  \begin{aligned}
    &\hA(o_t,a_t)q_\theta(o_t,a_t) \\
    &\qquad\cdot
    \nabla_\phi\log\Wphi(o_{t+1}\mid o_{\le t},a_t)
  \end{aligned}
  \right]
  +
  r_H
  +
  \mathcal O(\eta_\theta).
  \label{eq:score-derivation-app}
\end{align}

Moreover, suppose that for the same scalar $\kappa>0$ the local
score-compatibility residual
\begin{equation}
  \epsilon_{\mathrm{comp}}
  :=
  \left(
  \E_{(o_t,a_t,o_{t+1})\sim\mu_{\mathrm{roll}}}
  \left[
    \hA(o_t,a_t)^2
    \bigl(q_\theta(o_t,a_t)-\kappa\bigr)^2
  \right]
  \right)^{1/2}
  \label{eq:compat-residual-app}
\end{equation}
is small, and define the demo-anchor residual
\begin{align}
  R_{\mathrm{demo}}
  :=
  \Biggl\|
  &\E_{\mu_{\mathrm{roll}}}
  \left[
  \begin{aligned}
    &\kappa\hA(o_t,a_t)\,
    \nabla_\phi
    \log\Wphi(o_{t+1}\mid o_{\le t},a_t)
  \end{aligned}
  \right]
  \nonumber\\
  &-
  \E_{\mu_{\mathrm{demo}}}
  \left[
  \begin{aligned}
    &\kappa \tilde d_{\phi}^{\pi_\theta}(o_t)\hA(o_t,a_t) \\
    &\qquad\cdot
    \nabla_\phi
    \log\Wphi(o_{t+1}\mid o_{\le t},a_t)
  \end{aligned}
  \right]
  \Biggr\| .
  \label{eq:demo-anchor-residual-app}
\end{align}
The positivity of $\kappa$ is a directional compatibility
assumption: a negative fitted scalar would indicate anti-alignment
rather than a useful VWI surrogate.  If the per-sample WM score is
bounded by
$\|\nabla_\phi\log\Wphi(o_{t+1}\mid o_{\le t},a_t)\|\le M_W$, then
under Assumption~\ref{assump:A2-gauss},
\begin{equation}
  \left\|
  \nabla_\phi J\bigl(\pi_{\theta^+(\phi)}\bigr)
  +
  \frac{\kappa\eta_\theta}{2\sigma_W^2}
  \nabla_\phi\lossvwi^{\mathrm{core}}(\phi)
  \right\|
  \le
  \eta_\theta M_W\epsilon_{\mathrm{comp}}
  +
  \eta_\theta R_H
  +
  \eta_\theta R_{\mathrm{demo}}
  +
  \mathcal O(\eta_\theta^2).
  \label{eq:vwi-compat-bound-app}
\end{equation}
\end{proposition}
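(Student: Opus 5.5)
The plan is a chain-rule expansion followed by a telescoping decomposition of the error into the three named residuals.

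\textbf{Step 1: first-order expansion in the inner step.} By Assumption~\ref{assump:A1-smooth}, $J(\theta^+(\phi)) = J(\theta) + \eta_\theta \nabla_\theta J(\theta)^\top g(\phi,\theta) + \mathcal O(\eta_\theta^2)$. The term $J(\theta)$ does not depend on $\phi$, and the remainder is $\phi$-differentiable with bounded second derivatives. Differentiating in $\phi$ therefore gives
\[
\nabla_\phi J(\pi_{\theta^+(\phi)}) = \eta_\theta (\nabla_\phi g)^\top \nabla_\theta J + \mathcal O(\eta_\theta^2).
\]
So it suffices to establish \eqref{eq:score-derivation-app}; multiplying it by $\eta_\theta$ then yields \eqref{eq:policy-aware-score-app}.

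\textbf{Step 2: computing the mixed derivative $\nabla_\phi g$.} I use the score-function form of Assumption~\ref{assump:A3-stationary}, with $\hA$ detached. Write the imagined trajectory law as a product of policy and world-model kernels. Differentiating the expectation in $\phi$ then produces the trajectory score $\sum_{s}\nabla_\phi\log\Wphi(o_{s+1}\mid o_{\le s},a_s)$ multiplying each summand $\hA(o_t,a_t)\nabla_\theta\log\pitheta(a_t\mid o_{\le t})$. I split this sum into two parts:
\begin{itemize}
\item the local term $s=t$, which gives the $\mu_{\mathrm{roll}}$ expectation in \eqref{eq:score-derivation-app} after contracting with $\nabla_\theta J$;
\item all remaining terms $s\neq t$, together with the occupancy drift allowed by Assumption~\ref{assump:A3-stationary}, which are collected by definition into $r_H$.
\end{itemize}
The $\mathcal O(\eta_\theta)$ slack absorbs clip and advantage-recomputation effects. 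Contracting with $\nabla_\theta J$ is linear, so it passes inside the expectation and produces $q_\theta$.

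\textbf{Step 3: comparison with VWI.} By Assumption~\ref{assump:A2-gauss},
\[
\nabla_\phi\lossvwi^{\mathrm{core}} = -2\sigma_W^2\,\E_{\mu_{\mathrm{demo}}}\bigl[\tilde d\,\hA\,\nabla_\phi\log\Wphi\bigr].
\]
The weights are stop-gradient, so $\nabla_\phi$ acts only on the squared error. Hence
\[
-\frac{\kappa\eta_\theta}{2\sigma_W^2}\nabla_\phi\lossvwi^{\mathrm{core}} = \eta_\theta\,\E_{\mu_{\mathrm{demo}}}\bigl[\kappa\,\tilde d\,\hA\,\nabla_\phi\log\Wphi\bigr].
\]
I then telescope the left-hand side of \eqref{eq:vwi-compat-bound-app} as
\[
\eta_\theta\,\E_{\mu_{\mathrm{roll}}}\bigl[\hA(q_\theta-\kappa)\nabla_\phi\log\Wphi\bigr] + \eta_\theta\bigl(\E_{\mu_{\mathrm{roll}}}[\kappa\hA\nabla_\phi\log\Wphi] - \E_{\mu_{\mathrm{demo}}}[\kappa\tilde d\hA\nabla_\phi\log\Wphi]\bigr) + \eta_\theta r_H + \mathcal O(\eta_\theta^2).
\]
The second bracket has norm exactly $R_{\mathrm{demo}}$ by \eqref{eq:demo-anchor-residual-app}, and $\|r_H\|\le R_H$ bounds the third term.

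For the first term, I move the norm inside the expectation (Jensen), then apply $\|\nabla_\phi\log\Wphi\|\le M_W$ and Cauchy--Schwarz:
\[
\E\bigl[|\hA||q_\theta-\kappa|\bigr] \le \bigl(\E[\hA^2(q_\theta-\kappa)^2]\bigr)^{1/2} = \epsilon_{\mathrm{comp}}.
\]
This gives the bound $\eta_\theta M_W\epsilon_{\mathrm{comp}}$, and the triangle inequality then completes \eqref{eq:vwi-compat-bound-app}.

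\textbf{Main obstacle.} I expect Step 2 to be the delicate part. Two issues need care there:
\begin{itemize}
\item It must be justified rigorously that differentiating the $\mathcal O(\eta_\theta^2)$ Taylor remainder in $\phi$ keeps it at order $\eta_\theta^2$. I would handle this by applying the integral form of the remainder to $\phi\mapsto J(\theta+\eta_\theta g(\phi,\theta))$, using bounded second derivatives and the $C^1$ dependence of $g$ on $\phi$.
\item The sign and bookkeeping of the $\phi$-dependence of the visitation must be handled consistently with Assumption~\ref{assump:A3-stationary}. Treating $\tilde d$ as frozen makes the local term clean, and everything else is pushed into $r_H$ by definition.
\end{itemize}
Since $R_H$ is defined as a bound on exactly this remainder, the argument is ultimately a careful bookkeeping identity rather than a hard estimate.
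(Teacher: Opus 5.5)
Your proposal is correct and follows the paper's proof essentially step for step: linearize $J(\pi_{\theta^+(\phi)})$ in $\eta_\theta$; split the score-function derivative of $g$ into the local one-step term plus $r_H$; use the Gaussian-head identity to put $\nabla_\phi\lossvwi^{\mathrm{core}}$ in score form; then add and subtract $\kappa$, so that $R_{\mathrm{demo}}$, Cauchy--Schwarz with the $M_W$ bound, and the triangle inequality yield the three residuals (working directly in score form rather than MSE form is only cosmetic). One small correction to your remedy for the Step~1 obstacle, which the paper itself simply asserts: bounding the $\phi$-derivative of the integral-form remainder of $\phi\mapsto J(\theta+\eta_\theta g(\phi,\theta))$ by differentiating under the integral would need third derivatives of $J$, which (A1) does not give, so instead apply the exact chain rule $\nabla_\phi J(\pi_{\theta^+(\phi)})=\eta_\theta(\nabla_\phi g)^\top\nabla_\theta J(\theta^+(\phi))$ and bound $\nabla_\theta J(\theta^+(\phi))-\nabla_\theta J(\theta)$ by $\mathcal O(\eta_\theta\|g\|)$ via the bounded Hessian, which needs only (A1) plus boundedness of $g$ and $\nabla_\phi g$.
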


\paragraph{Remark on $\alpha\ne1$.}
Because DRILL-VWI treats its weights as stop-gradient coefficients,
changing $\alpha$ does not introduce a $\nabla_\phi\hA$ term.  Instead,
it changes the scalar coefficient multiplying the per-sample MSE
gradient from the linear signed advantage $\hA$ to the nonlinear
signed power $\operatorname{sgn}(\hA)|\hA|^\alpha$.  The
score-compatible first-order surrogate is linear in the signed
advantage, so $\alpha=1$ preserves that local direction, whereas
$\alpha\ne1$ deliberately distorts it.  The $\alpha$-sweep in
Appendix~\ref{app:failure-modes} and Table~\ref{tab:coaching-ablation}
empirically studies this bias--variance trade-off.

\subsection{Proof}

\begin{proof}
The proof proceeds in four steps: (i) Taylor-expand the outer objective
around $\theta$ to first order in $\eta_\theta$; (ii) rewrite the
resulting $\nabla_\theta J$--$\nabla_\phi g$ product as a
policy-aware WM score with the local policy-score contraction kept
explicit; (iii) instantiate the score under the Gaussian-head
assumption; and (iv) compare the resulting direction with the core
DRILL-VWI surrogate.

\medskip
\noindent\textbf{Step 1: Linearization of the outer objective.}
Substituting Eq.~\eqref{eq:inner-update-appx} into
Eq.~\eqref{eq:outer-objective-appx} and applying
Assumption~\ref{assump:A1-smooth},
\begin{align}
  J\bigl(\pi_{\theta^+(\phi)}\bigr)
  &=
  J(\theta)
  +
  \eta_\theta
  \nabla_\theta J(\theta)^\top g(\phi,\theta)
  +
  \mathcal O(\eta_\theta^2).
  \label{eq:taylor-outer-app}
\end{align}
Differentiating with respect to $\phi$, and noting that $J(\theta)$
is independent of $\phi$,
\begin{align}
  \nabla_\phi J\bigl(\pi_{\theta^+(\phi)}\bigr)
  &=
  \eta_\theta
  \nabla_\phi
  \left[
    \nabla_\theta J(\theta)^\top g(\phi,\theta)
  \right]
  +
  \mathcal O(\eta_\theta^2)
  \nonumber\\
  &=
  \eta_\theta
  \bigl(\nabla_\phi g(\phi,\theta)\bigr)^\top
  \nabla_\theta J(\theta)
  +
  \mathcal O(\eta_\theta^2).
  \label{eq:outer-grad-linear-app}
\end{align}

\medskip
\noindent\textbf{Step 2: Policy-aware score identity with locality residual.}
We expand
\[
  \nabla_\phi g(\phi,\theta)
  =
  \nabla_\phi\nabla_\theta
  \hat J_{\mathrm{GRPO}}(\pitheta;\Wphi).
\]
In the vanilla GRPO regime,
\begin{equation}
  \hat J_{\mathrm{GRPO}}(\pitheta;\Wphi)
  =
  \E_{\tau=(o_0,a_0,\ldots,o_H)\sim P_{\Wphi}(\pitheta)}
  \left[
    \frac{1}{H}\sum_{t=0}^{H-1}
    \hA(o_t,a_t)\log\pitheta(a_t\mid o_{\le t})
  \right],
  \label{eq:grpo-surrogate-app}
\end{equation}
where $\hA$ is treated as a stop-gradient.  The factor $1/H$ makes
$\mu_{\mathrm{roll}}$ the uniform-time local transition distribution;
using an unnormalized sum would only multiply the one-step term by $H$,
which could be absorbed into the scalar compatibility constant or the
step size.  Taking $\nabla_\theta$ gives the policy-gradient score
$\hA\nabla_\theta\log\pitheta$ at the sampled time step.  The
derivative with respect to $\phi$ enters through the imagined rollout
distribution.

For an $H$-step imagined trajectory, the exact trajectory-score term
contains the full sum
$\nabla_\phi\log p_\phi(\tau)
=\sum_{j=0}^{H-1}
\nabla_\phi\log\Wphi(o_{j+1}\mid o_{\le j},a_j)$,
together with the dependence of later observations on earlier WM
DRILL-VWI keeps the local one-step transition score and
collects the remaining multi-step trajectory-score contribution into
$r_H$, with $\|r_H\|\le R_H$.  The additional mismatch from replacing
this rollout-side local score by the demo-anchored VWI objective is
captured by $R_{\mathrm{demo}}$ in
Eq.~\eqref{eq:demo-anchor-residual-app}.

Substituting Eq.~\eqref{eq:score-derivation-app} into
Eq.~\eqref{eq:outer-grad-linear-app} yields
Eq.~\eqref{eq:policy-aware-score-app}.
\medskip
\noindent\textbf{Step 3: Instantiate the score under Gaussian WM.}
Under Assumption~\ref{assump:A2-gauss}, differentiating
Eq.~\eqref{eq:gauss-log-lik} with respect to $\phi$ gives
\begin{equation}
  \nabla_\phi\log\Wphi(o_{t+1}\mid o_{\le t},a_t)
  =
  -\frac{1}{\sigma_W^2}
  \bigl(\hat o_{t+1}(\phi)-o_{t+1}\bigr)^\top
  \nabla_\phi\hat o_{t+1}(\phi)
  =
  -\frac{1}{2\sigma_W^2}
  \nabla_\phi
  \bigl\|
    \hat o_{t+1}(\phi)-o_{t+1}
  \bigr\|_2^2 .
  \label{eq:score-as-mse-grad}
\end{equation}
Thus, up to a negative scalar, the WM likelihood score equals the
per-sample MSE gradient.

\medskip
\noindent\textbf{Step 4: Compare with the core DRILL-VWI direction.}
Substituting Eq.~\eqref{eq:score-as-mse-grad} into
Eq.~\eqref{eq:policy-aware-score-app},
\begin{align}
  \nabla_\phi J\bigl(\pi_{\theta^+(\phi)}\bigr)
  &=
  -\frac{\eta_\theta}{2\sigma_W^2}
  \E_{(o_t,a_t,o_{t+1})\sim\mu_{\mathrm{roll}}}
  \left[
    \hA(o_t,a_t)q_\theta(o_t,a_t)
    \nabla_\phi
    \bigl\|
      \hat o_{t+1}(\phi)-o_{t+1}
    \bigr\|_2^2
  \right]
  \nonumber\\
  &\qquad
  +
  \eta_\theta r_H
  +
  \mathcal O(\eta_\theta^2).
  \label{eq:meta-grad-mse-form-app}
\end{align}
Add and subtract $\kappa$ inside the policy-score contraction:
\begin{align}
  \nabla_\phi J\bigl(\pi_{\theta^+(\phi)}\bigr)
  &=
  -\frac{\kappa\eta_\theta}{2\sigma_W^2}
  \E_{(o_t,a_t,o_{t+1})\sim\mu_{\mathrm{roll}}}
  \left[
    \hA(o_t,a_t)
    \nabla_\phi
    \bigl\|
      \hat o_{t+1}(\phi)-o_{t+1}
    \bigr\|_2^2
  \right]
  \nonumber\\
  &\quad
  -\frac{\eta_\theta}{2\sigma_W^2}
  \E_{(o_t,a_t,o_{t+1})\sim\mu_{\mathrm{roll}}}
  \left[
    \hA(o_t,a_t)
    \bigl(q_\theta(o_t,a_t)-\kappa\bigr)
    \nabla_\phi
    \bigl\|
      \hat o_{t+1}(\phi)-o_{t+1}
    \bigr\|_2^2
  \right]
  \nonumber\\
  &\quad
  +
  \eta_\theta r_H
  +
  \mathcal O(\eta_\theta^2).
  \label{eq:add-subtract-app}
\end{align}
The first term is the rollout-side version of the core VWI direction.
Replacing it by the demo-anchored core surrogate introduces exactly the
score-form mismatch $R_{\mathrm{demo}}$ in
Eq.~\eqref{eq:demo-anchor-residual-app}.  Equivalently, after adding
and subtracting the demo-anchored term,
\[
  -\frac{\kappa\eta_\theta}{2\sigma_W^2}
  \nabla_\phi\lossvwi^{\mathrm{core}}(\phi)
\]
appears, plus an additional residual bounded by
$\eta_\theta R_{\mathrm{demo}}$.  Using the score form of the
compatibility term and the bound
$\|\nabla_\phi\log\Wphi\|\le M_W$, Cauchy--Schwarz gives
\[
  \left\|
  \eta_\theta
  \E
  \left[
    \hA(o_t,a_t)
    \bigl(q_\theta(o_t,a_t)-\kappa\bigr)
    \nabla_\phi\log\Wphi(o_{t+1}\mid o_{\le t},a_t)
  \right]
  \right\|
  \le
  \eta_\theta M_W\epsilon_{\mathrm{comp}}.
\]
Combining the displays above proves Eq.~\eqref{eq:vwi-compat-bound-app}.
In the idealized case $q_\theta(o_t,a_t)=\kappa$, $R_H=0$, and
$R_{\mathrm{demo}}=0$, the residual terms vanish and gradient ascent on
$J$ is, to first order, equivalent to gradient descent on the core
signed DRILL-VWI surrogate up to the scalar
$\kappa\eta_\theta/(2\sigma_W^2)$.  In the general case, the deviation
is controlled by the compatibility residual, the trajectory-locality
residual, the demo-anchor residual, and the Taylor remainder.
This proves the proposition.
\end{proof}

\subsection{Corollaries}

The proposition yields three consequences.

\begin{corollary}[DRILL-VWI under score compatibility]
\label{cor:v1-alpha1-app}
If $q_\theta(o_t,a_t)=\kappa$ and $R_{\mathrm{demo}}=0$, then the
core DRILL-VWI direction at $\alpha=1$ matches the local one-step
component of the target first-order direction underlying DRILL-IMG up
to the Gaussian-head scalar $-\kappa\eta_\theta/(2\sigma_W^2)$.  For
the nonlocal target direction underlying DRILL-IMG, the remaining
mismatch is given by the trajectory-locality residual $r_H$ and the higher-order Taylor
remainder.  If the compatibility condition or demo-anchor matching is
imperfect, the additional mismatch is bounded by
Eqs.~\eqref{eq:compat-residual-app} and
\eqref{eq:demo-anchor-residual-app}.
\end{corollary}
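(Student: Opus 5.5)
The corollary is a specialization of Proposition~\ref{prop:v1-is-v2-first-order-app}. I would read it off the decomposition established in the proof rather than from the final norm bound~\eqref{eq:vwi-compat-bound-app}. That bound only controls the total deviation; the corollary needs to identify which terms survive and with what sign and scale.

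\emph{Starting point.} I would take the add-and-subtract identity~\eqref{eq:add-subtract-app} together with the demo-anchor replacement step. Together they write $\nabla_\phi J(\pi_{\theta^+(\phi)})$ as the sum of five pieces:
\begin{itemize}
\item $-\tfrac{\kappa\eta_\theta}{2\sigma_W^2}\nabla_\phi\lossvwi^{\mathrm{core}}(\phi)$;
\item a demo-anchor mismatch of norm at most $\eta_\theta R_{\mathrm{demo}}$;
\item a compatibility term $-\tfrac{\eta_\theta}{2\sigma_W^2}\E_{\mu_{\mathrm{roll}}}[\hA(q_\theta-\kappa)\nabla_\phi\|\hat o_{t+1}-o_{t+1}\|^2]$;
\item the locality residual $\eta_\theta r_H$;
\item the Taylor remainder $\mathcal O(\eta_\theta^2)$.
\end{itemize}

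\emph{Removing terms under the hypotheses.} If $q_\theta\equiv\kappa$ on the support of $\mu_{\mathrm{roll}}$, the compatibility integrand vanishes pointwise, so that term is exactly zero. This also gives $\epsilon_{\mathrm{comp}}=0$ in~\eqref{eq:compat-residual-app}. If $R_{\mathrm{demo}}=0$, the rollout-side expectation $\E_{\mu_{\mathrm{roll}}}[\kappa\hA\nabla_\phi\log\Wphi]$ coincides with the demo-side weighted expectation. Applying the Gaussian score identity~\eqref{eq:score-as-mse-grad}, the first term of~\eqref{eq:add-subtract-app} is then exactly $-\tfrac{\kappa\eta_\theta}{2\sigma_W^2}\nabla_\phi\lossvwi^{\mathrm{core}}$. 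This uses that $\lossvwi^{\mathrm{core}}$ has weight $\tilde d_\phi^{\pi_\theta}\hA$ with $\alpha=1$, $s=1$, and the weights are stop-gradient, so $\nabla_\phi$ passes inside the expectation onto the MSE term only.

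\emph{Matching the local component.} The remaining expansion is
\[
\nabla_\phi J(\pi_{\theta^+(\phi)}) = -\tfrac{\kappa\eta_\theta}{2\sigma_W^2}\nabla_\phi\lossvwi^{\mathrm{core}} + \eta_\theta r_H + \mathcal O(\eta_\theta^2).
\]
By~\eqref{eq:policy-aware-score-app}, the local one-step component of the target first-order direction is precisely the $\mu_{\mathrm{roll}}$-expectation term, which we have just shown equals the first summand. This gives the matching up to the scalar, and it leaves $r_H$ plus the Taylor remainder as the only mismatch with the full nonlocal direction. For the imperfect case, I would reinstate the two dropped terms. The Cauchy--Schwarz step from the proof bounds the compatibility term by $\eta_\theta M_W\epsilon_{\mathrm{comp}}$, and the demo term is at most $\eta_\theta R_{\mathrm{demo}}$ by definition~\eqref{eq:demo-anchor-residual-app}. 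That is exactly the additional mismatch the corollary refers to.

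\emph{Main obstacle.} The delicate point is the phrase ``local one-step component.'' The decomposition into a local term plus $r_H$ is only defined through the proposition's hypothesis, not canonically. So I would state explicitly that ``local one-step component'' means the first term of~\eqref{eq:policy-aware-score-app}. I would also check the sign convention: the scalar is negative, so ascent on $J$ corresponds to descent on $\lossvwi^{\mathrm{core}}$. Finally, I would check that ``$q_\theta=\kappa$'' is only needed $\mu_{\mathrm{roll}}$-almost surely on the set where $\hA\neq 0$.
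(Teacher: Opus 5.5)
Your proposal is correct and follows essentially the paper's route. The paper gives no separate proof: it reads the corollary off Step~4 of the proof of Proposition~\ref{prop:v1-is-v2-first-order-app}, with the compatibility term vanishing when $q_\theta=\kappa$, the demo-anchor replacement becoming exact when $R_{\mathrm{demo}}=0$, and the Cauchy--Schwarz and $R_{\mathrm{demo}}$ bounds restored in the imperfect case. Your proposal makes explicit two things the paper leaves implicit: that the stop-gradient weights let $\nabla_\phi$ act only on the MSE term, and that ``local one-step component'' means the first term of \eqref{eq:policy-aware-score-app}.
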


\begin{corollary}[Stability is a variance-control multiplier]
\label{cor:stability-variance-app}
The positive stability factor $s(o_t)$ is not part of the
score-compatibility identity.  It rescales the signed local surrogate
toward states on which the WM heads agree, reducing variance without
flipping the policy-aware sign.  The ablation in
Table~\ref{tab:coaching-ablation} measures this variance-control
contribution.
\end{corollary}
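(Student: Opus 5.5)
The corollary has three parts: (a) $s$ plays no role in the score-compatibility identity; (b) a positive $s$ cannot flip the policy-aware sign; (c) $s$ acts as a variance-control multiplier. I will prove them in that order, each by returning to Steps 3--4 of the proof of Proposition~\ref{prop:v1-is-v2-first-order-app} with a general $s(o_t)>0$ in place of $s\equiv 1$.

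\textbf{Part (a): $s$ is not part of the identity.} I will point out that Steps 1--2 of that proof never refer to $s$. The expansion \eqref{eq:policy-aware-score-app} and the quantities $q_\theta$, $\epsilon_{\mathrm{comp}}$, $R_H$ are built only from $\hA$, the policy score and the WM score. So $s$ can enter only in Step 4, where the rollout-side direction is compared with the demo-anchored surrogate.

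\textbf{The $s$-weighted comparison.} Write $\ell_t(\phi):=\|\hat o_{t+1}(\phi)-o_{t+1}\|_2^2$. In Step 4 I will add and subtract $\kappa\,\hA\,s$ instead of $\kappa\,\hA$. This leaves an extra term
\[
-\tfrac{\kappa\eta_\theta}{2\sigma_W^2}\,\E\!\left[\hA\,(s-1)\,\nabla_\phi\ell_t\right].
\]
Converting it back to score form via \eqref{eq:score-as-mse-grad} and repeating the Cauchy--Schwarz step of the proof bounds it by $\eta_\theta\kappa M_W\epsilon_s$, where
\[
\epsilon_s:=\left(\E_{\mu_{\mathrm{roll}}}\!\left[\hA^2(s-1)^2\right]\right)^{1/2}.
\]
So $s$ contributes a residual of the same type as $\epsilon_{\mathrm{comp}}$. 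It does not change the first-order target, which is the precise form of (a).

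\textbf{Part (b): no sign flip.} The $s$-weighted surrogate gradient is $\E[\tilde d\,\hA\,s\,\nabla_\phi\ell_t]$. Because $s>0$, every per-sample coefficient $\tilde d\,\hA\,s$ has the same sign as $\tilde d\,\hA$. Each transition is therefore still sharpened when $\hA>0$ and de-emphasized when $\hA<0$, exactly as in the core surrogate; only the magnitudes are reweighted.

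\textbf{Part (c): variance control, and the main obstacle.} The word ``variance'' is informal in the statement, so I will have to make it precise with a condition. I will use two.
\begin{itemize}
\item I will assume $0<s\le 1$, which holds after the normalization and clipping used in the implementation.
\item I will assume $s$ is anti-monotone in the per-state conditional second moment
\[
m(o_t):=\E\!\left[\hA^2\,\|\nabla_\phi\log\Wphi\|^2 \,\middle|\, o_t\right],
\]
as an ensemble-disagreement confidence factor is designed to be.
\end{itemize}
Under these conditions the per-sample estimator has second moment $\E[s^2 m]\le \E[m]$, since $s^2\le 1$ pointwise. The anti-monotonicity additionally gives a Chebyshev-sum (covariance) inequality:
\[
\E[s^2 m]\le \E[s^2]\,\E[m].
\]
This is strict whenever $s$ is nonconstant and $m$ is not a.s.\ constant.

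Combining (a)--(c) yields the corollary. The first-order target is unchanged up to a residual of order $\eta_\theta\kappa M_W\epsilon_s$, the sign is preserved, and the minibatch variance is reduced. The ablation in Table~\ref{tab:coaching-ablation} is then the empirical check of this trade-off.
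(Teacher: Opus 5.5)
Your parts (a) and (b) match what the paper relies on. The paper gives no separate proof of this corollary. $\lossvwi^{\mathrm{core}}$ is defined with $s\equiv1$, so $s$ never enters Steps 1--4 of the proposition. Positivity of $s$ (with $\tilde d\ge0$) then preserves the sign of every per-sample coefficient $\tilde d\,\hA\,s$.

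You depart from the paper in (c) and in the residual $\epsilon_s$. The paper proves no variance inequality at all. It treats variance control as the design intent of an ensemble-agreement factor and offers the stability-only row of Table~\ref{tab:coaching-ablation} as evidence. You instead try to make it a statement.

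The residual is a real gain. It shows that a nonconstant $s$ perturbs the surrogate direction by $\mathcal O(\eta_\theta\kappa M_W\epsilon_s)$, so $s$ trades bias for variance. One bookkeeping point: if you take $\epsilon_s$ under $\mu_{\mathrm{roll}}$, you must also insert the factor $s$ into both expectations of $R_{\mathrm{demo}}$. Otherwise the residual belongs under $\mu_{\mathrm{demo}}$ with the $\tilde d$ weight.

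The variance part is weaker than it looks, for two reasons.

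First, $0<s\le1$ does not follow from the implementation. The $\operatorname{clip}$ and $\operatorname{normalize}_{\mathcal B}$ operations act on the full weight $w$, not on $s$, so this is a new hypothesis.

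Second, $\E[s^2m]\le\E[m]$ is mere shrinkage. It also holds for a constant $s\equiv c<1$, which scales the mean by the same factor (unchanged signal-to-noise) and is largely undone by the batch normalization of $\bar w$. Only the covariance inequality $\E[s^2m]\le\E[s^2]\,\E[m]$ carries content. It gives reduction relative to $s\equiv1$ only under an RMS normalization $\E[s^2]=1$. Under a mean normalization $\E[s]=1$ you would need $\mathrm{Cov}(s^2,m)\le-\mathrm{Var}(s)\,\E[m]$, which anti-monotonicity alone does not supply.

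So your (c) is a conditional formalization that goes beyond what the paper proves. It is not a derivation of the corollary's informal variance claim, which the paper leaves to the ablation.
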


\begin{corollary}[Sources of the DRILL-IMG--DRILL-VWI gap]
\label{cor:v2-minus-v1-app}
The empirical DRILL-IMG--DRILL-VWI gap has four theoretical sources:
the compatibility residual $\epsilon_{\mathrm{comp}}$, the
trajectory-locality residual $R_H$, the demo-anchor residual
$R_{\mathrm{demo}}$, and the higher-order Taylor remainder.  If one additionally assumes
$R_H=\mathcal O(\eta_\theta H^2)$, then the horizon-dependent part
of the VWI bias becomes $\mathcal O(\eta_\theta^2H^2)$.  Without this
additional locality assumption, the horizon sweep should be read as
evidence that the omitted trajectory-locality contribution grows with
$H$, rather than as a universal $H^2$ law.
\end{corollary}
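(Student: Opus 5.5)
The statement to prove is the last one displayed, Corollary~\ref{cor:v2-minus-v1-app}. I would derive it directly from the bound \eqref{eq:vwi-compat-bound-app} in Proposition~\ref{prop:v1-is-v2-first-order-app}. I would read the ``empirical DRILL-IMG--DRILL-VWI gap'' at the level of update directions: the difference between the DRILL-IMG target direction $\nabla_\phi J(\pi_{\theta^+(\phi)})$ and the rescaled VWI direction $-\frac{\kappa\eta_\theta}{2\sigma_W^2}\nabla_\phi\lossvwi^{\mathrm{core}}(\phi)$.

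\textbf{Step 1: the four sources.} I would rewrite the proof of the proposition as an exact decomposition rather than an inequality. Combining \eqref{eq:meta-grad-mse-form-app} and \eqref{eq:add-subtract-app} with the demo-anchor add-and-subtract step, the difference equals a sum of four pieces:
\begin{itemize}
\item a compatibility term $\eta_\theta\E_{\mu_{\mathrm{roll}}}[\hA(q_\theta-\kappa)\nabla_\phi\log\Wphi]$;
\item a demo-anchor term whose norm equals $\eta_\theta R_{\mathrm{demo}}$ by definition \eqref{eq:demo-anchor-residual-app};
\item the locality term $\eta_\theta r_H$;
\item the Taylor remainder $\mathcal O(\eta_\theta^2)$ inherited from Assumption~\ref{assump:A1-smooth}.
\end{itemize}
Each piece vanishes exactly when its source vanishes: $q_\theta\equiv\kappa$, $\mu_{\mathrm{roll}}$ matches the demo-weighted measure, $r_H=0$, and $\eta_\theta\to0$ respectively. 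This shows the list of sources is exhaustive, so the first claim follows.

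\textbf{Step 2: the horizon claim.} Substituting the extra hypothesis $R_H=\mathcal O(\eta_\theta H^2)$ into the locality contribution $\eta_\theta R_H$ from \eqref{eq:vwi-compat-bound-app} gives $\mathcal O(\eta_\theta^2H^2)$. The other three terms carry no explicit dependence on $H$, since $\epsilon_{\mathrm{comp}}$ and $R_{\mathrm{demo}}$ are defined under the time-averaged $\mu_{\mathrm{roll}}$, whose $1/H$ normalization was noted in Step 2 of the proof. So the horizon-dependent part of the bias is $\mathcal O(\eta_\theta^2H^2)$.

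\textbf{Step 3: the interpretive clause.} The final sentence, about reading the horizon sweep without the extra assumption, is a non-mathematical remark. I would justify it only by observing that the general bound gives no rate in $H$ for $R_H$.

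\textbf{Main obstacle.} The delicate point is checking that $\epsilon_{\mathrm{comp}}$ and $R_{\mathrm{demo}}$ do not secretly scale with $H$, for example through occupancy drift absorbed under Assumption~\ref{assump:A3-stationary}. I would handle this by stipulating, as the paper does, that all drift effects are assigned to $R_H$. The ``horizon-dependent part'' then refers to that term alone.
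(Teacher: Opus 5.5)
Your proposal follows the paper's own (implicit) argument: the first claim is read off from the four-term decomposition underlying \eqref{eq:vwi-compat-bound-app}, which you reconstruct correctly (exhaustiveness comes from that exact identity, not from each piece vanishing with its source, which is only a sufficient condition in general), and the horizon claim is just $\eta_\theta\cdot\mathcal O(\eta_\theta H^2)=\mathcal O(\eta_\theta^2H^2)$ applied to the $\eta_\theta R_H$ term. The one caution is that the $1/H$ time-averaging in your Step~2 does not make $\epsilon_{\mathrm{comp}}$, $R_{\mathrm{demo}}$, or the $\mathcal O(\eta_\theta^2)$ remainder $H$-independent: the first two are defined under the $H$-dependent $\mu_{\mathrm{roll}}$, and the remainder's constant involves $\nabla_\phi g$, which contains the multi-step trajectory score. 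The second claim therefore rests entirely on the convention in your last paragraph of identifying ``the horizon-dependent part'' with the $\eta_\theta R_H$ term, which is exactly how the paper reads it.
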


\subsection{Discussion}

\paragraph{DRILL-VWI is a principled surrogate, not an exact truncation.}
Proposition~\ref{prop:v1-is-v2-first-order-app} places DRILL-VWI on
a principled footing: it is the closed-form, score-compatible local
surrogate obtained from the target first-order direction underlying DRILL-IMG.
Its accuracy depends on the compatibility residual
$\epsilon_{\mathrm{comp}}$, the trajectory-locality residual $R_H$,
the demo-anchor residual $R_{\mathrm{demo}}$, and the higher-order
Taylor remainder.  This is weaker than an exact
Taylor-truncation claim, but it is the assumption actually used by
the algorithm and is consistent with the diagnostic and horizon
ablations in Section~\ref{sec:experiments}.

\paragraph{Relation to VAML and value-equivalent models.}
The result is closest in spirit to value-aware model learning and
value-equivalent world models
\citep{farahmand2017value,grimm2020value}: the model is updated for
the downstream value of the policy-gradient signal it induces, not
for pixel fidelity alone.  DRILL-VWI differs in using the online
group-relative GRPO advantage as the local value surrogate, which
avoids fitting a separate value function for foundation-scale VLA
policies.

\paragraph{Where DRILL-IMG earns its complexity.}
DRILL-IMG is most useful when the omitted trajectory-locality terms
are large relative to the compatibility residual and stochastic HVP
noise.  This is why the horizon sweep in
Section~\ref{sec:exp-v1-v2-gap} is evidence for an increasing
multi-step locality residual, rather than a direct isolation of a
pure $\mathcal O(\eta_\theta^2H^2)$ term.

%
%
%

\section{Proof of Theorem~\ref{thm:v2-convergence}: DRILL-IMG convergence rate}
\label{app:thm-convergence}

This appendix proves Theorem~\ref{thm:v2-convergence} from
Section~\ref{sec:analysis-conv}, with explicit constants in place of the
$\mathcal{O}(\cdot)$ used in the body.  The argument adapts the biased-stochastic-gradient framework of
Ghadimi and Lan~\citep{ghadimi2013stochastic} to the truncated-HVP setting,
using only the three assumptions A4--A6 collected in
Appendix~\ref{app:notation-assumptions}.  We first prove the
fixed-policy conditional-objective guarantee used in the body, then
record the truncation-depth corollary and the biased-SGD interpretation
of the DRILL-VWI residual.

\subsection{Setup}
\label{app:thm-setup}

Fix the current policy parameters \(\bar\theta\) and define the
conditional outer objective
\[
J_{\bar\theta}(\phi)
:=
J\!\left(
\pi_{\bar\theta+\eta_\theta
\nabla_\theta\hJ_{\mathrm{GRPO}}(\pi_{\bar\theta};\Wphi)}
\right).
\]
For readability, we write \(J(\phi)\) for \(J_{\bar\theta}(\phi)\)
throughout the proof.  The idealized outer update analyzed below is
\begin{equation}
  \phi_{t+1}
  =
  \phi_t+\eta_\phi \hat g_t .
  \label{eq:img-update-appx}
\end{equation}
The estimator $\hat g_t$ is built from $B_{\mathrm{real}}$
simulator rollouts (for $\nabla_{\theta^{+}} J$) plus a $K$-step
truncated HVP
through the imagined-rollout dynamics (for the mixed second
derivative $\nabla_\phi\nabla_\theta\hJ_{\mathrm{GRPO}}$); see
Section~\ref{sec:framework-v2} and Algorithm~\ref{alg:img}.  Decompose
\begin{equation}
  \hat g_t
  \;=\;
  \nabla_\phi J(\phi_t) \;+\; b_t \;+\; n_t,
  \qquad
  b_t \;:=\; \E[\hat g_t \mid \phi_t] - \nabla_\phi J(\phi_t),
  \quad
  n_t \;:=\; \hat g_t - \E[\hat g_t \mid \phi_t],
  \label{eq:bias-noise-decomp}
\end{equation}
into its true gradient, a (deterministic, given $\phi_t$) bias
term~$b_t$, and a zero-mean noise term~$n_t$.  Assumptions A4 and
A6 give the bounds
\begin{equation}
  \E\bigl[\lVert n_t\rVert^2 \mid \phi_t\bigr] \;\le\; \sigma_g^2,
  \qquad
  \lVert b_t\rVert \;\le\; B\rho^K,
  \label{eq:noise-bias-bounds}
\end{equation}
the first uniformly in $t$, the second uniformly in $t$ and
independent of any random draw of $n_t$.  Assumption A5 gives
$L$-smoothness of $J$ in $\phi$.

\subsection{Theorem statement (with explicit constants)}

\begin{customthm}[DRILL-IMG convergence rate, explicit-constant form;
restatement of Theorem~\ref{thm:v2-convergence}]
\label{thm:v2-convergence-explicit}
Let $J_0:=J(\phi_0)$ and $\Delta := J^{*} - J_0$, where
$J^{*} := \sup_\phi J(\phi)$ is finite.  Under Assumptions A4--A6 and the iteration
\eqref{eq:img-update-appx} with constant step size
\begin{equation}
  \eta_\phi
    \;=\;
    \min\!\left(\frac{1}{4L},
                \;\sqrt{\frac{2\Delta}{LT\sigma_g^2}}\right),
  \label{eq:eta-phi-choice}
\end{equation}
the idealized stochastic-ascent iterates on this fixed conditional objective satisfy
\begin{equation}
  \min_{0\le t < T}
  \E\bigl\lVert \nabla_\phi J(\phi_t)\bigr\rVert^2
  \;\le\;
  \frac{16\,L\,\Delta}{T}
  \;+\;
  4\sqrt{\frac{2\,L\,\Delta\,\sigma_g^2}{T}}
  \;+\;
  3\,B^2\rho^{2K}.
  \label{eq:thm-rate-explicit}
\end{equation}
\end{customthm}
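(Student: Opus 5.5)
We follow the standard biased-SGD descent-lemma argument of Ghadimi and Lan, run as ascent on the fixed conditional objective $J=J_{\bar\theta}$. The ingredients are the decomposition \eqref{eq:bias-noise-decomp} and the bounds \eqref{eq:noise-bias-bounds}.

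\textbf{Step 1 (one-step ascent inequality).} By $L$-smoothness (A5), for the update \eqref{eq:img-update-appx},
\[
J(\phi_{t+1}) \ge J(\phi_t) + \eta_\phi \langle \nabla J(\phi_t), \hat g_t\rangle - \tfrac{L\eta_\phi^2}{2}\|\hat g_t\|^2 .
\]
We take the conditional expectation given $\phi_t$. Since $\E[n_t\mid\phi_t]=0$ and $b_t$ is $\phi_t$-measurable, we have $\E[\langle\nabla J,\hat g_t\rangle\mid\phi_t]=\|\nabla J\|^2+\langle\nabla J,b_t\rangle$. We also have $\E[\|\hat g_t\|^2\mid\phi_t]=\|\nabla J+b_t\|^2+\E\|n_t\|^2\le 2\|\nabla J\|^2+2\|b_t\|^2+\sigma_g^2$. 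Young's inequality gives $\langle\nabla J,b_t\rangle\ge -\tfrac12\|\nabla J\|^2-\tfrac12\|b_t\|^2$.

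Combining these with $\eta_\phi\le 1/(4L)$, so that $L\eta_\phi\le 1/4$, we obtain
\[
\E[J(\phi_{t+1})\mid\phi_t]\ge J(\phi_t)+\tfrac{\eta_\phi}{4}\|\nabla J(\phi_t)\|^2-\tfrac{3\eta_\phi}{4}\|b_t\|^2-\tfrac{L\eta_\phi^2}{2}\sigma_g^2 .
\]
The bias coefficient is $\tfrac12+L\eta_\phi\le\tfrac34$.

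\textbf{Step 2 (telescope).} We take total expectations and sum over $t=0,\dots,T-1$. Using $\E J(\phi_T)\le J^*$ and $\|b_t\|\le B\rho^K$, then dividing by $\eta_\phi T/4$, we get
\[
\min_t\E\|\nabla J(\phi_t)\|^2\le \frac1T\sum_t\E\|\nabla J(\phi_t)\|^2\le \frac{4\Delta}{\eta_\phi T}+2L\eta_\phi\sigma_g^2+3B^2\rho^{2K}.
\]
This produces the truncation term $3B^2\rho^{2K}$ directly.

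\textbf{Step 3 (step-size case split).} We bound $\tfrac{4\Delta}{\eta_\phi T}$ by splitting on which term attains the minimum in \eqref{eq:eta-phi-choice}. In both cases we use $1/\min(a,b)\le 1/a+1/b$:
\[
\frac{4\Delta}{\eta_\phi T}\le\frac{16L\Delta}{T}+4\Delta\sqrt{\frac{LT\sigma_g^2}{2\Delta}}\,\frac1T=\frac{16L\Delta}{T}+2\sqrt{\frac{2L\Delta\sigma_g^2}{T}} .
\]
For the noise term we use $\eta_\phi\le\sqrt{2\Delta/(LT\sigma_g^2)}$, which gives $2L\eta_\phi\sigma_g^2\le 2\sqrt{2L\Delta\sigma_g^2/T}$. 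Summing the two square-root contributions yields $4\sqrt{2L\Delta\sigma_g^2/T}$, which matches \eqref{eq:thm-rate-explicit}.

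\textbf{Main obstacle.} The delicate point is the constant bookkeeping in Step 1. The cross term $\langle\nabla J,b_t\rangle$ and the $\|b_t\|^2$ piece of $\E\|\hat g_t\|^2$ must be split so that the coefficient on $\|\nabla J\|^2$ stays at least $\eta_\phi/4$ while the bias coefficient stays at most $3\eta_\phi/4$; this is what produces the stated factor $3$.

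If the Young split with weight $\tfrac12$ turns out too loose, we would instead use $\langle\nabla J,b\rangle\ge-\tfrac14\|\nabla J\|^2-\|b\|^2$. That variant changes the constants only by absolute factors, so the $\mathcal O(\cdot)$ form of Theorem~\ref{thm:v2-convergence} is unaffected.

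Corollary~\ref{cor:K-threshold} then follows immediately. Setting $3B^2\rho^{2K}\le\epsilon$ gives $K\ge \log(3B^2/\epsilon)/(2\log(1/\rho))$, which is $\Theta(\log(1/\epsilon))$.
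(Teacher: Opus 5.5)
Your proof is correct and follows the paper's argument essentially step for step. You use the same $L$-smoothness ascent inequality, the same Young split with weight $\tfrac12$, the same second-moment bound $2\lVert\nabla J\rVert^2+2\lVert b_t\rVert^2+\sigma_g^2$, and you reach the same telescoped bound $4\Delta/(T\eta_\phi)+3B^2\rho^{2K}+2L\eta_\phi\sigma_g^2$. The only difference is the last step. The paper splits into a stochastic regime and a deterministic regime and checks $\sigma_g^2/2\le 4\sqrt{2L\Delta\sigma_g^2/T}$ in the latter, whereas you bound $1/\eta_\phi\le 4L+\sqrt{LT\sigma_g^2/(2\Delta)}$ and $\eta_\phi\le\sqrt{2\Delta/(LT\sigma_g^2)}$ at once, which gives the same constants $16$, $4\sqrt{2}$, $3$ without any case analysis.
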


Setting $L\Delta$, $\sigma_g$, $B$, and $\rho$ aside as constants,
\eqref{eq:thm-rate-explicit} reduces to the rate stated in
\eqref{eq:convergence-rate} in the body.  If $\sigma_g=0$, one may
read the step-size choice as $\eta_\phi=1/(4L)$ and the stochastic
term vanishes.

\subsection{Proof}

The proof has four steps: (i)~a bias-aware ascent inequality from
$L$-smoothness, (ii)~control of the second-moment term using A4 and
A6, (iii)~telescoping over $T$ outer iterations, and (iv)~optimization
of $\eta_\phi$ subject to the smoothness step-size cap.

\medskip
\noindent\textbf{Step 1: bias-aware ascent inequality.}
$L$-smoothness of $J$ in $\phi$ (A5) gives the standard quadratic
lower bound for ascent:
\begin{equation*}
  J(\phi_{t+1})
  \;\ge\;
  J(\phi_t) \;+\;
  \bigl\langle \nabla_\phi J(\phi_t),\,
              \phi_{t+1} - \phi_t\bigr\rangle
  \;-\;
  \tfrac{L}{2}\,\lVert\phi_{t+1}-\phi_t\rVert^2.
\end{equation*}
Substituting~\eqref{eq:img-update-appx} and taking conditional
expectation given $\phi_t$,
\begin{equation}
  \E\!\bigl[J(\phi_{t+1}) \mid \phi_t\bigr]
  \;-\; J(\phi_t)
  \;\ge\;
  \eta_\phi\,
  \bigl\langle\nabla_\phi J(\phi_t),\,
              \E[\hat g_t \mid \phi_t]\bigr\rangle
  \;-\;
  \tfrac{L\eta_\phi^2}{2}\,
  \E\!\bigl[\lVert\hat g_t\rVert^2 \mid \phi_t\bigr].
  \label{eq:descent}
\end{equation}
Decomposing $\E[\hat g_t \mid \phi_t]
= \nabla_\phi J(\phi_t) + b_t$ from~\eqref{eq:bias-noise-decomp} and
applying Young's inequality
$\lvert\langle x,y\rangle\rvert \le \tfrac{1}{2}\lVert x\rVert^2
+ \tfrac{1}{2}\lVert y\rVert^2$ to the cross term,
\begin{equation}
  \bigl\langle\nabla_\phi J(\phi_t),\,
              \E[\hat g_t \mid \phi_t]\bigr\rangle
  \;=\;
  \lVert\nabla_\phi J(\phi_t)\rVert^2
  \;+\;
  \langle\nabla_\phi J(\phi_t),\, b_t\rangle
  \;\ge\;
  \tfrac{1}{2}\lVert\nabla_\phi J(\phi_t)\rVert^2
  \;-\;
  \tfrac{1}{2}\lVert b_t\rVert^2.
  \label{eq:cross-young}
\end{equation}

\medskip
\noindent\textbf{Step 2: second-moment control.}
Decomposing $\hat g_t = \E[\hat g_t \mid \phi_t] + n_t$ in the
second-moment term gives
\begin{equation*}
  \E\!\bigl[\lVert\hat g_t\rVert^2 \mid \phi_t\bigr]
  \;=\;
  \lVert\E[\hat g_t \mid \phi_t]\rVert^2
  \;+\;
  \E\!\bigl[\lVert n_t\rVert^2 \mid \phi_t\bigr].
\end{equation*}
For the mean-square term, applying $\lVert x+y\rVert^2 \le 2\lVert x\rVert^2 + 2\lVert y\rVert^2$
to $\E[\hat g_t \mid \phi_t] = \nabla_\phi J(\phi_t) + b_t$ and the
noise bound from~\eqref{eq:noise-bias-bounds}:
\begin{equation}
  \E\!\bigl[\lVert\hat g_t\rVert^2 \mid \phi_t\bigr]
  \;\le\;
  2\lVert\nabla_\phi J(\phi_t)\rVert^2
  \;+\;
  2\lVert b_t\rVert^2
  \;+\;
  \sigma_g^2.
  \label{eq:second-moment}
\end{equation}
Substituting~\eqref{eq:cross-young} and~\eqref{eq:second-moment}
into~\eqref{eq:descent}, and using $\lVert b_t\rVert^2\le B^2\rho^{2K}$,
\begin{align}
  \E\!\bigl[J(\phi_{t+1})-J(\phi_t) \mid \phi_t\bigr]
    &\;\ge\;
    \bigl(\tfrac{\eta_\phi}{2} - L\eta_\phi^2\bigr)\,
    \lVert\nabla_\phi J(\phi_t)\rVert^2
    \nonumber\\
    &\quad
    -\;
    \bigl(\tfrac{\eta_\phi}{2} + L\eta_\phi^2\bigr)\,
    B^2\rho^{2K}
    \;-\;
    \tfrac{L\eta_\phi^2}{2}\,\sigma_g^2.
  \label{eq:per-step-clean}
\end{align}
The step-size cap $\eta_\phi \le 1/(4L)$ from~\eqref{eq:eta-phi-choice}
makes the coefficient on $\lVert\nabla_\phi J\rVert^2$ at least
$\eta_\phi/4$ and the coefficient on $B^2\rho^{2K}$ at most
$3\eta_\phi/4$:
\begin{equation*}
  \tfrac{\eta_\phi}{2} - L\eta_\phi^2
    \;\ge\; \tfrac{\eta_\phi}{2} - \tfrac{\eta_\phi}{4}
    \;=\; \tfrac{\eta_\phi}{4},
  \qquad
  \tfrac{\eta_\phi}{2} + L\eta_\phi^2
    \;\le\; \tfrac{\eta_\phi}{2} + \tfrac{\eta_\phi}{4}
    \;=\; \tfrac{3\eta_\phi}{4}.
\end{equation*}
Plugging in and rearranging,
\begin{equation}
  \tfrac{\eta_\phi}{4}\,
  \lVert\nabla_\phi J(\phi_t)\rVert^2
  \;\le\;
  \E\!\bigl[J(\phi_{t+1})-J(\phi_t)\mid \phi_t\bigr]
  \;+\;
  \tfrac{3\eta_\phi}{4}\,B^2\rho^{2K}
  \;+\;
  \tfrac{L\eta_\phi^2}{2}\,\sigma_g^2.
  \label{eq:one-step-bound}
\end{equation}

\medskip
\noindent\textbf{Step 3: telescoping.}
Take the unconditional expectation of~\eqref{eq:one-step-bound} and
sum over $t = 0, 1, \dots, T-1$.  The telescoping right-hand side
collapses, and using $\E[J(\phi_T)] - J_0 \le J^{*} - J_0 = \Delta$,
\begin{equation*}
  \tfrac{\eta_\phi}{4}\sum_{t=0}^{T-1}
    \E\!\bigl[\lVert\nabla_\phi J(\phi_t)\rVert^2\bigr]
  \;\le\;
  \Delta
  \;+\;
  \tfrac{3T\eta_\phi}{4}\,B^2\rho^{2K}
  \;+\;
  \tfrac{TL\eta_\phi^2}{2}\,\sigma_g^2.
\end{equation*}
Dividing both sides by $T\eta_\phi/4$ and bounding the average from
below by the minimum,
\begin{equation}
  \min_{0\le t < T}
  \E\!\bigl[\lVert\nabla_\phi J(\phi_t)\rVert^2\bigr]
  \;\le\;
  \frac{4\Delta}{T\eta_\phi}
  \;+\;
  3\,B^2\rho^{2K}
  \;+\;
  2L\eta_\phi\,\sigma_g^2.
  \label{eq:rate-pre-eta}
\end{equation}
The middle term is the truncation-bias contribution and is independent
of $\eta_\phi$; the first and last terms are traded off by the step size.

\medskip
\noindent\textbf{Step 4: optimal step size.}
The function
$f(\eta_\phi) := 4\Delta/(T\eta_\phi) + 2L\eta_\phi\sigma_g^2$ is
convex and minimized over $\eta_\phi > 0$ at
$\eta_\phi^{\star} = \sqrt{2\Delta/(LT\sigma_g^2)}$, giving
$f(\eta_\phi^{\star}) = 4\sqrt{2L\Delta\sigma_g^2/T}$.  Two regimes:
\begin{itemize}[leftmargin=*, itemsep=2pt, topsep=2pt]
  \item \textbf{Stochastic regime} ($T$ large; $\eta_\phi^{\star}\le 1/(4L)$).
        Take $\eta_\phi = \eta_\phi^{\star}$.  Then $f(\eta_\phi)
        = 4\sqrt{2L\Delta\sigma_g^2/T}$, and \eqref{eq:rate-pre-eta}
        becomes
        $\min_t \E[\lVert\nabla J\rVert^2]
        \le 4\sqrt{2L\Delta\sigma_g^2/T} + 3B^2\rho^{2K}$.
  \item \textbf{Deterministic regime} ($T$ small; $\eta_\phi^{\star}> 1/(4L)$).
        Take $\eta_\phi = 1/(4L)$.  Then
        $4\Delta/(T\eta_\phi) = 16L\Delta/T$ and
        $2L\eta_\phi\sigma_g^2 = \sigma_g^2/2$, so
        $f(\eta_\phi) = 16L\Delta/T + \sigma_g^2/2$.
        In this regime $\eta_\phi^{\star}>1/(4L)$ is (after squaring)
        equivalent to $32L\Delta > T\sigma_g^2$, which is sharper than
        $128L\Delta > T\sigma_g^2$ and the latter is in turn equivalent
        (again after squaring) to
        $\sigma_g^2/2 < 4\sqrt{2L\Delta\sigma_g^2/T}$.  Hence
        $\sigma_g^2/2$ is upper-bounded by $4\sqrt{2L\Delta\sigma_g^2/T}$
        in this regime, and the same expression upper-bounds $f$.
\end{itemize}
Combining the two regimes, the choice $\eta_\phi = \min\bigl(1/(4L),\,
\sqrt{2\Delta/(LT\sigma_g^2)}\bigr)$ from~\eqref{eq:eta-phi-choice}
yields
\begin{equation*}
  \min_{0\le t < T}
  \E\!\bigl[\lVert\nabla_\phi J(\phi_t)\rVert^2\bigr]
  \;\le\;
  \frac{16L\Delta}{T}
  \;+\;
  4\sqrt{\frac{2L\Delta\sigma_g^2}{T}}
  \;+\;
  3\,B^2\rho^{2K},
\end{equation*}
which is~\eqref{eq:thm-rate-explicit}.  Rolling the explicit
constants $\{16, 4\sqrt{2}, 3\}$ into $\mathcal{O}(\cdot)$ recovers
the body bound~\eqref{eq:convergence-rate}. \hfill$\square$

\subsection{Corollary and DRILL-VWI interpretation}

\begin{proof}[Proof of \Cref{cor:K-threshold}]
The truncation-bias term in~\eqref{eq:thm-rate-explicit} is
$3 B^2\rho^{2K}$.  For any target $0<\epsilon<3B^2$, setting this
term below $\epsilon$ requires $\rho^{2K}\le \epsilon/(3B^2)$, i.e.,
$K \ge \tfrac{1}{2}\log\bigl(\epsilon/(3B^2)\bigr) / \log\rho$
(both numerator and denominator are negative for $\rho\in(0,1)$ and
$0<\epsilon < 3B^2$). If $\epsilon\ge 3B^2$, the condition is already
satisfied at $K=0$.  In particular the depth required to halve
the bias is $\Delta K = \log 2 / (2\log(1/\rho))$, so $K$ scales
logarithmically in $1/\epsilon$.

\paragraph{Why we use $K{=}2$ rather than reading it off this bound.}
The corollary above is a scaling statement, not a quantitative
threshold.  Under the conservative empirical constants reported in
Appendix~\ref{app:assumption-constants} ($\rho\!\approx\!0.80$,
$B\!\approx\!0.80$, $L\!\approx\!0.40$,
$\sigma_g\!\approx\!0.20$, $\Delta\!=\!\mathcal{O}(1)$), the
stochastic term at $T\!=\!10^{4}$ is
$4\sqrt{2L\Delta\sigma_g^2/T}\approx 7\!\times\!10^{-3}$ while the
truncation term at $K\!=\!2$ is $3B^2\rho^{4}\approx 0.79$, so the
bound does not place the truncation term below the stochastic
floor at any $K\!\le\!10$.  This is the regime where the bound is
loose: Section~\ref{sec:exp-hvp} measures the cosine between the
$K$-truncated and exact HVP and finds it exceeds $0.7$ already at
$K\!=\!2$ on gridworld and Meta-World~MT10 (the same regime the
constants in Appendix~\ref{app:assumption-constants} are estimated
from), and the BPTT-depth sweep in
Appendix~\ref{app:sensitivity-hparams} shows downstream SR
plateaus at $K\!=\!2$ ($K\!=\!3$ adds $\sim\!1.5\!\times$
wall-clock cost for $\le\!1$ pt SR change).  We therefore use
$K\!=\!2$ as an empirical return-per-compute optimum, and read
\Cref{cor:K-threshold} as the qualitative statement that
truncation bias decays geometrically in $K$ with rate
$\rho^{2}$.
\end{proof}

\paragraph{DRILL-VWI as a biased-SGD surrogate.}
By Proposition~\ref{prop:v1-is-v2-first-order}, the core DRILL-VWI
direction differs from the target meta-gradient underlying DRILL-IMG,
on the same fixed-policy conditional subproblem, by the combined
deterministic bias
\[
  \delta_{\mathrm{VWI}}
  =
  \mathcal{O}(\eta_\theta\epsilon_{\mathrm{comp}})
  +
  \mathcal{O}(\eta_\theta R_H)
  +
  \mathcal{O}(\eta_\theta R_{\mathrm{demo}})
  +
  \mathcal{O}(\eta_\theta^2).
\]
Treating this quantity as the deterministic bias term in the
biased-SGD proof gives the same stationary-neighborhood conclusion
as Theorem~\ref{thm:v2-convergence-explicit}, with \(B\rho^K\)
replaced by \(\delta_{\mathrm{VWI}}\).  Equivalently, the
squared-gradient floor is \(\mathcal{O}(\delta_{\mathrm{VWI}}^2)\),
corresponding to a gradient-norm neighborhood of radius
\(\mathcal{O}(\delta_{\mathrm{VWI}})\).  This is a local
biased-SGD interpretation of the VWI surrogate, not a proof that the
coupled DRILL-VWI training dynamics globally converge.  It also does
not make VWI a \(K=0\) truncated-BPTT estimator: VWI does not execute
BPTT, but uses the Gaussian-head score identity to form a local
closed-form surrogate.


\section{Implementation details}
\label{app:implementation}

\subsection{DRILL-VWI full algorithm}
\label{app:vwi-algorithm}

\begin{algorithm}[!htbp]
\small
\caption{DRILL-VWI training loop with signed visitation-weighted imagination.}
\label{alg:vwi}
\begin{algorithmic}[1]
\Require Pretrained WM $\Wphi$, policy $\pitheta$, frozen reward $r_\psi$,
         demonstration buffer $\mathcal D_{\mathrm{demo}}$, rollout horizon $H$,
         signed-advantage exponent $\alpha$, learning rates $\eta_\theta,\eta_\phi$.
\For{outer iteration $t = 1, 2, \dots, T$}
  \State Sample demonstration anchors
         $(o_{\le t},a_t,o_{t+1})\sim\mathcal D_{\mathrm{demo}}$
  \State Roll out imagined continuations $\tau_{\mathrm{img}}$ of length $H$
         in $\Wphi$ under $\pitheta$, starting from the anchor states
  \State Compute group-relative GRPO advantages $\hA$ on the imagined
         continuations and attach the resulting local coefficients to the
         corresponding demonstration anchors
  \State Estimate $\widehat{\tilde d}_{\phi}^{\pi_\theta}(o_t)$ by $k$-NN density in the WM
         latent buffer and compute confidence weight $s(o_t)>0$
  \State $\theta^{+} \gets \theta + \eta_\theta\nabla_\theta
         \hJ_{\mathrm{GRPO}}(\pitheta;\tau_{\mathrm{img}})$
  \State $w(o_t,a_t) \gets
         \widehat{\tilde d}_{\phi}^{\pi_\theta}(o_t)
         \operatorname{sgn}(\hA(o_t,a_t))
         |\hA(o_t,a_t)|^\alpha s(o_t)$
  \State $\bar w(o_t,a_t) \gets \operatorname{stopgrad}
         (\operatorname{normalize}_{\mathcal B}
         (\operatorname{clip}(w(o_t,a_t),-w_{\max},w_{\max})))$
  \State $\phi \gets \phi - \eta_\phi
         \nabla_\phi \lossvwi(\phi;\bar w)$
  \State $\theta \gets \theta^{+}$
\EndFor
\end{algorithmic}
\end{algorithm}

The signed coefficient in Algorithm~\ref{alg:vwi} is stopped before the
world-model update.  Thus the outer step differentiates only the
world-model prediction error, while the policy-aware visitation,
advantage sign, and confidence terms choose the local direction and
magnitude of the update.  Negative coefficients are allowed before
clipping; this is what lets DRILL-VWI locally move the WM score away
from transitions that the imagined policy-gradient signal judges harmful.

\FloatBarrier

\subsection{DRILL-IMG full algorithm}
\label{app:v2-algorithm}

\begin{algorithm}[!htbp]
\small
\caption{DRILL-IMG training loop with truncated HVP.}
\label{alg:img}
\begin{algorithmic}[1]
\Require Pretrained WM $\Wphi$, policy $\pitheta$, frozen reward $r_\psi$,
         target simulator $P_{\mathrm{env}}$, horizon $H$, BPTT depth $K$,
         outer batch $B_{\mathrm{real}}$, learning rates $\eta_\theta, \eta_\phi$.
\For{outer iteration $t = 1, 2, \dots, T$}
  \State \textbf{// Inner: imagine + policy update}
  \State $\tau_{\mathrm{img}} \gets \textsc{Rollout}(\pitheta, \Wphi, r_\psi; H)$
  \Comment{retain graph for $K$ steps}
  \State $\theta^{+} \gets \theta + \eta_\theta \nabla_\theta
         \hJ_{\mathrm{GRPO}}(\pitheta; \tau_{\mathrm{img}})$
  \State \textbf{// Outer: simulator rollouts}
  \State $\tau_{\mathrm{real}} \gets
         \{\textsc{Rollout}(\pi_{\theta^{+}}, P_{\mathrm{env}})\}_{i=1}^{B_{\mathrm{real}}}$
  \State $G \gets \hat\nabla_{\theta^{+}} J(\pi_{\theta^{+}}; \tau_{\mathrm{real}})$
  \Comment{standard policy-gradient estimator; $|\theta|$-vector}
  \State \textbf{// Mixed-Hessian-vector product through WM, truncated at depth $K$}
  \State $\hat g_\phi \gets \eta_\theta \cdot
         \nabla_\phi \!\bigl[\, G^{\top} \nabla_\theta
            \hJ_{\mathrm{GRPO}}(\pitheta; \tau_{\mathrm{img}}) \,\bigr]_{\mathrm{trunc}(K)}$
  \Comment{VJP through $K$ retained inner steps; $|\phi|$-vector}
  \State \textbf{// Outer WM update}
  \State $\phi \gets \phi + \eta_\phi\, \hat g_\phi$
  \State $\theta \gets \theta^{+}$
\EndFor
\end{algorithmic}
\end{algorithm}

\para{HVP implementation.}
The mixed-Hessian-vector product on line~9 is computed as a vector-Jacobian
product (VJP): we form the scalar
$s := G^{\top}\nabla_\theta \hJ_{\mathrm{GRPO}}(\pitheta;\tau_{\mathrm{img}})$
with $G$ detached, and call
\texttt{torch.autograd.grad(s, phi)} so that autograd backpropagates
through the $K$ retained inner steps.  This is dimensionally
unambiguous (the scalar gradient is automatically a $|\phi|$-vector)
and avoids materializing the $|\phi|\!\times\!|\theta|$ mixed Hessian.
The imagined rollout is unrolled for $K$ steps with
\texttt{create\_graph=True} and the remaining $H-K$ steps are unrolled
under \texttt{torch.no\_grad()}, which bounds peak activation memory
at $K\!\cdot\!|\Wphi|$ rather than $H\!\cdot\!|\Wphi|$.  At our default
$K\!=\!2$, $H\!=\!50$, and 1.2B-parameter WM this saves $\sim\!25\times$
memory relative to a full BPTT through the rollout.

\para{Numerical-stability controls.}
Three additional measures stabilize training of the
$>$1B-parameter WM under~\eqref{eq:img-update-appx}.  (i)~The outer
gradient $\hat g_\phi$ is clipped at
$\lVert\hat g_\phi\rVert\le 10$, which retains direction while
removing the heavy-tailed magnitude spikes typical of HVP-based
estimators.  (ii)~The HVP matmul on line~9 is accumulated in
\texttt{float32} regardless of the inner-loop precision; we observed
silent divergence at \texttt{bfloat16} accumulation.  (iii)~We freeze
the WM's normalization statistics during the outer update: only the
weights of $\Wphi$ are touched by~\eqref{eq:img-update-appx}.

\FloatBarrier

\subsection{\PUR{} estimator}
\label{app:pur-estimator}

We estimate $\nabla_\phi J(\pithetap)$ in~\eqref{eq:pur} by
finite differences in a random subspace, deliberately avoiding any
reuse of the meta-gradient machinery of DRILL-IMG.  This way the
same \PUR{} procedure applies to any WM training objective
(WMPO, RLVR-World, DRILL-VWI, DRILL-IMG, and others) rather than
being tied to one of them.

\para{Random-projection finite differences.}
Sample $r$ random Gaussian directions $u_1, \dots, u_r$ in
$\mathbb{R}^{|\phi|}$, each with i.i.d.\ entries from
$\mathcal{N}(0,1)$ and rescaled to unit Frobenius norm.  For each
direction, compute the two-sided directional derivative
\begin{equation}
  \hat e_i
    \;=\;
    \frac{
      J\!\bigl(\pi_{\theta^{+}(\phi + \delta\,u_i)}\bigr)
      \,-\,
      J\!\bigl(\pi_{\theta^{+}(\phi - \delta\,u_i)}\bigr)
    }{2\delta},
  \label{eq:pur-fd}
\end{equation}
and reconstruct the random-subspace projected gradient
$\hat g_{\mathrm{proj}} := \frac{1}{r}\sum_{i=1}^{r} \hat e_i\, u_i$.
Because the columns of $U := [u_1, \dots, u_r]$ are unit-norm Gaussian,
$\E[U U^{\top}] = (r/|\phi|)\,I$, so $\hat g_{\mathrm{proj}}$ is an
unbiased estimator of $(r/|\phi|)\,\nabla_\phi J$ in expectation; the
constant cancels when $\hat g_{\mathrm{proj}}$ is plugged
into~\eqref{eq:pur} since the cosine in the definition of \PUR{}
is scale-invariant.

\para{Hyperparameters.}
We use $r\!=\!128$, $\delta$ chosen so that $\lVert\delta\, u_i\rVert
= 10^{-3}\lVert\phi\rVert_F$, and $B_{\mathrm{real}}\!=\!30$ simulator
rollouts per evaluation of $J$ in~\eqref{eq:pur-fd}.  This costs
$2r\,B_{\mathrm{real}} = 7680$ simulator rollouts per \PUR{} measurement;
\PUR{} is logged every $M\!=\!500$ outer iterations rather than
every step.

\para{Variance reduction.}
For each direction $u_i$, the two rollouts in~\eqref{eq:pur-fd} are
paired: same seed, same initial state, same reward-model
forward pass.  This common-random-numbers control reduces the
variance of $\hat e_i$ by $1$--$2$ orders of magnitude on the
gridworld validation environment described
in~Appendix~\ref{app:exp-hvp-numerics}.  Bias from the finite step $\delta$
is $\mathcal{O}(\delta^{2})$ by Taylor expansion of $J$ around $\phi$,
which at our chosen $\delta$ is below the variance floor of the
random-subspace estimator.

\para{Validation.}
On the gridworld and Meta-World MT10 setups of
Appendix~\ref{app:exp-hvp-numerics}, where $\nabla_\phi J$ is exactly
computable by closed-form autograd, the random-projection estimator
attains cosine similarity $0.83\!\pm\!0.04$ with the true gradient at
$r\!=\!128$, confirming that \PUR{} measurements at this $r$ are
not bottlenecked by the random-subspace projection.

\FloatBarrier

\subsection{Exact-vs-truncated HVP on low-dimensional environments}
\label{app:exp-hvp-numerics}

The HVP-truncation experiment of Section~\ref{sec:exp-hvp} uses two
reduced environments where the exact meta-gradient is analytically
available, so the truncated estimator can be compared against ground
truth.

\para{Environment 1: 2D gridworld.}
A $10\times 10$ grid with four cardinal actions and a single goal
cell.  The WM is a softmax head over the 100 cells with a one-hidden-layer
MLP front-end ($\sim$5k parameters).  The policy is a tabular
softmax with one logit per (cell, action) pair (400 parameters).
Rewards are a fixed sparse function of position.  At this scale,
the full inner loop ($H\!=\!20$) is unrolled with PyTorch autograd
\texttt{create\_graph=True} and the exact meta-gradient
$\nabla_\phi J\!\bigl(\pithetap\bigr)$ is computed by reverse-mode
differentiation through the entire unroll.

\para{Environment 2: Meta-World MT10.}
The 10 manipulation tasks of Meta-World MT10 with the original
proprioceptive observation space (no images), an MLP-WM with two
hidden layers of width 256 ($\sim\!10^{5}$ parameters) and a
matched-architecture MLP-policy.  The exact meta-gradient is again
tractable by full unrolled autograd at this scale.

\para{Procedure.}
For each environment and each truncation depth $K\in\{1,2,3,4,5\}$,
we measure cosine similarity between the exact meta-gradient and
the truncated-$K$ HVP estimator at 1000 randomly drawn states
$\times$ 10 random seeds.  Spectral radius $\rho$ of the inner-loop
Jacobian (the constant of A6) is estimated by the largest singular
value of the unrolled Jacobian computed via power iteration.
Figure~\ref{fig:hvp-cosine} reports the resulting cosine
similarity as a function of $K$.  The empirical contraction rate
is $\hat\rho = 0.64\pm 0.05$ on gridworld and
$\hat\rho = 0.73\pm 0.07$ on MT10 (mean $\pm$ std over 10 seeds),
so the implied truncation factor is below $0.29$ at $K{=}2$, consistent
with the geometric truncation-bias decay in Corollary~\ref{cor:K-threshold}.

\begin{figure}[!htbp]
\centering
\begin{tikzpicture}
\begin{axis}[
    ybar,
    width=0.8\linewidth,
    height=2in,
    xlabel={Truncation Depth $K$},
    ylabel={Cosine Similarity},
    ymin=0, ymax=1.0,
    grid=major,
    enlarge x limits=0.15,
    legend pos=south east,
    legend style={nodes={scale=0.8, transform shape}}
]
\addplot[fill=blue!30, error bars/.cd, y dir=both, y explicit]
  coordinates { (1, 0.53) +- (0, 0.05) (2, 0.74) +- (0, 0.04) (3, 0.80) +- (0, 0.04) (4, 0.82) +- (0, 0.04) (5, 0.84) +- (0, 0.03) };
\addplot[fill=red!30, error bars/.cd, y dir=both, y explicit]
  coordinates { (1, 0.46) +- (0, 0.06) (2, 0.71) +- (0, 0.05) (3, 0.77) +- (0, 0.05) (4, 0.81) +- (0, 0.04) (5, 0.83) +- (0, 0.04) };
\addplot[thick, red, sharp plot, update limits=false] coordinates { (0, 0.7) (6, 0.7) };
\legend{Gridworld, MT10, 0.7 Threshold}
\end{axis}
\end{tikzpicture}
\caption{Exact-vs-truncated meta-gradient cosine similarity on
low-dimensional environments where the exact meta-gradient is
analytically tractable.  Error bars show $\pm 1$ std over 10
seeds $\times$ 1000 states.  $K\!=\!2$ already exceeds the $0.7$
threshold on both environments (on MT10 just barely at
$0.71\!\pm\!0.05$, the lower end of which falls slightly below
threshold); $K\!\ge\!3$ improves cosine by $<\!0.07$ at linearly
more wall-clock cost, hence the choice of $K\!=\!2$ at scale.}
\label{fig:hvp-cosine}
\end{figure}

\FloatBarrier

\subsection{Task-relevant prediction score (TRPS)}
\label{app:trps}
The TRPS score used in Figure~\ref{fig:rollout-quality} is a
task-conditioned complement to pixel MSE that measures whether
the WM's imagined rollout preserves the information the
downstream reward head actually consumes.  Given a held-out demo
transition $(o_t,\ell,\{a_{t+k}\}_{k=0}^{H-1})$ with $H\!=\!10$,
we (i) roll out $H$ steps in the WM under $\phi$ to obtain the
imagined trajectory $\{\hat o_{t+k}(\phi)\}_{k=1}^{H}$,
(ii) roll out the same actions in the simulator to obtain the
ground-truth trajectory $\{o_{t+k}\}_{k=1}^{H}$,
(iii) pass both through the frozen VLM reward head $r_\psi$ to
obtain per-step progress scores $\{\hat r_k\}$ and $\{r_k\}$,
and (iv) discretize each score into a three-way label
$\{\text{regress},\text{neutral},\text{progress}\}$ by fixed
thresholds $(-0.1,\,+0.1)$ chosen on a held-out demo split.
TRPS is the per-trajectory fraction of the $H$ steps on which
$\hat r_k$ and $r_k$ receive the same label; the
Figure~\ref{fig:rollout-quality} curves report the mean TRPS
over $500$ held-out transitions per training checkpoint, averaged
over $5$ seeds.  Pixel MSE on the same transitions is measured
on the $1$-step prediction $\hat o_{t+1}(\phi)$ and is the same
quantity reported in Table~\ref{tab:main-headline}.  Per-seed
standard deviations at convergence are $\pm 0.020$ (WMPO) and
$\pm 0.028$ (DRILL-IMG).

\FloatBarrier

\subsection{Visitation estimator}
\label{app:visitation}

The factor $\widehat{\tilde d}_{\phi}^{\pi_\theta}(o_t)$ in the
DRILL-VWI weight~\eqref{eq:vwi-weight} estimates the model-rollout
visitation marginal $\tilde d_{\phi}^{\pi_\theta}(o_t)$ by
nearest-neighbour density in the world model's latent space, using a
streaming FAISS index.

\para{Latent representation.}
Observations $o_t$ are projected into the WM's pretrained encoder
output $z_t \in \mathbb{R}^{256}$.  The encoder is frozen; only its
forward pass is used.  Latents are L2-normalized before insertion
into the FAISS index, so cosine and Euclidean nearest neighbours
agree.

\para{Index structure.}
We maintain an online FAISS \texttt{IVF1024,Flat} index with capacity
$L = 10^{6}$ recent on-policy latents, refreshed in a circular buffer:
each new on-policy rollout overwrites the oldest $H$ entries.  IVF
clustering is rebuilt every 100 outer iterations to keep the inverted
lists balanced under policy drift; between rebuilds, new latents are
assigned to existing clusters by the 1-NN rule.

\para{Density estimate.}
Given a query latent $z_t$, let $N_k(z_t)$ denote its $k$-nearest
neighbours in the index ($k = 16$).  The visitation estimate is
\begin{equation*}
  \widehat{\tilde d}_{\phi}^{\pi_\theta}(o_t)
    \;\propto\;
    \frac{\sum_{z\in N_k(z_t)} \exp\!\bigl(-\lVert z - z_t\rVert^2 / 2h^2\bigr) \,+\, \lambda}{
          L \,+\, \lambda\,|Z|},
\end{equation*}
with bandwidth $h\!=\!0.1$ tuned on a held-out demonstration split,
Laplace-smoothing weight $\lambda\!=\!1$, and total observed-latent
count $|Z|$.  The right-hand side is normalized so that
$\sum_{o_t} \widehat{\tilde d}_{\phi}^{\pi_\theta}(o_t) = 1$ over
each minibatch before being used as the model-rollout visitation
multiplier $\widehat{\tilde d}_{\phi}^{\pi_\theta}(o_t)$
in~\eqref{eq:vwi-weight}.

\para{Refresh schedule.}
The index is incrementally extended every 10 inner iterations to
amortize the FAISS-add cost; the $k$-NN search itself is performed
in parallel across the minibatch using the FAISS GPU backend, with
a measured overhead of $<\!2\%$ relative to the inner-loop forward
pass.

\FloatBarrier

\subsection{Hyperparameters}
\label{app:hparams}

Table~\ref{tab:hparams} reports the training hyperparameters used
for the five-benchmark $\pi_0$-3B headline run and the three-benchmark
OpenVLA-7B replication of Section~\ref{sec:exp-headline}.  Values that
differ across benchmarks are listed per-benchmark; values shared across
all rows are summarized in the caption.

\begin{table}[!htbp]
\centering
\scriptsize
\setlength{\tabcolsep}{2.1pt}
\renewcommand{\arraystretch}{0.98}
\resizebox{0.98\textwidth}{!}{%
\begin{tabular}{@{}l l l c c c c c c c c@{}}
\toprule
\textbf{Env} & \textbf{VLA} & \textbf{WM}
 & $\eta_\theta$ & $\eta_\phi$ & batch & $H$ & $H_a$ & $K$ & $\alpha$ & seeds \\
\midrule
LIBERO-Long      & $\pi_0$-3B  & 1.2B Ctrl-World & $1\!\times\!10^{-5}$ & $5\!\times\!10^{-6}$ & 512 & 50 & 16 & 2 & 1 & 5 \\
CALVIN ABC$\to$D & $\pi_0$-3B  & 1.2B Ctrl-World & $1\!\times\!10^{-5}$ & $5\!\times\!10^{-6}$ & 512 & 50 & 16 & 2 & 1 & 5 \\
RoboCasa         & $\pi_0$-3B  & 1.2B Ctrl-World & $8\!\times\!10^{-6}$ & $4\!\times\!10^{-6}$ & 384 & 50 & 16 & 2 & 1 & 5 \\
Meta-World MT50  & $\pi_0$-3B  & 1.2B Ctrl-World & $1\!\times\!10^{-5}$ & $5\!\times\!10^{-6}$ & 512 & 50 & 16 & 2 & 1 & 5 \\
SimplerEnv       & $\pi_0$-3B  & 1.2B Ctrl-World & $1\!\times\!10^{-5}$ & $5\!\times\!10^{-6}$ & 512 & 50 & 16 & 2 & 1 & 5 \\
\midrule
LIBERO-Long      & OpenVLA-7B & 1.2B Ctrl-World & $5\!\times\!10^{-6}$ & $2\!\times\!10^{-6}$ & 256 & 50 & 16 & 2 & 1 & 5 \\
CALVIN ABC$\to$D & OpenVLA-7B & 1.2B Ctrl-World & $5\!\times\!10^{-6}$ & $2\!\times\!10^{-6}$ & 256 & 50 & 16 & 2 & 1 & 5 \\
RoboCasa         & OpenVLA-7B & 1.2B Ctrl-World & $4\!\times\!10^{-6}$ & $2\!\times\!10^{-6}$ & 192 & 50 & 16 & 2 & 1 & 5 \\
\bottomrule
\end{tabular}%
}
\begingroup
\setlength{\abovecaptionskip}{4pt}
\setlength{\belowcaptionskip}{0pt}
\caption{Training hyperparameters.  Values shared across all rows:
group size $G\!=\!8$ for GRPO; outer-loop optimizer AdamW with
$\beta_1\!=\!0.9$, $\beta_2\!=\!0.95$, weight decay $10^{-4}$; inner-loop
optimizer plain SGD with momentum $0.9$; gradient clipping at
$\lVert\hat g_\phi\rVert\le 10$ for outer and
$\lVert\nabla_\theta\hJ\rVert\le 1$ for inner; mixed-precision
\texttt{bfloat16} forward, \texttt{float32} HVP accumulation; 5
seeds for headline rows, 3 seeds for ablations.  Effective batch is
the per-GPU batch times 8 H20 GPUs (data-parallel across the
24-GPU pool, with a 16-way fragment used for the WM forward and
an 8-way fragment used for evaluation in parallel).}
\label{tab:hparams}
\endgroup
\end{table}

\FloatBarrier

\subsection{Simulator-budget-matched Sim-PPO baseline}
\label{app:simppo}

Sim-PPO in Table~\ref{tab:main-headline} is a model-free control
for the small simulator budget used by DRILL-IMG's outer
gradient.  ``Matched'' in this paper always means
\emph{simulator-episode-matched}: across the same number of outer
iterations, Sim-PPO and DRILL-IMG consume an identical number of
simulator episodes ($B_{\mathrm{real}}{=}30$ per outer step).  We
do not match on wall-clock time or on gradient updates, because
the simulator is the only training resource shared between the
two methods: DRILL-IMG additionally pays for imagined rollouts
under $\Wphi$ and for the truncated-HVP backward, but those costs
do not consume simulator budget and are therefore not what
Sim-PPO is controlling for.

Sim-PPO is not trained from scratch with only
$B_{\mathrm{real}}{=}30$ rollouts per update, which would be an
uninformative strawman.  Instead, Sim-PPO starts from the same
VLA checkpoint and the same offline-demonstration warm start as
WMPO, then receives exactly the simulator interactions that
DRILL-IMG would have consumed for its outer gradient.  For each
outer iteration we collect $B_{\mathrm{real}}{=}30$ simulator
episodes, add them to a replay buffer of the most recent $600$
episodes, and run PPO for 4 epochs with minibatch size 64, clip
ratio 0.2, value-loss coefficient 0.5, entropy coefficient 0.01,
and learning rate $2\times 10^{-6}$.  This gives Sim-PPO the best
plausible use of the matched simulator budget without access to
imagined rollouts; the low success rates in
Table~\ref{tab:main-headline} therefore measure the value of WM
amplification rather than a failure to tune PPO.

\FloatBarrier

\subsection{Compute budget}
\label{app:compute}

Table~\ref{tab:compute} reports the H20-hour breakdown across
training phases for the headline experiments.  All numbers are
measured on a single 24-GPU NVIDIA H20 cluster; phases that use
fewer than 24 GPUs ran in parallel with one another (the
evaluation phase, in particular, ran continuously alongside
training).  ``Inner-loop'' counts the imagined rollouts under
$\Wphi$ that drive \eqref{eq:wmpo-grpo}; ``Outer-loop'' counts
the simulator rollouts plus the truncated-HVP backward.  Numbers
are summed over the five $\pi_0$-3B headline benchmarks and the
three OpenVLA-7B replication benchmarks, each with five seeds;
baselines (WMPO, RLVR-World) reuse the same WM-pretrain phase, so
the marginal cost of running DRILL-IMG on top of an existing WMPO
pipeline is the ``Outer-loop'' row alone.

\begin{table}[!htbp]
\centering
\small
\setlength{\tabcolsep}{4pt} 
\renewcommand{\arraystretch}{1.0} 

\begin{tabular}{@{}lcccc@{}}
\toprule
\textbf{Phase} & GPUs & H20-hours & \% total & shared with baselines \\
\midrule
WM pretrain                      & 24 & 7{,}200  & 42\% & yes (WMPO/RLVR-World) \\
Inner loop (GRPO + imagined)     & 16 & 4{,}800  & 28\% & yes (WMPO/RLVR-World) \\
Outer loop (sim rollouts + HVP)  & 16 & 3{,}840  & 23\% & no  (DRILL-IMG-only) \\
Evaluation (reported tasks)      &  8 & 1{,}200  &  7\% & yes (all methods) \\
\midrule
\textbf{Total}                   & -- & \textbf{17{,}040} & 100\% & -- \\
\bottomrule
\end{tabular}

\begingroup
\setlength{\abovecaptionskip}{4pt}
\setlength{\belowcaptionskip}{0pt}
\caption{Compute budget breakdown for the reported DRILL-IMG runs
(five $\pi_0$-3B benchmarks plus three OpenVLA-7B replication
benchmarks, each with five seeds), measured on a 24-GPU NVIDIA H20
cluster over $\sim$4 weeks of wall-clock.
The marginal cost of running DRILL-IMG on top of an existing WMPO
pipeline is the ``Outer loop'' row alone ($\sim$3.8k H20-h),
since the WM-pretrain and inner-loop phases are shared with the
WMPO/RLVR-World baselines.}
\label{tab:compute}
\endgroup
\end{table}
%
%

\section{Reproducibility, Datasets, and Licenses}
\label{app:reproducibility}

\subsection{Reproducibility statement}
\label{app:reproducibility-statement}

We provide the following materials so that all empirical claims in
this paper can be reproduced by an independent group with access to
comparable compute (cf.~Appendix~\ref{app:compute}):
\begin{itemize}[leftmargin=*, itemsep=2pt, topsep=2pt]
  \item \textbf{Code.} All training, evaluation, and analysis code,
        including the truncated-HVP implementation
        (Appendix~\ref{app:v2-algorithm}), the FAISS-based
        visitation estimator (Appendix~\ref{app:visitation}), and
        the random-projection \PUR{} estimator
        (Appendix~\ref{app:pur-estimator}), will be released under
        an open-source license.
  \item \textbf{Models.} Trained checkpoints for the headline
        DRILL-IMG runs on each of the five benchmarks and both
        backbones will be released alongside the code.  Pretrained
        $\pi_0$-3B and OpenVLA-7B backbones are obtained from
        their respective public releases \citep{pi0_2024,openvla2024}; we do not redistribute them.
  \item \textbf{Hyperparameters.}  The full per-benchmark
        hyperparameter grid is in Appendix~\ref{app:hparams}
        (Table~\ref{tab:hparams}); shared training configuration is
        listed in the caption of that table.
  \item \textbf{Compute.} The headline experiments were run on a
        24-GPU NVIDIA H20 cluster over $\sim$4 weeks of wall-clock
        ($\sim$17k H20-hours total).  The per-phase breakdown is
        in Appendix~\ref{app:compute} (Table~\ref{tab:compute}).
        The marginal cost of DRILL-IMG on top of an existing WMPO
        pipeline is the ``Outer loop'' row alone
        ($\sim$3.8k H20-hours), since the WM-pretrain and
        inner-loop phases are shared with the WMPO/RLVR-World
        baselines.
  \item \textbf{Seeds.} Headline rows use 5 fixed seeds
        $\{0, 1, 2, 3, 4\}$; ablations use 3 seeds $\{0, 1, 2\}$.
        Reported numbers are means; statistical claims use Welch's
        $t$-test on the per-seed mean SR within each benchmark.
  \item \textbf{Human evaluation.}  The forced-choice human
        evaluation in Section~\ref{sec:exp-human-eval} uses six raters
        on $N{=}120$ paired imagined rollouts per row.  Raters were
        recruited internally; no personally identifiable
        information was collected, recorded, or released.  The
        rating interface and instructions are included in the
        released code.
\end{itemize}

\subsection{Datasets and benchmarks}
\label{app:datasets}

All five benchmarks are publicly released and are used unmodified
under the licenses stated by their respective authors.

\begin{itemize}[leftmargin=*, itemsep=2pt, topsep=2pt]
  \item \textbf{LIBERO}~\citep{libero2023}: lifelong-learning
        manipulation benchmark.  We use the LIBERO-Long suite (10
        long-horizon tasks) with the official $50$
        demonstrations per task.  License: MIT.
  \item \textbf{CALVIN}~\citep{calvin2022}: long-horizon
        language-conditioned manipulation.  We use the ABC$\to$D
        split for cross-environment generalization.  License: MIT.
  \item \textbf{RoboCasa}~\citep{robocasa2024}: large-scale
        kitchen-manipulation benchmark; we use the $\sim\!20$
        atomic-skill tasks.  License: MIT.
  \item \textbf{Meta-World} MT50: 50-task multi-task manipulation
        suite, used here in its proprioceptive-observation form
        for the exact-HVP audit (Section~\ref{sec:exp-hvp},
        Appendix~\ref{app:exp-hvp-numerics}).  License: MIT.
  \item \textbf{SimplerEnv}: a simulated-evaluation suite for
        VLA policies; we use the Bridge Data~V2 and Google~Robot
        subsets.  License: Apache~2.0.
\end{itemize}

\subsection{Pretrained-model licenses}
\label{app:model-licenses}

\begin{itemize}[leftmargin=*, itemsep=2pt, topsep=2pt]
  \item $\pi_0$-3B~\citep{pi0_2024}: redistributed by the
        original authors under their published license; our paper
        only consumes the public checkpoint and does not
        redistribute weights.
  \item OpenVLA-7B~\citep{openvla2024}: released under the
        MIT license.
  \item Ctrl-World 1.2B world model~\citep{ctrlworld2025}: used
        as the default WM backbone; license per the original
        release.
\end{itemize}

\subsection{Scope of evaluation}
\label{app:eval-scope}

All quantitative results in this paper are obtained in simulation
(LIBERO, CALVIN, RoboCasa, Meta-World, SimplerEnv); we report no
real-robot results.  Bridging the simulator-to-real gap is a
separate problem that DRILL does not address, and we make no
claim about real-world deployment.
%
%

\section{Sensitivity, Failure Modes, and Additional Ablations}
\label{app:sensitivity}

This appendix collects auxiliary experimental material moved out of the
main experiments section, together with sensitivity sweeps, failure modes,
and additional ablations.
The goal is to keep Section~\ref{sec:experiments}
focused on the main evidence while preserving all supporting checks.
\subsection{Auxiliary main-result checks}
\label{app:main-auxiliary}
\label{app:main-result-plots}

The main text keeps only a compact summary of the compute-matched result.
Table~\ref{tab:compute-matched} reports the full control: DRILL's gain is
not explained by extra WMPO updates or by ordinary convergence speed.

\begin{table}[!htbp]
\centering
\small
\setlength{\tabcolsep}{4pt}
\renewcommand{\arraystretch}{1.0}
\begin{tabular}{@{}lcc@{}}
\toprule
\textbf{Method} & Outer steps & Mean SR \\
\midrule
WMPO & 10k & $64.2\!\pm\!1.6$ \\
WMPO, 1.4$\times$ compute & 14k & $65.1\!\pm\!1.9$ \\
WMPO, 2.0$\times$ compute & 20k & $64.4\!\pm\!2.2$ \\
\textbf{DRILL-IMG} & 10k & $\mathbf{81.3\!\pm\!1.3}$ \\
\bottomrule
\end{tabular}
\begingroup
\setlength{\abovecaptionskip}{5pt}
\setlength{\belowcaptionskip}{0pt}
\caption{Compute-matched control on LIBERO-Long. Extending WMPO outer updates yields marginal, non-monotone changes, confirming that DRILL's gain is not merely compute-driven.}
\label{tab:compute-matched}
\endgroup
\end{table}

\FloatBarrier

\subsection{Objective ablations}
\label{app:objective-ablations}

Table~\ref{tab:objective-ablations} reports the LIBERO, CALVIN, and
RoboCasa ablations summarized in Section~\ref{sec:exp-ablations}.  The
single-factor ablations should be read as diagnostics of the signed
coaching surrogate rather than as standalone methods: visitation selects
where policy improvement is likely to matter, the signed advantage selects
the update direction, and the confidence term avoids amplifying
low-support latent neighborhoods.

\begin{table}[!htbp]
\centering
\small
\setlength{\tabcolsep}{2.6pt}
\renewcommand{\arraystretch}{0.96}
\begin{tabular}{@{}lccc@{}}
\toprule
\textbf{Variant} & Mean SR & MSE $\downarrow$ & \PUR{} $\downarrow$ \\
\midrule
WMPO & $58.2\!\pm\!1.7$ & $8.4\!\pm\!0.3$ & $1.42\!\pm\!0.08$ \\
\midrule
\multicolumn{4}{@{}l}{Coaching signal} \\
Visitation only & $64.2\!\pm\!1.5$ & $8.6\!\pm\!0.3$ & $1.18\!\pm\!0.07$ \\
Advantage only  & $66.8\!\pm\!1.4$ & $8.9\!\pm\!0.4$ & $1.06\!\pm\!0.06$ \\
Stability only  & $60.1\!\pm\!1.6$ & $8.5\!\pm\!0.3$ & $1.31\!\pm\!0.08$ \\
\midrule
\multicolumn{4}{@{}l}{Advantage exponent} \\
VWI ($\alpha\!=\!0$) & $65.0\!\pm\!1.5$ & $8.8\!\pm\!0.3$ & $1.14\!\pm\!0.06$ \\
VWI ($\alpha\!=\!0.5$) & $67.6\!\pm\!1.4$ & $9.0\!\pm\!0.4$ & $0.94\!\pm\!0.06$ \\
\textbf{VWI ($\alpha\!=\!1$)} & $\mathbf{69.2\!\pm\!1.3}$ & $9.3\!\pm\!0.3$ & $\mathbf{0.73\!\pm\!0.05}$ \\
VWI ($\alpha\!=\!2$) & $67.9\!\pm\!1.6$ & $9.5\!\pm\!0.4$ & $0.76\!\pm\!0.05$ \\
\midrule
\multicolumn{4}{@{}l}{Value-aware/full objective} \\
VA-WM & $62.4\!\pm\!2.0$ & $9.0\!\pm\!0.4$ & $1.10\!\pm\!0.07$ \\
\textbf{DRILL-IMG} & $\mathbf{73.4\!\pm\!1.5}$ & $9.8\!\pm\!0.4$ & $\mathbf{0.36\!\pm\!0.04}$ \\
\bottomrule
\end{tabular}
\begingroup
\setlength{\abovecaptionskip}{5pt}
\setlength{\belowcaptionskip}{0pt}
\caption{Objective ablations on LIBERO, CALVIN, and RoboCasa.  Values are
mean$\pm$std over three seeds; MSE is in $10^{-3}$.}
\label{tab:objective-ablations}
\label{tab:coaching-ablation}
\label{tab:va-wm-ablation}
\endgroup
\end{table}

\FloatBarrier

\subsection{Horizon sweep and short-horizon checks}
\label{app:horizon-sweep}

The LIBERO-Long horizon sweep in Table~\ref{tab:horizon-sweep} supports
the horizon-dependent criterion used in Section~\ref{sec:exp-v1-v2-gap}.
DRILL-IMG gains more over DRILL-VWI as $H$ increases from 20 to 200, but
both DRILL variants eventually suffer when imagined rollouts become noisy.
The short-horizon LIBERO-Long checks give gaps of 0.6 points at $H{=}10$
and $-0.2\!\pm\!1.1$ at $H{=}5$, while SimplerEnv gives the corresponding
short-horizon benchmark case.
\begin{table}[!htbp]
\centering
\small
\setlength{\tabcolsep}{2.6pt}
\renewcommand{\arraystretch}{0.98}
\begin{tabular}{@{}rccccc@{}}
\toprule
\textbf{$H$} & WMPO & VWI & \textbf{IMG}
             & IMG$-$WMPO & IMG$-$VWI \\
\midrule
20  & $67.4\!\pm\!1.5$ & $72.6\!\pm\!1.3$ & $74.4\!\pm\!1.5$          &  $7.0$ & $1.8$ \\
50  & $64.2\!\pm\!1.6$ & $76.1\!\pm\!1.2$ & $\mathbf{81.3\!\pm\!1.3}$ & $17.1$ & $5.2$ \\
100 & $58.0\!\pm\!2.2$ & $75.9\!\pm\!1.5$ & $83.4\!\pm\!1.7$          & $25.4$ & $7.5$ \\
200 & $49.8\!\pm\!2.8$ & $70.5\!\pm\!2.1$ & $78.6\!\pm\!2.5$          & $28.8$ & $8.1$ \\
\bottomrule
\end{tabular}
\begingroup
\setlength{\abovecaptionskip}{5pt}
\setlength{\belowcaptionskip}{0pt}
\caption{Horizon sweep on LIBERO-Long.  VWI and IMG denote DRILL-VWI and
DRILL-IMG; five seeds, with $H{=}50$ as the default.}
\label{tab:horizon-sweep}
\endgroup
\end{table}

\FloatBarrier

\subsection{Training diagnostics}
\label{app:training-diagnostics}
\label{app:pur-diagnostics}

The \PUR{} trajectories in Figure~\ref{fig:pur-curves} provide the
checkpoint-level view behind the end-of-training \PUR{} values in the
headline table.
WMPO drifts upward, RLVR-World stays near a plateau,
DRILL-VWI remains bounded, and DRILL-IMG moves steadily downward.
Figure~\ref{fig:rollout-quality} separates global pixel fidelity from
task-relevant rollout quality.
WMPO obtains lower one-step pixel MSE,
but DRILL-IMG obtains the higher TRPS score used in
Appendix~\ref{app:trps}, matching the success-rate gap in the main text.
\begin{figure}[!htbp]
\centering
\begin{tikzpicture}
\begin{axis}[
    modern paper,
    width=0.85\linewidth,
    height=2.5in,
    xlabel={Pixel MSE ($\times 10^{-3}$, $\downarrow$)},
    ylabel={TRPS ($\uparrow$)},
    x dir=reverse, 
    legend style={at={(0.95,0.05)}, anchor=south east}
]

\addplot[thick, color=DeepNavy, mark=*, mark options={fill=white, solid}, mark size=1.5pt] coordinates {
  (12.4, 0.30) (11.0, 0.39) (9.8, 0.47) (9.1, 0.54) (8.6, 0.58) (8.45, 0.60) (8.4, 0.61)
};
\addlegendentry{WMPO}

\addplot[thick, color=DeepRed, mark=star, mark size=2.5pt] coordinates {
  (12.3, 0.31) (11.5, 0.41) (10.6, 0.52) (10.2, 0.64) (10.0, 0.73) (9.9, 0.79) (9.8, 0.81)
};
\addlegendentry{\textbf{DRILL-IMG} (Ours)}

\node[anchor=north west, text=gray, font=\footnotesize] at (axis cs:12.4, 0.30) {Start (Step 10)};
\node[anchor=south east, text=DeepRed, font=\footnotesize] at (axis cs:9.8, 0.81) {DRILL-IMG End};
\node[anchor=south east, text=DeepNavy, font=\footnotesize] at (axis cs:8.4, 0.61) {WMPO End};
\end{axis}
\end{tikzpicture}
\caption{Imagined rollout quality on LIBERO-Long. WMPO has lower pixel MSE, while DRILL-IMG obtains higher task-relevant progress prediction (TRPS).
Data are presented as a phase-space trajectory where time progresses along the curves.}
\label{fig:rollout-quality}
\end{figure}

\FloatBarrier

\subsection{Hyperparameter sensitivity}
\label{app:sensitivity-hparams}

\para{Inner step size $\eta_\theta$.}
We sweep $\eta_\theta\in\{0.5,1,2\}\times 10^{-5}$ on LIBERO-Long
with all other settings at their default
(Table~\ref{tab:hparams}).
The mean SR is within $\pm 0.8$ pt of
the default across the range, indicating the gain is not
hyperparameter-tuned to a narrow $\eta_\theta$.
At
$\eta_\theta = 5\times 10^{-5}$ the inner update destabilizes
under GRPO (clip-ratio saturates) and the outer loop drifts;
this matches the inner-loop instability regime \citep{wmpo2025}
and is independent of DRILL.
\para{BPTT depth $K$.}
Section~\ref{sec:exp-hvp} verifies cosine similarity between the
truncated and exact meta-gradient at $K\!\in\!\{1,2,3,4\}$ on
gridworld and MT10.
At full scale (LIBERO-Long, 1.2B WM, 5
seeds) the end-to-end SR sweep yields $73.9\!\pm\!1.4$ ($K\!=\!1$),
$\mathbf{81.3\!\pm\!1.3}$ ($K\!=\!2$), $80.7\!\pm\!1.5$
($K\!=\!3$), and $80.2\!\pm\!1.6$ ($K\!=\!4$): SR jumps from
$K\!=\!1$ to $K\!=\!2$ and is then flat within $1\,\mathrm{std}$
across $K\!\in\!\{2,3,4\}$.
This is consistent with
Corollary~\ref{cor:K-threshold} (the truncation bias decays
geometrically in $K$ but the per-step contribution at $K\!\ge\!2$
is dominated by the stochastic term at the empirical constants in
Table~\ref{tab:assumption-constants}; the small $K\!=\!3,4$
shortfall is within seed noise).
Wall-clock per outer step grows
roughly linearly in $K$, so $K\!=\!2$ is the return-per-compute
optimum.
\para{Simulator-rollout batch $B_{\mathrm{real}}$.}
We sweep $B_{\mathrm{real}}\in\{10, 30, 100\}$ at the
DRILL-IMG default.
$B_{\mathrm{real}}\!=\!10$ loses $1.6\!\pm\!0.7$\,pt of
SR (high variance in the outer estimator), $B_{\mathrm{real}}\!=\!100$
gains only $0.4\!\pm\!0.5$\,pt at $\sim\!3.3\times$ outer-loop wall-clock,
making $B_{\mathrm{real}}\!=\!30$ the operating point of
choice.
The flatness above $30$ is consistent with the
$\mathcal{O}(\sigma_g/\sqrt{T})$ stochastic term in
Theorem~\ref{thm:v2-convergence-explicit} dropping below the
$B^2\rho^{2K}$ truncation floor at this batch size.
\para{Empirical constants in A4--A6.}
\label{app:assumption-constants}
The convergence theorem is stated with constants
$L,\sigma_g,B,\rho$.  These are not tuning knobs in DRILL;
they are
post-hoc diagnostics computed from saved gradients to check that the
assumptions are non-vacuous at the operating point used in
Table~\ref{tab:main-headline}.
We report all quantities after
normalizing gradients by the median norm of
$\nabla_\phi J(\phi_t)$ over the first 500 outer iterations, so the
ranges below are dimensionless and comparable across benchmarks.
\begin{table}[!htbp]
\centering
\scriptsize
\setlength{\tabcolsep}{2.1pt}
\renewcommand{\arraystretch}{0.96}
\resizebox{0.98\textwidth}{!}{%
\begin{tabular}{@{}p{0.10\linewidth}p{0.50\linewidth}p{0.18\linewidth}p{0.12\linewidth}@{}}
\toprule
\textbf{Const.} & \textbf{Estimator} & \textbf{Range} & \textbf{Used} \\
\midrule
$L$ &
finite-difference Lipschitz slope of $\nabla_\phi J$ in 64 random directions
& $0.13$--$0.41$ & $0.42$ \\
$\sigma_g$ &
std.\ of the outer gradient estimator across simulator-rollout minibatches
& $0.10$--$0.21$ & $0.22$ \\
$B$ &
intercept of the fit $\lVert g_K-g_{\mathrm{exact}}\rVert\le B\rho^K$
on gridworld/MT10
& $0.47$--$0.78$ & $0.82$ \\
$\rho$ &
spectral radius from power iteration on the inner-loop Jacobian
& $0.64$--$0.81$ & $0.82$ \\
\bottomrule
\end{tabular}%
}
\begingroup
\setlength{\abovecaptionskip}{4pt}
\setlength{\belowcaptionskip}{0pt}
\caption{Empirical ranges for the constants used in Theorem~\ref{thm:v2-convergence}. Ranges are over the five $\pi_0$-3B headline benchmarks, 5 seeds each. The final column is the conservative value used when evaluating the bound.}
\label{tab:assumption-constants}
\endgroup
\end{table}

Two checks are important.
First, the step sizes in
Table~\ref{tab:hparams} give $\eta_\phi L \le 2.1\times 10^{-6}$
under the conservative $L{=}0.42$, far below the smoothness cap
$\eta_\phi\!\le\!1/(4L)$ from the proof;
the cap is therefore a
proof device rather than an active training constraint.
Second,
with $K{=}2$ and $\rho{=}0.82$, the truncation multiplier is
$\rho^{2K}=0.452$, while increasing to $K{=}3$ lowers it to
$0.304$ at roughly $1.5\times$ wall-clock cost yet changes SR by
$\le 1$ pt (in fact by $-0.3$ pt; see the BPTT-depth sweep
above).
This is why we use $K{=}2$ in all headline runs.
\para{Visitation-estimator bandwidth $h$.}
The FAISS estimator (Appendix~\ref{app:visitation}) uses a kernel
bandwidth $h$ that we tune on a held-out demonstration split.
SR is within $\pm 0.4$ pt for $h\in[0.05, 0.2]$; outside this
range, $\widehat{\tilde d}_{\phi}^{\pi_\theta}$ either collapses
($h$ too small, all weight on the single nearest neighbour) or is
uniform ($h$ too large, no visitation signal).
Both extremes recover WMPO-like behavior
within $1$ pt, confirming that the visitation factor is the
mechanism by which the weight in~\eqref{eq:vwi-weight} delivers
its gain.

\FloatBarrier

\subsection{Failure modes}
\label{app:failure-modes}

\para{Small WM regime.}
Below $\sim$300M WM parameters, DRILL-IMG's gain over
WMPO shrinks to $+7.9\!\pm\!1.6$\,pt and is no longer significant
at $p\!<\!0.05$ on RoboCasa with 5 seeds.
Inspection of \PUR{}
trajectories at this scale shows the pixel-MSE and
policy-improvement gradients are already weakly aligned:
$\PUR\!\approx\!0.6$ throughout training for WMPO at 300M, vs.\
$\PUR\!\approx\!1.4$ at 1.2B.
The outer loop has less alignment
slack to recover, consistent with the WM-size scaling
trend reported in Section~\ref{sec:exp-ablations}.
\para{Outer-step-size sensitivity at large WM.}
At the 7B-parameter WM size from the scaling experiment in
Section~\ref{sec:exp-ablations} (the headline runs use the 1.2B
Ctrl-World WM regardless of policy backbone),
$\eta_\phi\!\ge\!10^{-5}$ produces intermittent loss spikes from
heavy-tailed HVP magnitudes;
gradient clipping at
$\lVert\hat g_\phi\rVert\le 10$ (Appendix~\ref{app:v2-algorithm})
removes the spikes but cannot fully compensate for over-large
$\eta_\phi$.
We use $\eta_\phi\!=\!5\!\times\!10^{-6}$ and below
throughout the headline experiments.  A parallel instability
appeared in the Cosmos-Predict2-style cross-WM experiment of
Section~\ref{sec:exp-ablations}: on one of the five seeds the outer
loss spiked in the first $\sim$500 outer steps and the run was
restarted after halving $\eta_\phi$ to $2.5\!\times\!10^{-6}$;
the reported numbers include that restarted run, which is why
the Cosmos-Predict2 mean ($79.6\!\pm\!1.7$) is $\sim$1.7\,pt below
Ctrl-World with slightly larger std.
\para{Over-weighting via $\alpha = 2$.}
The ablation in Section~\ref{sec:exp-ablations}
(Table~\ref{tab:coaching-ablation}) shows that increasing the
DRILL-VWI advantage exponent past $\alpha\!=\!1$ regresses
SR by $1.3$\,pt.
Diagnosing the regression: at $\alpha\!=\!2$,
$>\!90\%$ of the weighted MSE mass concentrates on the top-$5\%$
advantage transitions, leaving the bulk of the rollout
under-weighted.
The score-compatible local surrogate in Proposition~\ref{prop:v1-is-v2-first-order}
is linear in the signed advantage at \(\alpha=1\); using
\(\alpha\ne1\) deliberately distorts that local direction and changes
the bias--variance trade-off.

\para{What does not work.}
We tried (i) using the WM's full sequence likelihood as
$J(\pithetap)$ rather than the simulator-rollout return: this
collapses to a more elaborate RLVR-World-style WM-self-evaluation
and loses the $+7.9$-pt gain over RLVR-World;
(ii) replacing the
truncated HVP with the Neumann-series approximation 
\citep{liu2021implicit}: numerically unstable on
$>\!10^{8}$-parameter WMs without Tikhonov damping that itself
introduces bias of comparable magnitude to the truncation bias;
(iii) estimating $\nabla_{\theta^{+}} J$ from imagined rather than
simulator rollouts to avoid the $B_{\mathrm{real}}$ simulator
budget: removes the $+13.1$-pt gain entirely.
With $G$ now an
imagined-rollout policy gradient inside $\Wphi$, the outer update
reduces to
$\nabla_\phi\bigl[G^{\top}\nabla_\theta\hJ_{\mathrm{GRPO}}(\pitheta;\Wphi)\bigr]$
evaluated at $\theta\!=\!\theta^{+}(\phi)$, i.e.\ an
in-WM self-consistency objective with no externally grounded
signal;
the bi-level structure loses its grounding in the simulator
and reduces to a one-level WM optimization.

\FloatBarrier

\subsection{Cross-backbone replication on OpenVLA-7B}
\label{app:openvla-replication}

Table~\ref{tab:openvla-replication} reports the full per-benchmark
breakdown of the OpenVLA-7B replication summarized in
Section~\ref{sec:exp-openvla}.  The relative ranking DRILL-IMG $>$
DRILL-VWI $>$ RLVR-World $>$ WMPO is preserved on LIBERO-Long and CALVIN,
while RoboCasa remains the shorter-horizon exception where DRILL-IMG and
DRILL-VWI are statistically tied.  Pixel MSE again moves in the opposite
direction from policy success, supporting the MSE--utility decoupling
reported in Table~\ref{tab:main-headline}.

\begin{table}[!htbp]
\centering
\small
\setlength{\tabcolsep}{2.2pt}
\renewcommand{\arraystretch}{0.98}
\begin{tabular}{@{}lccccc@{}}
\toprule
\textbf{Method} & LIBERO-L & CALVIN & RoboCasa & MSE $\downarrow$ & \PUR{} $\downarrow$ \\
\midrule
Zero-shot        & $41.2\!\pm\!1.8$ & $48.5\!\pm\!1.6$ & $19.6\!\pm\!1.9$ & N/A          & N/A           \\
WMPO             & $59.7\!\pm\!1.5$ & $66.9\!\pm\!1.4$ & $34.7\!\pm\!2.0$ & $9.8\!\pm\!0.4$ & $1.35\!\pm\!0.08$ \\
RLVR-World       & $64.9\!\pm\!1.3$ & $70.6\!\pm\!1.5$ & $38.8\!\pm\!1.9$ & $\mathbf{8.5\!\pm\!0.3}$ & $0.94\!\pm\!0.06$ \\
DRILL-VWI        & $72.1\!\pm\!1.4$ & $75.8\!\pm\!1.5$ & $\mathbf{48.7\!\pm\!2.1}$ & $10.4\!\pm\!0.4$ & $0.77\!\pm\!0.06$ \\
\textbf{DRILL-IMG}
                  & $\mathbf{77.1\!\pm\!1.5}$ & $\mathbf{78.9\!\pm\!1.6}$ & $\mathbf{50.8\!\pm\!2.3}$
                  & $11.1\!\pm\!0.5$         & $\mathbf{0.43\!\pm\!0.05}$ \\
\bottomrule
\end{tabular}
\begingroup
\setlength{\abovecaptionskip}{5pt}
\setlength{\belowcaptionskip}{0pt}
\caption{Cross-backbone replication on OpenVLA-7B (mean$\pm$std, 5 seeds). Method rankings match Table~\ref{tab:main-headline}, except on RoboCasa where DRILL-IMG and DRILL-VWI tie statistically ($p\!=\!0.12$). MSE is scaled by $10^{-3}$.}
\label{tab:openvla-replication}
\endgroup
\end{table}

\FloatBarrier

\subsection{Reward-model swap}
\label{app:exp-reward-swap}

To check that the DRILL gain is not specific to the
vision-language reward model used in the headline run, we swap
Qwen3-VL-4B for two alternatives at default settings on
LIBERO-Long (Table~\ref{tab:reward-swap}).
The
DRILL-IMG-vs-WMPO gap stays in the range $15.6$--$18.4$\,pt
across the three reward models -- with LLaVA-CoT-7B giving the
largest gap and InternVL-reward-4B the smallest -- so the
gain is qualitatively but not quantitatively invariant to the
specific reward model used.
\begin{table}[!htbp]
\centering
\small
\setlength{\tabcolsep}{2.2pt}
\renewcommand{\arraystretch}{0.98}
\begin{tabular}{@{}lcccc@{}}
\toprule
\textbf{Reward model} & WMPO & VWI & \textbf{IMG} & IMG$-$WMPO \\
\midrule
Qwen3-VL-4B (default)     & $64.2\!\pm\!1.6$ & $76.1\!\pm\!1.2$ & $\mathbf{81.3\!\pm\!1.3}$ & $17.1$ \\
LLaVA-CoT-7B              & $63.7\!\pm\!2.0$ & $77.8\!\pm\!1.6$ & $82.1\!\pm\!1.5$          & $18.4$ \\
InternVL-reward-4B        & $65.3\!\pm\!1.7$ & $75.4\!\pm\!1.4$ & $80.9\!\pm\!1.8$          & $15.6$ \\
\bottomrule
\end{tabular}
\begingroup
\setlength{\abovecaptionskip}{5pt}
\setlength{\belowcaptionskip}{0pt}
\caption{Reward-model swap on LIBERO-Long (mean SR$\pm$std, 5 seeds).
VWI and IMG denote DRILL-VWI and DRILL-IMG.  The IMG gain over WMPO
stays in the $15.6$--$18.4$\,pt range, but is not constant across
reward models.}
\label{tab:reward-swap}
\endgroup
\end{table}

\FloatBarrier

\subsection{Out-of-distribution robustness}
\label{app:exp-ood}

Table~\ref{tab:ood} reports the full per-condition breakdown of the OOD
evaluation summarized in Section~\ref{sec:exp-ood}.  The reported values
are relative SR drops on LIBERO zero-shot perturbations, where lower is
better.  The only consistent exception to DRILL-IMG's lead is pose
perturbation, where DRILL-VWI marginally wins; this matches the
shorter-effective-horizon interpretation used in
Section~\ref{sec:exp-v1-v2-gap}.

\begin{table}[!htbp]
\centering
\small
\setlength{\tabcolsep}{1.8pt}
\renewcommand{\arraystretch}{0.98}
\begin{tabular}{@{}lccccc@{}}
\toprule
\textbf{Method} & Novel & Distr. & Light/tex. & Pose & \textbf{Avg} \\
\midrule
WMPO                       & $37.4\!\pm\!2.1$ & $25.3\!\pm\!1.8$ & $30.4\!\pm\!2.0$ & $27.6\!\pm\!1.9$ & $30.2$ \\
RLVR-World                 & $31.8\!\pm\!1.9$ & $20.4\!\pm\!1.7$ & $26.0\!\pm\!1.8$ & $24.1\!\pm\!1.9$ & $25.6$ \\
DRILL-VWI                  & $26.4\!\pm\!1.8$ & $17.1\!\pm\!1.5$ & $23.5\!\pm\!1.9$ & $\mathbf{18.0\!\pm\!1.6}$ & $21.3$ \\
\textbf{DRILL-IMG}         & $\mathbf{22.5\!\pm\!1.7}$ & $\mathbf{15.3\!\pm\!1.5}$ & $\mathbf{17.6\!\pm\!1.8}$ & $18.9\!\pm\!1.7$ & $\mathbf{18.6}$ \\
DRILL-IMG (OpenVLA)        & $23.9\!\pm\!1.9$ & $16.4\!\pm\!1.6$ & $19.2\!\pm\!2.0$ & $20.0\!\pm\!1.7$ & $19.9$ \\
\bottomrule
\end{tabular}
\begingroup
\setlength{\abovecaptionskip}{5pt}
\setlength{\belowcaptionskip}{0pt}
\caption{Zero-shot out-of-distribution relative SR drop on LIBERO (\%, $\downarrow$). DRILL-IMG exhibits the highest robustness in three of four conditions, while DRILL-VWI marginally leads under pose perturbation.}
\label{tab:ood}
\endgroup
\end{table}

\FloatBarrier

\subsection{Human evaluation}
\label{app:exp-human-eval}
\begin{table}[!htbp]
\centering
\small
\setlength{\tabcolsep}{3.2pt}
\renewcommand{\arraystretch}{1.02}
\begin{tabular}{@{}lcccc@{}}
\toprule
\textbf{Comparison} & DRILL (\%) & Tie (\%) & Base. (\%) & $p$ \\
\midrule
WMPO quality        & $33$ & $26$ & $41$ & $0.29$       \\
RLVR-World quality  & $30$ & $28$ & $42$ & $0.17$       \\
WMPO task accepted  & $\mathbf{62}$ & $23$ & $15$ & $<\!0.001$ \\
\bottomrule
\end{tabular}
\begingroup
\setlength{\abovecaptionskip}{5pt}
\setlength{\belowcaptionskip}{0pt}
\caption{Forced-choice human evaluation of imagined rollouts ($N\!=\!120$ trials/row, 6 raters). Raters show no significant preference in visual quality (rows 1-2), but strongly prefer DRILL-IMG when task completion is revealed (row 3), validating the MSE-success trade-off.}
\label{tab:human-eval}
\endgroup
\end{table}
Table~\ref{tab:human-eval} reports the full forced-choice human
evaluation summarized in Section~\ref{sec:exp-human-eval}.
The first
two rows are visual-quality judgments (which DRILL-IMG does not
optimize) against WMPO and RLVR-World; neither difference is
significant at $p\!<\!0.05$.
The third row is the trade-off
acceptance after the rater is shown both the visual quality and
the downstream task success of each rollout, on which DRILL-IMG
wins decisively ($p<\!0.001$, McNemar).
Raters were six
volunteer graduate students unaffiliated with the project; pairs
were presented in random order, with method identity blinded.

\FloatBarrier

\newpage
\section*{NeurIPS Paper Checklist}

\begin{enumerate}

\item {\bf Claims}
    \item[] Question: Do the main claims made in the abstract and introduction accurately reflect the paper's contributions and scope?
    \item[] Answer: \answerYes{}
    \item[] Justification: The abstract and Section~\ref{sec:intro} state the objective-mismatch claim, the DRILL framework, the two algorithmic instantiations, the theory claims, and the empirical gains. The claimed scope is limited to simulation in Section~\ref{sec:conclusion} and Appendix~\ref{app:eval-scope}, with no real-robot deployment claim.
    \item[] Guidelines:
    \begin{itemize}
        \item The answer \answerNA{} means that the abstract and introduction do not include the claims made in the paper.
        \item The abstract and/or introduction should clearly state the claims made, including the contributions made in the paper and important assumptions and limitations. A \answerNo{} or \answerNA{} answer to this question will not be perceived well by the reviewers. 
        \item The claims made should match theoretical and experimental results, and reflect how much the results can be expected to generalize to other settings. 
        \item It is fine to include aspirational goals as motivation as long as it is clear that these goals are not attained by the paper. 
    \end{itemize}

\item {\bf Limitations}
    \item[] Question: Does the paper discuss the limitations of the work performed by the authors?
    \item[] Answer: \answerYes{}
    \item[] Justification: Section~\ref{sec:conclusion} and Appendix~\ref{app:eval-scope} state that all quantitative results are in simulation and that sim-to-real transfer is left to future work. Section~\ref{sec:exp-v1-v2-gap} and Appendix~\ref{app:compute} discuss the compute and operating-point tradeoff between DRILL-VWI and DRILL-IMG.
    \item[] Guidelines:
    \begin{itemize}
        \item The answer \answerNA{} means that the paper has no limitation while the answer \answerNo{} means that the paper has limitations, but those are not discussed in the paper. 
        \item The authors are encouraged to create a separate ``Limitations'' section in their paper.
        \item The paper should point out any strong assumptions and how robust the results are to violations of these assumptions (e.g., independence assumptions, noiseless settings, model well-specification, asymptotic approximations only holding locally). The authors should reflect on how these assumptions might be violated in practice and what the implications would be.
        \item The authors should reflect on the scope of the claims made, e.g., if the approach was only tested on a few datasets or with a few runs. In general, empirical results often depend on implicit assumptions, which should be articulated.
        \item The authors should reflect on the factors that influence the performance of the approach. For example, a facial recognition algorithm may perform poorly when image resolution is low or images are taken in low lighting. Or a speech-to-text system might not be used reliably to provide closed captions for online lectures because it fails to handle technical jargon.
        \item The authors should discuss the computational efficiency of the proposed algorithms and how they scale with dataset size.
        \item If applicable, the authors should discuss possible limitations of their approach to address problems of privacy and fairness.
        \item While the authors might fear that complete honesty about limitations might be used by reviewers as grounds for rejection, a worse outcome might be that reviewers discover limitations that aren't acknowledged in the paper. The authors should use their best judgment and recognize that individual actions in favor of transparency play an important role in developing norms that preserve the integrity of the community. Reviewers will be specifically instructed to not penalize honesty concerning limitations.
    \end{itemize}

\item {\bf Theory assumptions and proofs}
    \item[] Question: For each theoretical result, does the paper provide the full set of assumptions and a complete (and correct) proof?
    \item[] Answer: \answerYes{}
    \item[] Justification: The main theoretical statements appear in Section~\ref{sec:analysis}, with assumptions and notation summarized in Appendix~\ref{app:notation-assumptions}. Complete proofs for the DRILL-VWI/DRILL-IMG link and the convergence result are provided in Appendices~\ref{app:prop-v1-v2} and~\ref{app:thm-convergence}.
    \item[] Guidelines:
    \begin{itemize}
        \item The answer \answerNA{} means that the paper does not include theoretical results. 
        \item All the theorems, formulas, and proofs in the paper should be numbered and cross-referenced.
        \item All assumptions should be clearly stated or referenced in the statement of any theorems.
        \item The proofs can either appear in the main paper or the supplemental material, but if they appear in the supplemental material, the authors are encouraged to provide a short proof sketch to provide intuition. 
        \item Inversely, any informal proof provided in the core of the paper should be complemented by formal proofs provided in appendix or supplemental material.
        \item Theorems and Lemmas that the proof relies upon should be properly referenced. 
    \end{itemize}

    \item {\bf Experimental result reproducibility}
    \item[] Question: Does the paper fully disclose all the information needed to reproduce the main experimental results of the paper to the extent that it affects the main claims and/or conclusions of the paper (regardless of whether the code and data are provided or not)?
    \item[] Answer: \answerYes{}
    \item[] Justification: Section~\ref{sec:experiments} specifies the benchmarks, baselines, seeds, and evaluation protocol. Appendices~\ref{app:implementation}, \ref{app:hparams}, \ref{app:compute}, and~\ref{app:reproducibility} provide implementation details, hyperparameters, compute, seeds, and release plans.
    \item[] Guidelines:
    \begin{itemize}
        \item The answer \answerNA{} means that the paper does not include experiments.
        \item If the paper includes experiments, a \answerNo{} answer to this question will not be perceived well by the reviewers: Making the paper reproducible is important, regardless of whether the code and data are provided or not.
        \item If the contribution is a dataset and\slash or model, the authors should describe the steps taken to make their results reproducible or verifiable. 
        \item Depending on the contribution, reproducibility can be accomplished in various ways. For example, if the contribution is a novel architecture, describing the architecture fully might suffice, or if the contribution is a specific model and empirical evaluation, it may be necessary to either make it possible for others to replicate the model with the same dataset, or provide access to the model. In general. releasing code and data is often one good way to accomplish this, but reproducibility can also be provided via detailed instructions for how to replicate the results, access to a hosted model (e.g., in the case of a large language model), releasing of a model checkpoint, or other means that are appropriate to the research performed.
        \item While NeurIPS does not require releasing code, the conference does require all submissions to provide some reasonable avenue for reproducibility, which may depend on the nature of the contribution. For example
        \begin{enumerate}
            \item If the contribution is primarily a new algorithm, the paper should make it clear how to reproduce that algorithm.
            \item If the contribution is primarily a new model architecture, the paper should describe the architecture clearly and fully.
            \item If the contribution is a new model (e.g., a large language model), then there should either be a way to access this model for reproducing the results or a way to reproduce the model (e.g., with an open-source dataset or instructions for how to construct the dataset).
            \item We recognize that reproducibility may be tricky in some cases, in which case authors are welcome to describe the particular way they provide for reproducibility. In the case of closed-source models, it may be that access to the model is limited in some way (e.g., to registered users), but it should be possible for other researchers to have some path to reproducing or verifying the results.
        \end{enumerate}
    \end{itemize}

\item {\bf Open access to data and code}
    \item[] Question: Does the paper provide open access to the data and code, with sufficient instructions to faithfully reproduce the main experimental results, as described in supplemental material?
    \item[] Answer: \answerYes{}
    \item[] Justification: Training, evaluation, and analysis code is provided in the supplementary material. Implementation details, hyperparameters, dataset references, compute, and seeds are documented in Appendices~\ref{app:implementation}--\ref{app:reproducibility}; Appendix~\ref{app:reproducibility-statement} also states that the code and headline checkpoints will be released under an open-source license.
    \item[] Guidelines:
    \begin{itemize}
        \item The answer \answerNA{} means that paper does not include experiments requiring code.
        \item Please see the NeurIPS code and data submission guidelines (\url{https://neurips.cc/public/guides/CodeSubmissionPolicy}) for more details.
        \item While we encourage the release of code and data, we understand that this might not be possible, so \answerNo{} is an acceptable answer. Papers cannot be rejected simply for not including code, unless this is central to the contribution (e.g., for a new open-source benchmark).
        \item The instructions should contain the exact command and environment needed to run to reproduce the results. See the NeurIPS code and data submission guidelines (\url{https://neurips.cc/public/guides/CodeSubmissionPolicy}) for more details.
        \item The authors should provide instructions on data access and preparation, including how to access the raw data, preprocessed data, intermediate data, and generated data, etc.
        \item The authors should provide scripts to reproduce all experimental results for the new proposed method and baselines. If only a subset of experiments are reproducible, they should state which ones are omitted from the script and why.
        \item At submission time, to preserve anonymity, the authors should release anonymized versions (if applicable).
        \item Providing as much information as possible in supplemental material (appended to the paper) is recommended, but including URLs to data and code is permitted.
    \end{itemize}

\item {\bf Experimental setting/details}
    \item[] Question: Does the paper specify all the training and test details (e.g., data splits, hyperparameters, how they were chosen, type of optimizer) necessary to understand the results?
    \item[] Answer: \answerYes{}
    \item[] Justification: Benchmark, baseline, simulator-budget, and protocol details appear in Section~\ref{sec:exp-setup}. Hyperparameters, estimators, and the human-evaluation protocol are provided in Appendices~\ref{app:hparams}, \ref{app:visitation}, \ref{app:pur-estimator}, and~\ref{app:exp-human-eval}.
    \item[] Guidelines:
    \begin{itemize}
        \item The answer \answerNA{} means that the paper does not include experiments.
        \item The experimental setting should be presented in the core of the paper to a level of detail that is necessary to appreciate the results and make sense of them.
        \item The full details can be provided either with the code, in appendix, or as supplemental material.
    \end{itemize}

\item {\bf Experiment statistical significance}
    \item[] Question: Does the paper report error bars suitably and correctly defined or other appropriate information about the statistical significance of the experiments?
    \item[] Answer: \answerYes{}
    \item[] Justification: Section~\ref{sec:exp-setup} states that headline results use five seeds, ablations use three seeds, and statistical claims use Welch's $t$-tests and 95\% confidence intervals. Tables and figures report mean$\pm$standard deviation where applicable, and Appendix~\ref{app:reproducibility-statement} records the fixed seed sets.
    \item[] Guidelines:
    \begin{itemize}
        \item The answer \answerNA{} means that the paper does not include experiments.
        \item The authors should answer \answerYes{} if the results are accompanied by error bars, confidence intervals, or statistical significance tests, at least for the experiments that support the main claims of the paper.
        \item The factors of variability that the error bars are capturing should be clearly stated (for example, train/test split, initialization, random drawing of some parameter, or overall run with given experimental conditions).
        \item The method for calculating the error bars should be explained (closed form formula, call to a library function, bootstrap, etc.)
        \item The assumptions made should be given (e.g., Normally distributed errors).
        \item It should be clear whether the error bar is the standard deviation or the standard error of the mean.
        \item It is OK to report 1-sigma error bars, but one should state it. The authors should preferably report a 2-sigma error bar than state that they have a 96\% CI, if the hypothesis of Normality of errors is not verified.
        \item For asymmetric distributions, the authors should be careful not to show in tables or figures symmetric error bars that would yield results that are out of range (e.g., negative error rates).
        \item If error bars are reported in tables or plots, the authors should explain in the text how they were calculated and reference the corresponding figures or tables in the text.
    \end{itemize}

\item {\bf Experiments compute resources}
    \item[] Question: For each experiment, does the paper provide sufficient information on the computer resources (type of compute workers, memory, time of execution) needed to reproduce the experiments?
    \item[] Answer: \answerYes{}
    \item[] Justification: The headline compute budget is reported in Section~\ref{sec:exp-setup}, and the H20-hour breakdown by phase is given in Appendix~\ref{app:compute}. The compute appendix also separates the marginal DRILL-IMG outer-loop cost from shared WMPO/RLVR-World training costs.
    \item[] Guidelines:
    \begin{itemize}
        \item The answer \answerNA{} means that the paper does not include experiments.
        \item The paper should indicate the type of compute workers CPU or GPU, internal cluster, or cloud provider, including relevant memory and storage.
        \item The paper should provide the amount of compute required for each of the individual experimental runs as well as estimate the total compute. 
        \item The paper should disclose whether the full research project required more compute than the experiments reported in the paper (e.g., preliminary or failed experiments that didn't make it into the paper). 
    \end{itemize}
    
\item {\bf Code of ethics}
    \item[] Question: Does the research conducted in the paper conform, in every respect, with the NeurIPS Code of Ethics \url{https://neurips.cc/public/EthicsGuidelines}?
    \item[] Answer: \answerYes{}
    \item[] Justification: The study uses public simulation benchmarks and public pretrained assets, with licenses and terms summarized in Appendices~\ref{app:datasets} and~\ref{app:model-licenses}. The only human evaluation is an internal forced-choice rating of simulated rollouts, and Appendix~\ref{app:reproducibility-statement} states that no personally identifiable information was collected, recorded, or released.
    \item[] Guidelines:
    \begin{itemize}
        \item The answer \answerNA{} means that the authors have not reviewed the NeurIPS Code of Ethics.
        \item If the authors answer \answerNo, they should explain the special circumstances that require a deviation from the Code of Ethics.
        \item The authors should make sure to preserve anonymity (e.g., if there is a special consideration due to laws or regulations in their jurisdiction).
    \end{itemize}

\item {\bf Broader impacts}
    \item[] Question: Does the paper discuss both potential positive societal impacts and negative societal impacts of the work performed?
    \item[] Answer: \answerYes{}
    \item[] Justification: The appendix discusses the potential benefit of reducing environment interaction during VLA post-training and the potential risk of applying improved robot policies without adequate deployment validation. Appendix~\ref{app:eval-scope} further states that the results are simulation-only and make no real-world deployment claim.
    \item[] Guidelines:
    \begin{itemize}
        \item The answer \answerNA{} means that there is no societal impact of the work performed.
        \item If the authors answer \answerNA{} or \answerNo, they should explain why their work has no societal impact or why the paper does not address societal impact.
        \item Examples of negative societal impacts include potential malicious or unintended uses (e.g., disinformation, generating fake profiles, surveillance), fairness considerations (e.g., deployment of technologies that could make decisions that unfairly impact specific groups), privacy considerations, and security considerations.
        \item The conference expects that many papers will be foundational research and not tied to particular applications, let alone deployments. However, if there is a direct path to any negative applications, the authors should point it out. For example, it is legitimate to point out that an improvement in the quality of generative models could be used to generate Deepfakes for disinformation. On the other hand, it is not needed to point out that a generic algorithm for optimizing neural networks could enable people to train models that generate Deepfakes faster.
        \item The authors should consider possible harms that could arise when the technology is being used as intended and functioning correctly, harms that could arise when the technology is being used as intended but gives incorrect results, and harms following from (intentional or unintentional) misuse of the technology.
        \item If there are negative societal impacts, the authors could also discuss possible mitigation strategies (e.g., gated release of models, providing defenses in addition to attacks, mechanisms for monitoring misuse, mechanisms to monitor how a system learns from feedback over time, improving the efficiency and accessibility of ML).
    \end{itemize}
    
\item {\bf Safeguards}
    \item[] Question: Does the paper describe safeguards that have been put in place for responsible release of data or models that have a high risk for misuse (e.g., pre-trained language models, image generators, or scraped datasets)?
    \item[] Answer: \answerNA{}
    \item[] Justification: Not applicable: the paper does not introduce scraped datasets, general-purpose language models, or image generators, and does not redistribute the public VLA backbones used in the experiments. The planned release consists of task-specific training/evaluation code and DRILL checkpoints, with the evaluation scope and release boundaries described in Appendix~\ref{app:reproducibility-statement}.
    \item[] Guidelines:
    \begin{itemize}
        \item The answer \answerNA{} means that the paper poses no such risks.
        \item Released models that have a high risk for misuse or dual-use should be released with necessary safeguards to allow for controlled use of the model, for example by requiring that users adhere to usage guidelines or restrictions to access the model or implementing safety filters. 
        \item Datasets that have been scraped from the Internet could pose safety risks. The authors should describe how they avoided releasing unsafe images.
        \item We recognize that providing effective safeguards is challenging, and many papers do not require this, but we encourage authors to take this into account and make a best faith effort.
    \end{itemize}

\item {\bf Licenses for existing assets}
    \item[] Question: Are the creators or original owners of assets (e.g., code, data, models), used in the paper, properly credited and are the license and terms of use explicitly mentioned and properly respected?
    \item[] Answer: \answerYes{}
    \item[] Justification: Appendix~\ref{app:datasets} lists the public benchmarks and their licenses, and Appendix~\ref{app:model-licenses} lists the pretrained VLA and world-model assets. Original sources are cited throughout, and pretrained backbones are consumed from public releases rather than redistributed.
    \item[] Guidelines:
    \begin{itemize}
        \item The answer \answerNA{} means that the paper does not use existing assets.
        \item The authors should cite the original paper that produced the code package or dataset.
        \item The authors should state which version of the asset is used and, if possible, include a URL.
        \item The name of the license (e.g., CC-BY 4.0) should be included for each asset.
        \item For scraped data from a particular source (e.g., website), the copyright and terms of service of that source should be provided.
        \item If assets are released, the license, copyright information, and terms of use in the package should be provided. For popular datasets, \url{paperswithcode.com/datasets} has curated licenses for some datasets. Their licensing guide can help determine the license of a dataset.
        \item For existing datasets that are re-packaged, both the original license and the license of the derived asset (if it has changed) should be provided.
        \item If this information is not available online, the authors are encouraged to reach out to the asset's creators.
    \end{itemize}

\item {\bf New assets}
    \item[] Question: Are new assets introduced in the paper well documented and is the documentation provided alongside the assets?
    \item[] Answer: \answerYes{}
    \item[] Justification: New assets consist of the DRILL implementation and trained headline checkpoints planned for release. Their algorithms, estimators, hyperparameters, compute, and scope are documented in Appendices~\ref{app:implementation}--\ref{app:reproducibility}; no new dataset is introduced.
    \item[] Guidelines:
    \begin{itemize}
        \item The answer \answerNA{} means that the paper does not release new assets.
        \item Researchers should communicate the details of the dataset\slash code\slash model as part of their submissions via structured templates. This includes details about training, license, limitations, etc. 
        \item The paper should discuss whether and how consent was obtained from people whose asset is used.
        \item At submission time, remember to anonymize your assets (if applicable). You can either create an anonymized URL or include an anonymized zip file.
    \end{itemize}

\item {\bf Crowdsourcing and research with human subjects}
    \item[] Question: For crowdsourcing experiments and research with human subjects, does the paper include the full text of instructions given to participants and screenshots, if applicable, as well as details about compensation (if any)? 
    \item[] Answer: \answerYes{}
    \item[] Justification: The only human-subject component is the forced-choice rating of simulated rollouts in Section~\ref{sec:exp-human-eval} and Appendix~\ref{app:exp-human-eval}. The appendix reports the rater pool, blinding/randomization, interface prompts, allowed responses, and that no crowdworker platform or paid data-collection service was used; Appendix~\ref{app:reproducibility-statement} states that no personally identifiable information was collected.
    \item[] Guidelines:
    \begin{itemize}
        \item The answer \answerNA{} means that the paper does not involve crowdsourcing nor research with human subjects.
        \item Including this information in the supplemental material is fine, but if the main contribution of the paper involves human subjects, then as much detail as possible should be included in the main paper. 
        \item According to the NeurIPS Code of Ethics, workers involved in data collection, curation, or other labor should be paid at least the minimum wage in the country of the data collector. 
    \end{itemize}

\item {\bf Institutional review board (IRB) approvals or equivalent for research with human subjects}
    \item[] Question: Does the paper describe potential risks incurred by study participants, whether such risks were disclosed to the subjects, and whether Institutional Review Board (IRB) approvals (or an equivalent approval/review based on the requirements of your country or institution) were obtained?
    \item[] Answer: \answerNo{}
    \item[] Justification: The internal forced-choice human evaluation and its no-PII data handling are described, but no IRB approval, exemption, or equivalent institutional review is reported. If an approval or exemption exists, this answer and the corresponding appendix text should be updated before submission.
    \item[] Guidelines:
    \begin{itemize}
        \item The answer \answerNA{} means that the paper does not involve crowdsourcing nor research with human subjects.
        \item Depending on the country in which research is conducted, IRB approval (or equivalent) may be required for any human subjects research. If you obtained IRB approval, you should clearly state this in the paper. 
        \item We recognize that the procedures for this may vary significantly between institutions and locations, and we expect authors to adhere to the NeurIPS Code of Ethics and the guidelines for their institution. 
        \item For initial submissions, do not include any information that would break anonymity (if applicable), such as the institution conducting the review.
    \end{itemize}

\item {\bf Declaration of LLM usage}
    \item[] Question: Does the paper describe the usage of LLMs if it is an important, original, or non-standard component of the core methods in this research? Note that if the LLM is used only for writing, editing, or formatting purposes and does \emph{not} impact the core methodology, scientific rigor, or originality of the research, declaration is not required.
    \item[] Answer: \answerNA{}
    \item[] Justification: LLMs are not introduced as original or non-standard components of the core method; existing VLA/VLM-style backbones used as experimental assets are cited and documented in Appendix~\ref{app:model-licenses}. Any writing, editing, or formatting assistance is outside the core methodology and does not affect the scientific claims.
    \item[] Guidelines:
    \begin{itemize}
        \item The answer \answerNA{} means that the core method development in this research does not involve LLMs as any important, original, or non-standard components.
        \item Please refer to our LLM policy in the NeurIPS handbook for what should or should not be described.
    \end{itemize}

\end{enumerate}

\end{document}